\newcommand{\Rmnum}[1]{\expandafter\@slowromancap\romannumeral #1@}
\newtheorem{theorem}{Theorem}
\newtheorem{lemma}{Lemma}
\begin{document}
%
\title{Discrete Optimal Graph Clustering}
%
%
%

\author{Yudong Han,
        Lei~Zhu,
        Zhiyong Cheng,
        Jingjing Li,
        Xiaobai Liu
\thanks{This work was
supported National Natural Science Foundation of China under Grant 61802236 and 61806039, in part by the National Postdoctoral Program for Innovative Talents under Grant BX201700045, in part by the China Postdoctoral Science Foundation under Grant 2017M623006.}
\thanks{Y. Han and L. Zhu are with the School of Information Science and Engineering, Shandong Normal University, Jinan 250358, China. (Corresponding author:
Lei Zhu, E-mail: leizhu0608@gmail.com)}
\thanks{Z. Cheng is with the Qilu University of Technology (Shandong Academy of Sciences), Shandong Computer Science Center (National Supercomputer Center in Jinan), Shandong Artificial Intelligence Institute, China.}
\thanks{J. Li is with the School of Computer Science and Engineering, University of Electronic Science and Technology of China, Chengdu 611731, China.}
\thanks{X. Liu is with the Department of Computer Science, San Diego State University, California, 92182, USA.}
}

\markboth{IEEE Transactions on Cybernetics}%
{Shell \MakeLowercase{\textit{et al.}}: Bare Demo of IEEEtran.cls for IEEE Journals}

\maketitle

\begin{abstract}
Graph based clustering is one of the major clustering methods. Most of it work in three separate steps: similarity graph construction, clustering label relaxing and label discretization with k-means. Such common practice has three disadvantages: 1) the predefined similarity graph is often fixed and may not be optimal for the subsequent clustering. 2) the relaxing process of cluster labels may cause significant information loss. 3) label discretization may deviate from the real clustering result since k-means is sensitive to the initialization of cluster centroids. To tackle these problems, in this paper, we propose an effective discrete optimal graph clustering (DOGC) framework. A structured similarity graph that is theoretically optimal for clustering performance is adaptively learned with a guidance of reasonable rank constraint. Besides, to avoid the information loss, we explicitly enforce a discrete transformation on the intermediate continuous label, which derives a tractable optimization problem with discrete solution. Further, to compensate the unreliability of the learned labels and enhance the clustering accuracy, we design an adaptive robust module that learns prediction function for the unseen data based on the learned discrete cluster labels. Finally, an iterative optimization strategy guaranteed with convergence is developed to directly solve the clustering results. Extensive experiments conducted on both real and synthetic datasets demonstrate the superiority of our proposed methods compared with several state-of-the-art clustering approaches.
\end{abstract}

\begin{IEEEkeywords}
Optimal graph, Discrete label learning, Out-of-sample, Information loss
\end{IEEEkeywords}

\section{Introduction}
\IEEEPARstart {C}{lustering} is one of the fundamental techniques in machine learning. It has been widely applied to various research fields, such as gene expression\cite{DBLP:journals/tkde/JiangTZ04}, face analysis\cite{DBLP:journals/pami/ElhamifarV13}, image annotation\cite{DBLP:journals/pami/WangZLM08}, and recommendation\cite{clusterrecommend1,clusterrecommend2}. In the past decades, many clustering approaches have been developed, such as k-means\cite{DBLP:journals/J.B.MacQueen}, spectral clustering\cite{DBLP:conf/nips/NgJW01, DBLP:journals/tcyb/CaiC15},\cite{yang2017discrete},\cite{hu2017robust},\cite{yang2015multitask} spectral embedded clustering\cite{DBLP:journals/tnn/NieZTXZ11} and normalized cut\cite{DBLP:conf/cvpr/ShiM97}.

K-means identifies cluster centroids that minimize the within cluster data distances. Due to the simpleness and efficiency, it has been extensively applied as one of the most basic clustering methods. Nevertheless, k-means suffers from the problem of the curse of dimensionality and its performance highly depends on the initialized cluster centroids. As an alternative promising clustering method, spectral clustering and its extensions learn a low-dimensional embedding of the data samples by modelling their affinity correlations with graph \cite{CUI2018,8171745,8425080,LU2019217,8352779,li2018transfer,li2018heterogeneous,li2017low}. These graph based clustering methods,\cite{DBLP:journals/tcyb/LiCD17},\cite{shen2018tpami},\cite{shen2017tmm},\cite{TIP2016binary},
\cite{DBLP:journals/tip/XuSYSL17} generally work in three separate steps: similarity graph construction, clustering label relaxing and label discretization with k-means. Their performance is largely determined by the quality of the pre-constructed similarity graph, where the similarity relations of samples are simply calculated with a fixed distance measurement, which cannot fully capture the inherent local structure of data samples. This unstructured graph may lead to sub-optimal clustering results. Besides, they rely on k-means to generate the final discrete cluster labels, which may result in unstable clustering solution as k-means.

Recently, clustering with adaptive neighbors (CAN)\cite{DBLP:conf/kdd/NieWH14} is proposed to automatically learn a structured similarity graph by considering the clustering performance. Projective clustering with adaptive neighbors (PCAN)\cite{DBLP:conf/kdd/NieWH14} improves its performance further by simultaneously performing subspace discovery, similarity graph learning and clustering. With structured graph learning, CAN and PCAN enhance the performance of graph based clustering further. However, they simply drop the discrete constraint of cluster labels to solve an approximate continuous solution. This strategy may lead to significant information loss and thus reduce the quality of the constructed graph structure. Moreover,  to generate the final discrete cluster labels, graph cut should be exploited in them on the learned similarity graph. To obtain the cluster labels of out-of-sample data, the whole algorithm should be run again. This requirement will bring consideration computation cost in real practice.

In this paper, we propose an effective discrete optimal graph clustering (DOGC) method. We develop a unified learning framework, where the optimal graph structure is adaptively constructed, the discrete cluster labels are directly learned, and the out-of-sample extension can be well supported. In DOGC, a structured graph is adaptively learned from the original data with a guidance of reasonable rank constraint for pursuing the optimal clustering structure. Besides, to avoid the information loss in most graph based clustering methods, a rotation matrix is learned in DOGC to rotate the intermediate continuous labels and directly obtain the discrete ones. Based on the discrete cluster labels, we further integrate a robust prediction module into the DOGC to compensate the unreliability of cluster labels and learn a prediction function for out-of-sample data clustering. To solve the formulated discrete clustering problem, an alternate optimization strategy guaranteed with convergence is developed to iteratively calculate the clustering results. The key advantages of our methods are highlighted as follows:
\begin{table}
\caption{Summary of the main notations.}
\vspace{-3mm}
\label{table:2}
\centering
\begin{tabular}{|p{11mm}<{\centering}|p{67mm}|}
\hline
{Symbols} & {Explanations} \\
\hline
\hline
$d$ & The feature dimension of data \\
\hline
$c$ & The number of clusters\\
\hline
$k$ & The nearest neighbor number of data points\\
\hline
$n$ & Data size\\
\hline
$\alpha,\beta,\gamma$ & Penalty parameters\\
\hline
$\textbf{X}$ & Data matrix \\
\hline
$ \ \textbf{I}_{z}$ & An identity matrix of size $z\times z$ \\
\hline
$\textbf{A}$ & The fixed similarity matrix calculated directly by Gaussian kernel function \\
\hline
$\textbf{S}$ & The learned similarity matrix corresponding to $\textbf{X}$\\
\hline
$\textbf{W}$ & The projection matrix for dimension reduction\\
\hline
$\textbf{P}$ & The mapping matrix from original data to discrete cluster labels\\
\hline
$\textbf{F}$ & The continuous cluster labels \\
\hline
$\textbf{Y}$ & The discrete cluster labels \\
\hline
$\textbf{Q}$ & The rotation matrix \\
\hline
$ \ \ \textbf{D}_{\mathrm{S}}$ & Degree matrix of similarity matrix $\textbf{S}$ \\
\hline
$ \ \ \textbf{L}_{\mathrm{S}}$ & Laplacian matrix of similarity matrix $\textbf{S}$\\
\hline
\end{tabular}
\end{table}
\begin{enumerate}[1.]
\item Rather than exploiting a fixed similarity matrix, a similarity graph is adaptively learned from the raw data by considering the clustering performance. With reasonable rank constraint, the dynamically constructed graph is forced to be well structured and theoretically optimal for clustering.
\item Our model learns a proper rotation matrix to directly generate discrete cluster labels without any relaxing information loss as many existing graph based clustering methods.
\item With the learned discrete cluster labels, our model can accommodate the out-of-sample data well by designing a robust prediction module. The discrete cluster labels of database samples can be directly obtained, and simultaneously the clustering capability for new data can be well supported.
\end{enumerate}

The rest of this paper is organized as follows. Section \Rmnum{2} revisits several representative graph based clustering methods. Section \Rmnum{3} describes the details of the proposed methods. Section \Rmnum{4} introduces the experimental setting. The experimental results are presented in Section \Rmnum{5}. Section \Rmnum{6} concludes the paper.

\section{Graph Clustering Revisited}
\subsection{Notations}
For the data matrix $\textbf{X}=\big\{ \textbf{x}_{1},\ldots,\textbf{x}_{n} \big\} \in \mathbb{R}^{d\times n}$, the $(i,j)_{th}$ entry of $\textbf{X}$ and the $i_{th}$ sample of $\textbf{X}$ are denoted by $x_{ij}$ and $\textbf{x}_{i}$ respectively. The trace of $\textbf{X}$ is denoted by $Tr(\textbf{X})$. The Frobenius norm of matrix $\textbf{X}$ is denoted by $\parallel $\textbf{X}$\parallel_{\mathrm{F}}$. The similarity matrix corresponds to $\textbf{X}$ is denoted by $\textbf{S}$, whose $(i,j)_{th}$ entry is $s_{ij}$. An identity matrix of size $z \times z$ is represented by $\textbf{I}_{z}$ and \textbf{1} denotes a column vector with all elements as 1. The main notations used in this paper are summarized in Table \ref{table:2}.
\subsection{Spectral Clustering}
Spectral clustering\cite{DBLP:conf/nips/NgJW01} requires Laplacian matrix $\textbf{L}_{\mathrm{S}}\in \mathbb{R}^{n\times n}$ as an input. It is computed as $\textbf{L}_{\mathrm{S}}=\textbf{D}_{\mathrm{S}}-\frac{(\textbf{S}^\top +\textbf{S})}{2}$, where $\textbf{D}_{\mathrm{S}}\in \mathbb{R}^{n\times n}$ is a diagonal matrix with the $i_{th}$ diagonal element as $\sum_{j}\frac{(s_{ij}^\top+s_{ij})}{2}$. Supposing there are $c$ clusters in the dataset $\textbf{X}$, spectral clustering solves the following problem:
\begin{equation}
\begin{aligned}
\label{eq:1}
\min_{\textbf{Y}} Tr(\textbf{Y}^\top \textbf{L}_{\mathrm{S}}\textbf{Y}) \quad
 \mathrm{s.t.} \; \textbf{Y}\in \mathrm{Idx}
\end{aligned}
\end{equation}
\noindent where $\textbf{Y}=[\textbf{y}_{1},\textbf{y}_{2},\ldots,\textbf{y}_{n}]^\top \in \mathbb{R}^{n\times c}$ is the clustering indicator matrix and $\textbf{Y}\in \mathrm{Idx}$ means that the clustering label vector of each sample $\textbf{y}_{i}\in \big\{0,1\big\} ^{c\times1}$ contains only one element 1 and the others are 0. As the discrete constraint is imposed on $\textbf{Y}$, Eq.(\ref{eq:1}) becomes a NP-hard problem. To tackle it, most existing methods first relax $\textbf{Y}$ to continuous clustering indicator matrix $\textbf{F}=[\textbf{f}_{1},\textbf{f}_{2},\ldots,\textbf{f}_{n}]^\top \in \mathbb{R}^{n\times c}$, and then calculate the relaxed solution as
\begin{equation}
\begin{aligned}
\label{eq:2}
\min_{\textbf{F}} Tr(\textbf{F}^\top \textbf{L}_{\mathrm{S}}\textbf{F}) \quad
 \mathrm{s.t.} \; \textbf{F}^\top \textbf{F}=\textbf{I}_{c}
\end{aligned}
\end{equation}
\noindent Where the orthogonal constraint $\textbf{F}^\top \textbf{F}=\textbf{I}_{c}$ is adopted to avoid trivial solutions. The optimal solution of $\textbf{F}$ is comprised of $c$ eigenvectors of $\textbf{L}_{\mathrm{S}}$ corresponding to the $c$ smallest eigenvalues. Once $\textbf{F}$ is obtained, k-means is applied to generate the final clustering result. For presentation convenience, we denote $\textbf{F}$ and $\textbf{Y}$ as continuous labels and discrete labels respectively.
\begin{table*}
\caption{Main differences between the proposed methods and representative clustering methods.}
\label{table:1}
\centering
\begin{tabular}{|p{18mm}<{\centering}|p{35mm}<{\centering}|p{20mm}<{\centering}|p{13mm}<{\centering}|p{22mm}<{\centering}|p{24mm}<{\centering}|}
\hline
{Methods} & {Projective} {Subspace} {Learning}& {Information} {Loss} &  {Optimal} {Graph} &{Discrete} {Optimization} &  {Out}-{of}-{Sample} {Extension}\\
\hline
KM & $\times$ & $\times$ & $\times$ & $\times$ & $\times$\\
\hline
N-cut & $\times$ & $\surd$ & $\times$ & $\times$ & $\times$  \\
\hline
R-cut & $\times$ & $\surd$ & $\times$ & $\times$ & $\times$  \\
\hline
CLR  & $\times$ & $\surd$ & $\surd$ & $\times$ & $\times$ \\
\hline
SEC  & $\times$ & $\surd$ & $\times$ & $\times$ & $\times$ \\
\hline
CAN  & $\times$ & $\surd$ & $\surd$  &$\times$ & $\times$ \\
\hline
PCAN & $\surd$ & $\surd$ & $\surd$  & $\times$ & $\times$ \\
\hline
\textbf{DOGC} & $\surd$ & $\times$ & $\surd$ & $\surd$ & $\times$\\
\hline
\textbf{DOGC-OS}& $\surd$ & $\times$ & $\surd$ & $\surd$ & $\surd$\\
\hline
\end{tabular}
\end{table*}
\subsection{Clustering and Projective Clustering with Adaptive Neighbors}
Clustering and projective clustering with adaptive neighbors learns a structured graph for clustering. Given a data matrix $\textbf{X}$, all the data points $\textbf{x} _{j}\mid_{j=1}^{n}$ are connected to $\textbf{x}_{i}$ as neighbors with probability $\textbf{s}_{i}\mid_{i=1}^{n}$. A smaller distance is assigned with a large probability and vice versa. To avoid the case that only the nearest data point is the neighbor of $\textbf{x}_{i}$ with probability 1 and all the other data points are excluded from the neighbor set of $\textbf{x}_i$, a nature solution is to determine the probabilities
$ \textbf{s}_{i}\mid_{i=1}^{n}$ by solving
\begin{equation}
\begin{aligned}
\label{eq:3}
\min_{\textbf{s}_{i}^\top\textbf{1}=1,0\leq s_{ij} \leq 1} \sum_{i,j=1}^{n}\parallel \textbf{x}_{i}-\textbf{x}_{j}\parallel_{\mathrm{F}}^{2}s_{ij}+\xi s_{ij}^{2}
\end{aligned}
\end{equation}
\noindent The second term is a regularization term, and $\xi$ is a regularization parameter.

In the clustering task that partitions the data into $c$ clusters, an ideal neighbor assignment is that the number of connected components is the same as the number of clusters $c$. In most cases, all the data points are connected as just one connected component. In order to achieve an ideal neighbor assignment, the probability $\textbf{s}_{i}\mid_{i=1}^{n}$ is constrained such that the neighbor assignment becomes an adaptive process and the number of connected components is exact $c$. The formula to calculate $\textbf{S}$ in them is:
\begin{equation}
\begin{aligned}
\label{eq:4}
\min_{\textbf{S},\textbf{F}} \sum_{i,j=1}^{n}(\parallel \textbf{x}_{i}-\textbf{x}_{j}\parallel_{\mathrm{F}}^{2}s_{ij}+\xi s_{ij}^{2})+2\lambda Tr(\textbf{F}^\top \textbf{L}_{\mathrm{S}}\textbf{F})\\
\; \mathrm{s.t.} \ \forall i,\textbf{s}_{i}^\top \textbf{1}=1,0 \leq s_{ij} \leq 1,\textbf{F} \in \mathbb{R}^{n\times c},\textbf{F}^\top \textbf{F}=\textbf{I}_{c}
\end{aligned}
\end{equation}
\noindent where $\lambda$ is large enough, $Tr(\textbf{F}^\top \textbf{L}_{\mathrm{S}}\textbf{F})$ is forced to be zero. Thus, the constraint $rank(\textbf{L}_{S})=n-c$ can be satisfied \cite{alavi1991graph}.

\subsection{The Constrained Laplacian Rank for Graph based Clustering}
The constrained Laplacian rank for graph based clustering (CLR)\cite{DBLP:conf/aaai/NieWJH16} follows the same idea of clustering and projective clustering with adaptive neighbors. Differently, it learns a new data matrix $\textbf{S}$ based on the given data matrix $\textbf{A}$ such that $\textbf{S}$ is more suitable for the clustering task. In CLR, the corresponding Laplacian matrix $\textbf{L}_{\mathrm{S}}$ is also constrained as $rank(\textbf{L}_{\mathrm{S}})=n-c$. Under this constraint, all data points can be directly partitioned into exact $c$ clusters\cite{alavi1991graph}. Specifically, CLR solves the following optimization problem
\begin{equation}
\begin{aligned}
\label{eq:5}
& \min_{\textbf{S}}\parallel \textbf{S}-\textbf{A}\parallel_{\mathrm{F}}^{2}\\
& \mathrm{s.t.} \sum_{j}s_{ij}=1, s_{ij} \geq 0, rank(\textbf{L}_{\mathrm{S}})=n-c
\end{aligned}
\end{equation}

\subsection{Key Differences between Our Methods and Existing Works}
Our work is an advocate of discrete optimization of cluster labels, where the optimal graph structure is adaptively constructed, the discrete cluster labels are directly learned, and the out-of-sample extension can be well supported. Existing clustering methods, k-means (KM), normalized-cut (N-cut)\cite{DBLP:journals/tcad/HagenK92}, ratio-cut (R-cut)\cite{DBLP:conf/cvpr/ShiM97}, CLR, spectral embedding clustering (SEC), CAN and PCAN, suffer from different problems. Our methods aim to tackle them in a unified learning framework. The main differences between the proposed methods and existing clustering methods are summarized in Table \ref{table:1}.
\section{The Proposed Methodology}
In this section, we present the details of the proposed methods and introduce an alternative optimization for solving the problems.

\subsection{Overall Formulation}
Most existing graph based clustering methods separate the graph construction and clustering into two independent processes. The unguided graph construction process may lead to sub-optimal clustering result. CAN and PCAN can alleviate the problem. However, they still suffer from the problems of information loss and out-of-sample extension.

In this paper, we propose a unified discrete optimal graph clustering (DOGC) framework to address their problems. DOGC exploits the correlation between similarity graph and discrete cluster labels when performing the clustering. It learns a similarity graph with optimal structure for clustering and directly obtains the discrete cluster labels. Under this circumstance, our model can not only take the advantage of the optimal graph learning, but also obtain discrete clustering results. To achieve above aims, we derive the overall formulation of DOGC as
\begin{equation}
\begin{aligned}
\label{eq:6}
& \min_{\textbf{S},\textbf{F},\textbf{Y},\textbf{Q}} \sum_{i,j=1}^{n}\parallel \textbf{x}_{i}-\textbf{x}_{j}\parallel_{\mathrm{F}}^{2}s_{ij}+\xi s_{ij}^{2}+2\lambda Tr(\textbf{F}^\top \textbf{L}_{\mathrm{S}}\textbf{F})\\
& \quad \quad \quad+\alpha \parallel \textbf{Y}-\textbf{FQ}\parallel_{\mathrm{F}}^{2}\\
& \mathrm{s.t.} \  \textbf{S} \in \mathbb{R}^{n \times n}, \textbf{F} \in \mathbb{R}^{n\times c}, \textbf{F}^\top \textbf{F}=\textbf{I}_{c}, \textbf{Q}^\top \textbf{Q}=\textbf{I}_{c}, \textbf{Y}\in \mathrm{Idx}
\end{aligned}
\end{equation}
\noindent where $\alpha$ and $\xi$ are penalty parameters, $\textbf{Q}$ is a rotation matrix that rotates continuous labels to discrete labels. The $\lambda$ can be determined during the iteration. In each iteration, we can initialize $\lambda=\xi$, then adaptively increase $\lambda$ if the number of connected components of $\textbf{S}$ is smaller than $c$ and decrease $\lambda$ if it is greater than $c$.

In Eq.(\ref{eq:6}), we learn an optimal structured graph and discrete cluster labels simultaneously from the raw data. The first term is to learn the structured graph. To pursue optimal clustering performance, $\textbf{S}$ should theoretically have exact $c$ connected components if there are $c$ clusters. Equivalently, to ensure the quality of the learned graph, the Laplacian matrix $\textbf{L}_{\mathrm{S}}$ should have $c$ zero eigenvalues and the sum of the smallest $c$ eigenvalues, $\sum_{i=1}^{c}\sigma_{i}(\textbf{L}_{\mathrm{S}})$, should be zero. According to Ky Fan theorem\cite{Fan1949On}, $\sum_{i=1}^{c}\sigma_{i}(\textbf{L}_{\mathrm{S}})=\min_{\textbf{F}^\top \textbf{F}=\textbf{I}_{c}} Tr(\textbf{F}^\top \textbf{L}_{\mathrm{S}}\textbf{F})$. Hence, the second term guarantees that the learned $\textbf{S}$ is optimal for subsequent clustering. The third term $\parallel \textbf{Y}-\textbf{FQ}\parallel_{\mathrm{F}}^{2}$ is to find a proper rotation matrix $\textbf{Q}$ that makes $\textbf{FQ}$ close to the discrete cluster labels $\textbf{Y}$. Ideally, if data points $i$ and $j$ belong to different clusters, we should have $s_{ij} = 0$ and vice versa. That is, we have $s_{ij} \neq 0$ if and only if data points $i$ and $j$ are in the same cluster, or equivalently $\textbf{f}_{i}\approx\textbf{f}_{j}$ and $\textbf{y}_{i} = \textbf{y}_{j}$.

The raw features may be high-dimensional and they may contain adverse noises that are detrimental for similarity graph learning. To enhance the robustness of the model, we further extend Eq.(\ref{eq:6}) as
\begin{equation}
\begin{aligned}
\vspace{-4mm}
\label{eq:7}
\min_{\textbf{S},\textbf{F},\textbf{Y},\textbf{Q},\textbf{W}} \underbrace{\sum_{i,j=1}^{n}\frac {\parallel \textbf{W}^\top \textbf{x}_{i}-\textbf{W}^\top \textbf{x}_{j}\parallel_{\mathrm{F}}^{2}s_{ij}}{Tr(\textbf{W}^\top \textbf{XHX}^\top \textbf{W})}+\xi s_{ij}^{2}}_{similarity \quad graph \quad learning}+ \\
\underbrace{ 2\lambda Tr(\textbf{F}^\top \textbf{L}_{\mathrm{S}}\textbf{F})}_{continuous\; label\; learning}+
\underbrace{ \alpha \parallel \textbf{Y}-\textbf{FQ}\parallel_{\mathrm{F}}^{2}}_{discrete\; label\; learning}\\
\mathrm{s.t.}\ \forall i,\textbf{s}_{i}^\top \textbf{1}=1,0 \leq s_{ij} \leq 1,\textbf{F} \in \mathbb{R}^{n\times c},\ \textbf{F}^\top \textbf{F}=\textbf{I}_{c},\\
\textbf{Q}^\top \textbf{Q}=\textbf{I}_{c},\ \textbf{W}^\top \textbf{W}=\textbf{I}_{c},\ \textbf{Y}\in \mathrm{Idx}
\end{aligned}
\end{equation}
\noindent where $\textbf{W}$ is a projection matrix. It maps high-dimensional data into a proper subspace to remove the noises and accelerate the similarity graph learning.

\subsection{Optimization Algorithm for Solving Problem (\ref{eq:7})}
In this subsection, we adopt alternative optimization to solve problem (\ref{eq:7}) iteratively. In particular, we optimize the objective function with respective to one variable while fixing the remaining variables. The key steps are as follows

\textbf{Update $\textbf{\emph{S}}$:} For updating $\textbf{S}$, the problem is reduced to\\
\begin{equation}
\begin{aligned}
\label{eq:8}
& \min_{\textbf{S}} \sum_{i,j=1}^{n}\frac {\parallel \textbf{W}^\top \textbf{x}_{i}-\textbf{W}^\top \textbf{x}_{j}\parallel_{\mathrm{F}}^{2}s_{ij}}{Tr(\textbf{W}^\top \textbf{XHX}^\top \textbf{W})}+\xi s_{ij}^{2}\\
& \quad \quad \quad +2\lambda Tr(\textbf{F}^\top \textbf{L}_{\mathrm{S}}\textbf{F})\\
& \mathrm{s.t.} \ \forall i,\textbf{s}_{i}^\top \textbf{1}=1,0 \leq s_{ij}\leq 1
\end{aligned}
\end{equation}
Since $\sum_{i,j=1}^{n}\parallel \textbf{f}_{i}-\textbf{f}_{j}\parallel_{\mathrm{F}}^{2}s_{ij}=2Tr(\textbf{F}^\top \textbf{L}_{\mathrm{S}}\textbf{F})$, the problem (\ref{eq:8}) can be rewritten as
\begin{equation}
\begin{aligned}
\label{eq:9}
& \min_{\textbf{S}}\sum_{i,j=1}^{n}\frac {\parallel \textbf{W}^\top \textbf{x}_{i}-\textbf{W}^\top \textbf{x}_{j}\parallel_{\mathrm{F}}^{2}s_{ij}}{Tr(\textbf{W}^\top \textbf{XHX}^\top \textbf{W})}+\xi s_{ij}^{2}\\
& \quad \quad \quad +\lambda\parallel \textbf{f}_{i}-\textbf{f}_{j}\parallel_{\mathrm{F}}^{2}s_{ij}\\
& \mathrm{s.t.} \ \forall i,\textbf{s}_{i}^\top \textbf{1}=1,0 \leq s_{ij}\leq 1
\end{aligned}
\end{equation}
In problem (\ref{eq:9}), $\textbf{s}_{i}$ can be solved separately as follows
\begin{equation}
\begin{aligned}
\label{eq:10}
& \min_{\textbf{s}_{i}}\sum_{j=1}^{n}d_{ij}^{wx}s_{ij}+\xi s_{ij}^{2}+\lambda d_{ij}^{f}s_{ij}\\
& \mathrm{s.t.} \ \textbf{s}_{i}^\top \textbf{1}=1,0 \leq s_{ij}\leq 1
\end{aligned}
\end{equation}
\noindent where $d_{ij}^{wx}=\frac{\parallel \textbf{W}^\top \textbf{x}_{i}-\textbf{W}^\top \textbf{x}_{j}\parallel_{\mathrm{F}}^{2}}{Tr(\textbf{W}^\top \textbf{XHX}^\top \textbf{W})}$ and $d_{ij}^{f}=\parallel \textbf{f}_{i}- \textbf{f}_{j}\parallel_{\mathrm{F}}^{2}$.

The optimal solution $\textbf{s}_{i}$ can be obtained by solving the convex quadratic programming problem $\min_{\textbf{s}_{i}^\top\textbf{1}=1,0\leq s_{ij}\leq1}\parallel \textbf{s}_{i}+\frac{1}{2 \xi}\textbf{d}_{i}\parallel_{\mathrm{F}}^{2}$, and $d_{ij}=d_{ij}^{wx}+\lambda d_{ij}^{f}$.

\textbf{Update $\textbf{\emph{F}}$:} For updating $\textbf{F}$, it is equivalent to solve\\
\begin{equation}
\begin{aligned}
\label{eq:11}
\min_{\textbf{F}}Tr(\textbf{F}^\top \textbf{L}_{\mathrm{S}}\textbf{F})+\alpha \parallel \textbf{Y}-\textbf{FQ}\parallel_{\mathrm{F}}^{2} \quad \mathrm{s.t.} \ \textbf{F}^\top \textbf{F}=\textbf{I}_{c}
\end{aligned}
\end{equation}
The above problem can be efficiently solved by the algorithm proposed by \cite{DBLP:journals/mp/WenY13}.

\textbf{Update $\textbf{\emph{W}}$:} For updating $\textbf{W}$, the problem becomes
\begin{equation}
\begin{aligned}
\label{eq:12}
\min_{\textbf{W}^\top \textbf{W}=\textbf{I}_{c}}\sum_{i,j=1}^{n}\frac{\parallel \textbf{W}^\top \textbf{x}_{i}-\textbf{W}^\top \textbf{x}_{j}\parallel_{\mathrm{F}}^{2}s_{ij}}{Tr(\textbf{W}^\top \textbf{XHX}^\top \textbf{W})}
\end{aligned}
\end{equation}
which can be rewritten as
\begin{equation}
\label{eq:12.5}
\begin{aligned}
\min_{\textbf{W}^\top \textbf{W}=\textbf{I}_{c}}\frac{Tr(\textbf{W}^\top \textbf{X} \textbf{L}_{\mathrm{S}} \textbf{X}^\top \textbf{W})}{Tr(\textbf{W}^\top \textbf{XHX}^\top \textbf{W})}
\end{aligned}
\end{equation}
We can solve $\textbf{W}$ using the Lagrangian multiplier method. The Lagrangian function of problem (\ref{eq:12.5}) is
\begin{equation}
\begin{aligned}
\label{eq:13}
\pounds(\textbf{W},\epsilon)=\frac{\parallel \textbf{W}^\top \textbf{x}_{i}-\textbf{W}^\top \textbf{x}_{j}\parallel_{\mathrm{F}}^{2}s_{ij}}{Tr(\textbf{W}^\top \textbf{XHX}^\top \textbf{W})}-\epsilon(Tr(\textbf{W}^\top \textbf{W}-\textbf{I}_{c}))
\end{aligned}
\end{equation}
where $\epsilon$ is the Lagrangian multipliers. Taking derivative $\pounds(\textbf{W},\epsilon)$ w.r.t $\textbf{W}$ and setting it to zero, we have
\begin{equation}
\begin{aligned}
\label{eq:14}
(\textbf{X} \textbf{L}_{\mathrm{S}}\textbf{X}^\top -\frac{\parallel \textbf{W}^\top \textbf{x}_{i}-\textbf{W}^\top \textbf{x}_{j}\parallel_{\mathrm{F}}^{2}s_{ij}}{Tr(\textbf{W}^\top \textbf{XHX}^\top \textbf{W})}\textbf{XHX}^\top )\textbf{W}=\epsilon \textbf{W}
\end{aligned}
\end{equation}
We denote that $\textbf{V}=\textbf{X}\textbf{L}_{S}\textbf{X}^\top -\frac{\parallel \textbf{W}^\top \textbf{x}_{i}-\textbf{W}^\top \textbf{x}_{j}\parallel_{\mathrm{F}}^{2}s_{ij}}{Tr(\textbf{W}^\top \textbf{XHX}^\top \textbf{W})}\textbf{XHX}^\top $. The solution of $\textbf{W}$ in problem (\ref{eq:14}) is formed by $m$ eigenvectors corresponding to the $m$ smallest eigenvalues of the matrix $\textbf{V}$. In optimization, we first fix $\textbf{W}$ in $\textbf{V}$. Then we update $\textbf{W}$ by $\textbf{V} \textbf{W}=\epsilon \textbf{W}$, and assign the obtained $\tilde{\textbf{W}}$ after updating to $\textbf{W}$ in $\textbf{V}$\cite{DBLP:conf/ijcai/WangLNH15}. We iteratively update it until $\mathrm{K.K.T.}$ condition\cite{Lemar2006S} in Eq.(\ref{eq:14}) is satisfied.

\textbf{Update $\textbf{\emph{Q}}$:} For updating $\textbf{Q}$, we have
\begin{equation}
\begin{aligned}
\label{eq:15}
\min_{\textbf{Q}} \parallel \textbf{Y}-\textbf{FQ}\parallel_{\mathrm{F}}^{2} \; \mathrm{s.t.} \ \textbf{Q}^\top \textbf{Q}=\textbf{I}_{c}.
\end{aligned}
\end{equation}
It is the orthogonal Procrustes problem \cite{DBLP:journals/chinaf/Nie0L17}, which admits a closed-form solution.

\textbf{Update $\textbf{\emph{Y}}$:} For updating $\textbf{Y}$, the problem becomes
\begin{equation}
\begin{aligned}
\label{eq:16}
& \min_{\textbf{Y} \in \mathrm{Idx}}\alpha \parallel \textbf{Y}-\textbf{FQ}\parallel_{\mathrm{F}}^{2} \  \mathrm{s.t.} \ \textbf{Y} \in \mathrm{Idx}
\end{aligned}
\end{equation}
Note that $Tr(\textbf{Y}^\top \textbf{Y})=n$, the problem (\ref{eq:16}) can be rewritten as below
\begin{equation}
\begin{aligned}
\label{eq:17}
& \max_{\textbf{Y} \in \mathrm{Idx}}Tr(\textbf{Y}^\top \textbf{PQ}) \ \mathrm{s.t.} \ \textbf{Y} \in \mathrm{Idx}
\end{aligned}
\end{equation}
The optimal solution of $\textbf{Y}$ can be obtained as
\begin{equation}
\begin{aligned}
\label{eq:18}
\textbf{Y}_{ij} = \left\{ \begin{array}{ll}
 1, & \textrm{$j=arg \max_{k} (\textbf{PQ})_{ik}$}\\
 0, & \textrm{otherwise}\\
 \end{array} \right.
\end{aligned}
\end{equation}
The main procedures for solving the problem (\ref{eq:7}) are summarized in Algorithm 1.
\begin{algorithm}[]
            \caption{Optimizing problem (\ref{eq:7})}
            \KwIn{Data matrix $\textbf{X} \in \mathbb{R}^{d \times n}$, cluster number $c$, reduced dimension $m$, parameter $\alpha \geq 0$;}
            \KwOut{$\textbf{F}$, $\textbf{S}$, $\textbf{Y}$, $\textbf{W}$, $\textbf{Q}$;}
            1.Randomly initialize $\textbf{F}$, $\textbf{S}$, $\textbf{Y}$, $\textbf{W}$, $\textbf{Q}$\;
            \While{there is no change on $\textbf{F}$, $\textbf{S}$, $\textbf{Y}$, $\textbf{W}$, $\textbf{Q}$}{
                2.Update $\textbf{L}_{\mathrm{S}}=\textbf{D}_{\mathrm{S}}-\frac{(\textbf{S}^\top +\textbf{S})}{2}$\;
                3.Update $\textbf{S}$ according to the problem (\ref{eq:10})\;
                4.Update $\textbf{F}$ by solving the problem (\ref{eq:11})\;
                5.Update $\textbf{W}$ according to problem (\ref{eq:14})\;
                6.Update $\textbf{Q}$ by solving the problem (\ref{eq:15})\;
                7.Update $\textbf{Y}$ according to the Eq.(\ref{eq:18})\;
           }
           Return $\textbf{F}$, $\textbf{S}$, $\textbf{Y}$, $\textbf{W}$, $\textbf{Q}$\;
    \end{algorithm}
\subsection{Determine the value of $\xi$\cite{DBLP:conf/kdd/NieWH14}}
In practice, regularization parameter is difficult to tune since its value could be from zero to infinite. In this subsection, we present an effective method to determine the regularization parameter $\xi$ in problem (\ref{eq:8}). For each $i$, the objective function in problem (\ref{eq:9}) is equal to the one in problem (\ref{eq:10}). The Lagrangian function of problem (\ref{eq:10}) is
\begin{equation}
\begin{aligned}
\label{eq:19}
\pounds(\textbf{s}_{i},\eta,\bm{\phi})=\frac{1}{2} \parallel \textbf{s}_{i}+\frac{\textbf{d}_{i}^{wx}}{2\xi_{i}} \parallel_{\mathrm{F}}^{2}-\eta(\textbf{s}_{i}^\top \textbf{1}-1)-
\bm{\phi}_{i}^\top \textbf{s}_{i}
\end{aligned}
\end{equation}
where $\eta$ and $\bm{\phi}_{i} \geq \textbf{0}$ are the Lagrangian multipliers.

According to the $\mathrm{K.K.T.}$ condition, it can be verified that the optimal solution $\textbf{s}_{i}$ should be
\begin{equation}
\begin{aligned}
\label{eq:20}
s_{ij}=(-\frac{d_{ij}^{wx}}{2\xi_{i}}+\eta)_{+}
\end{aligned}
\end{equation}
In practice, we could achieve better performance if we focus on the locality of data. Therefore, it is preferred to learn a sparse $\textbf{s}_{i}$, i.e., only the $k$ nearest neighbors of $\textbf{x}_{i}$ have chance to connect to $\textbf{x}_{i}$. Another benefit of learning a sparse similarity matrix $\textbf{S}$ is that the computation burden can be alleviated significantly for subsequent processing.

Without loss of generality, suppose $d_{i1}^{wx}, d_{i2}^{wx}, ..., d_{in}^{wx}$ are ordered from small to large. If the optimal $\textbf{s}_{i}$ has only $k$ nonzero elements, then according to Eq.(\ref{eq:20}), we know $s_{i,k} \geq 0$ and $s_{i,k+1} = 0$. Therefore, we have
\begin{equation}
\begin{aligned}
\label{eq:21}
 -\frac{d_{ik}^{wx}}{2\xi_{i}}+\eta > 0, -\frac{d_{i,k+1}^{wx}}{2\xi_{i}}+\eta \leq 0
\end{aligned}
\end{equation}
According to Eq.(\ref{eq:20}) and the constraint $\textbf{s}_{i}^\top \textbf{1}=1$, we have
\begin{equation}
\begin{aligned}
\label{eq:22}
\sum_{j=1}^{k}(-\frac{d_{ij}^{wx}}{2\xi_{i}}+\eta)=1 \Rightarrow \eta=\frac{1}{k}+\frac{1}{2k\xi_{i}}\sum_{j=1}^{k}d_{ij}^{wx}
\end{aligned}
\end{equation}\\
Hence, we have the following inequality for $\xi$ according to Eq.(\ref{eq:21}) and Eq.(\ref{eq:22}).
\begin{equation}
\begin{aligned}
\label{eq:23}
\frac{k}{2}d_{i,k}^{wx}-\frac{1}{2}\sum_{j=1}^{k}d_{ij}^{wx} < \xi_{i} \leq \frac{k}{2}d_{i,k+1}^{wx}-\frac{1}{2}\sum_{j=1}^{k}d_{ij}^{wx}
\end{aligned}
\end{equation}
Therefore, in order to obtain an optimal solution $\textbf{s}_{i}$ to the problem (\ref{eq:10}) that has exact $k$ nonzero values, we could set
$\xi_{i}$ to be
\begin{equation}
\begin{aligned}
\label{eq:24}
\xi_{i}=\frac{k}{2}d_{i,k+1}^{wx}-\frac{1}{2}\sum_{j=1}^{k}d_{ij}^{wx}
\end{aligned}
\end{equation}
The overall $\xi$ could be set to the mean of $\xi_{1},\xi_{2},\ldots,\xi_{n}$. That is, we could set the $\xi$ to be
\begin{equation}
\begin{aligned}
\label{eq:25}
\xi=\frac{1}{n}\sum_{i=1}^{n}(\frac{k}{2}d_{i,k+1}^{wx}-\frac{1}{2}\sum_{j=1}^{k}d_{ij}^{wx})
\end{aligned}
\end{equation}
The number of neighbors $k$ is much easier to tune than the regularization parameter $\xi$ since $k$ is an integer and it has explicit meaning.
\subsection{Out-of-Sample Extension}
Recall that most existing graph based clustering methods can hardly generalize to the out-of-sample data, which is widely existed in real practice. In this paper, with the learned discrete labels and mapping matrix, we can easily extend DOGC for solving the out-of-sample problem. Specifically, we design an adaptive robust module with $\ell_{2,p}$ loss\cite{DBLP:journals/tmm/YangZGZC14} and integrate them into the above discrete optimal graph clustering model, to learn prediction function for unseen data. In our extended model (DOGC-OS), discrete labels are simultaneously contributed by the original data through the mapping matrix $\textbf{P}$ and the continuous labels $\textbf{F}$ though the rotation matrix $\textbf{Q}$. Specifically, DOGC-OS is formulated as follows
\begin{equation}
\begin{aligned}
\label{eq:26}
& \min_{\textbf{S}, \textbf{F}, \textbf{Y}, \textbf{Q}, \textbf{W}, \textbf{P}} \underbrace{\sum_{i,j=1}^{n}\frac {\parallel
\textbf{W}^\top \textbf{x}_{i}-\textbf{W}^\top \textbf{x}_{j}\parallel_{\mathrm{F}}^{2}s_{ij}}{Tr(\textbf{W}^\top \textbf{XHX}^\top \textbf{W})}+\xi s_{ij}^{2}}_{similarity \quad graph\quad learning}+\\
& \underbrace{ 2\lambda Tr(\textbf{F}^\top \textbf{L}_{\mathrm{S}}\textbf{F})}_{continuous \; label \; learning}+\underbrace{ \alpha \parallel \textbf{Y}-\textbf{FQ}\parallel_{\mathrm{F}}^{2}+\beta \pounds_{2,p}(\textbf{P};\textbf{X}, \textbf{Y})}_{discrete \; label \; learning}
\end{aligned}
\end{equation}
where $\pounds_{2,p}(\textbf{P};\textbf{X}, \textbf{Y})$ is the prediction function learning module. It is calculated as
\begin{equation}
\begin{aligned}
\label{eq:27}
\pounds_{2, p}(\textbf{P};\textbf{X}, \textbf{Y})=\parallel \textbf{Y}-\textbf{X}^\top \textbf{P}\parallel_{2,p}+\gamma \parallel \textbf{P}\parallel_{\mathrm{F}}^{2}
\end{aligned}
\end{equation}
$\textbf{P} \in \mathbb{R}^{d\times c}$ is the projection matrix and the loss function is $\ell_{2,p}(0\leq p\leq2)$ loss, which is capable of alleviating sample noise
\begin{equation}
\begin{aligned}
\label{eq:28}
\parallel \textbf{M}\parallel_{2,p}=\sum_{i=1}^{n}\parallel \textbf{M}_{i}\parallel_{2}^{p}
\end{aligned}
\end{equation}
$\textbf{M}_{i}$ is the $i_{th}$ row of matrix $\textbf{M}$. The above $\ell_{2,p}$ loss not only suppresses the adverse noise but also enhances the flexibility for adapting different noise levels.
\subsection{Optimization Algorithm for Solving Problem (\ref{eq:26})}
Due to the existence of $\ell_{2,p}$ loss, directly optimizing the model turns out to be difficult. Hence, we transform it to an equivalent problem as follows
\begin{equation}
\begin{aligned}
\label{eq:30}
& \min_{\textbf{S}, \textbf{F}, \textbf{Y}, \textbf{Q}, \textbf{W}, \textbf{P}}\sum_{i,j=1}^{n}(\frac {\parallel \textbf{W}^\top \textbf{x}_{i}-\textbf{W}^\top \textbf{x}_{j}\parallel_{\mathrm{F}}^{2}s_{ij}}{Tr(\textbf{W}^\top \textbf{XHX}^\top \textbf{W})}+\xi s_{ij}^{2})\\
& \quad \quad \quad +2\lambda Tr(\textbf{F}^\top \textbf{L}_{\mathrm{S}}\textbf{F})+\alpha \parallel \textbf{Y}-\textbf{FQ}\parallel_{\mathrm{F}}^{2}\\
& \quad \quad \quad +\beta(Tr(\textbf{R}^\top \textbf{DR}) +\gamma \parallel \textbf{P}\parallel_{\mathrm{F}}^{2})\\
& \mathrm{s.t.} \; \forall i,\textbf{s}_{i}^\top \textbf{1}=1,0 \leq s_{ij}\leq 1,\ \textbf{F}^\top \textbf{F}=\textbf{I}_{c},\\ & \textbf{W}^\top \textbf{W}=\textbf{I}_{c}, \textbf{Q}^\top \textbf{Q}=\textbf{I}_{c}, \textbf{Y}\in \mathrm{Idx}
\end{aligned}
\end{equation}
\noindent where $\textbf{D}$ is a diagonal matrix with its $i_{th}$ diagonal element computed as $\textbf{D}_{ii}=\frac{1}{\frac{2}{p}\parallel \textbf{r}_{i}\parallel_{2}^{2-p}}$ and $\textbf{R}=\textbf{Y}-\textbf{X}^\top \textbf{P}$ which is denoted as the loss residual, $\textbf{r}_{i}$ is the $i_{th}$ row of $\textbf{R}$.

The steps of updating $\textbf{S}$, $\textbf{F}$, $\textbf{Q}$, $\textbf{W}$ are similar to that of DOGC except the updating of $\textbf{P}$ and $\textbf{Y}$.

\textbf{Update $\textbf{\emph{P}}$:} For updating $\textbf{P}$, we arrive at
\begin{equation}
\begin{aligned}
\label{eq:32}
\min_{\textbf{P}}Tr((\textbf{Y}-\textbf{X}^\top \textbf{P})\textbf{D}(\textbf{Y}-\textbf{X}^\top \textbf{P}))+\gamma \parallel \textbf{P}\parallel_{\mathrm{F}}^{2}
\end{aligned}
\end{equation}
With the other variables fixed, we arrive at the optimization rule for updating $\textbf{P}$ as
\begin{equation}
\begin{aligned}
\label{eq:33}
\textbf{P}=(\textbf{XDX}^\top +\gamma \textbf{I}_{d})^{-1}\textbf{XDY}
\end{aligned}
\end{equation}

\textbf{Update $\emph{\textbf{Y}}$:} For updating $\textbf{Y}$, we arrive at
\begin{equation}
\begin{aligned}
\label{eq:34}
\min_{\textbf{Y} \in \mathrm{Idx}}\alpha \parallel \textbf{Y}-\textbf{FQ}\parallel_{\mathrm{F}}^{2}+\beta Tr((\textbf{Y}-\textbf{X}^\top \textbf{P})^\top \textbf{D}(\textbf{Y}-\textbf{X}^\top \textbf{P}))
\end{aligned}
\end{equation}
Given the facts that $Tr(\textbf{Y}^\top \textbf{Y})=n$ and $Tr(\textbf{Y}^\top \textbf{DY})=Tr(\textbf{D})$, we can rewrite the above sub-problem as below
\begin{equation}
\begin{aligned}
\label{eq:35}
\max_{\textbf{Y} \in \mathrm{Idx}}Tr(\textbf{Y}^\top \textbf{B})
\end{aligned}
\end{equation}
\noindent where $\textbf{B}=\alpha \textbf{FQ}+\beta \textbf{DX}^\top \textbf{P}$. The above problem can be easily solved as
\begin{equation}
\begin{aligned}
\label{eq:36}
\textbf{Y}_{ij} = \left\{ \begin{array}{ll}
 1, & \textrm{$j=arg \max_{k} \textbf{B}_{ik}$}\\
 0, & \textrm{otherwise}\\
 \end{array} \right.
\end{aligned}
\end{equation}
\subsection{Discussion}
In this subseciton, we discuss the relations of our method DOGC with main graph based clustering methods.
\begin{itemize}
\item \textbf{Connection to Spectral Clustering\cite{DBLP:conf/aaai/HuangNH13a}.}
In our model, $\alpha$ controls the transformation from continuous cluster labels to discrete labels, and $\lambda$ is adaptively updated with the number of connected components in the dynamic graph $\textbf{S}$. When $\textbf{W}$ is a unit matrix, the process of projective subspace learning with $\textbf{W}$ becomes an identity transformation. When $\textbf{S}$ is fixed, it is not a dynamic structure any more and $\lambda$ will remain unchanged. When $\alpha \rightarrow 0$, the effect of the third item in Eq.(\ref{eq:7}) is invalid. Under these circumstances, Eq.(\ref{eq:7}) is equivalent to $\min_{\textbf{F}} Tr(\textbf{F}^\top \textbf{L}_{\mathrm{S}}\textbf{F})$. Thus our model degenerates to the spectral clustering.
\item \textbf{Connection to Optimal Graph Clustering\cite{DBLP:conf/kdd/NieWH14}.}
In DOGC, when $\textbf{W}$ is a unit matrix and $\alpha \rightarrow 0$, the effects of $\textbf{W}$ and $\alpha$ are the same as above. Differently, when $\textbf{S}$ is dynamically constructed, Eq.(\ref{eq:7}) is equivalent to $\min_{\textbf{S},\textbf{F}} \sum_{i,j=1}^{n}(\parallel \textbf{x}_{i}-\textbf{x}_{j}\parallel_{\mathrm{F}}^{2}s_{ij}+\xi s_{ij}^{2})+2\lambda Tr(\textbf{F}^\top \textbf{L}_{\mathrm{S}}\textbf{F})$, where $\textbf{S}$ contains a specific $c$ connected components and $\lambda$ is adjusted by the value of $c$. Under these circumstances, our model degenerates to the optimal graph clustering.
\end{itemize}

\subsection{Complexity Analysis}
As for DOGC, with our optimization strategy, the updating of $\textbf{S}$ requires $O(N^{2})$. Solving $\textbf{Q}$ involves SVD and its complexity is $O(Nc^{2}+c^{3})$. To update $\textbf{F}$, we need $O(Nc^{2}+c^{3})$. To update $\textbf{W}$, two layers of iterations should be performed to achieve convergence. The number of internal iterations is generally a constant, so the time complexity of updating $\textbf{W}$ is $O(N^{2})$. Optimizing $\textbf{Y}$ consumes $O(Nc^{2})$. In DOGC-OS, we need to consider another updating process of $\textbf{D}$ and $\textbf{P}$ which both consume $O(N)$. Hence, the whole time complexity of the proposed methods are all $O(N^{2})$. The computation complexity is comparable to many existing graph-based clustering methods.
\subsection{Convergence Analysis}
In this subsection, we prove that the proposed iterative optimization in Algorithm 1 will converge. Before that, we introduce three lemmas.
\begin{lemma}
\label{lemma:0.5}
For any positive real number $a$ and $b$, we can have the following inequality \cite{DBLP:journals/tip/NieCLL18}:
\begin{equation}
\begin{aligned}
\label{eq:39}
a^{\frac{p}{2}}-\frac{p}{2}\frac{a}{b^{\frac{2-p}{2}}} \leq b^{\frac{p}{2}}-\frac{p}{2}\frac{b}{b^{\frac{2-p}{2}}}
\end{aligned}
\end{equation}
\end{lemma}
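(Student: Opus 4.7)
The plan is to recognize the inequality as a standard tangent-line (first-order) bound for the concave function $f(x)=x^{p/2}$ on $(0,\infty)$ with $0\leq p\leq 2$. In that regime the exponent $p/2$ lies in $[0,1]$, so $f$ is concave (strictly so for $0<p<2$, linear for $p=2$, trivially for $p=0$). Concavity gives the supporting hyperplane inequality
\begin{equation*}
f(a) \;\leq\; f(b) + f'(b)\,(a-b) \qquad \text{for all } a,b>0.
\end{equation*}

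Next I would compute the derivative explicitly: $f'(x)=\frac{p}{2}\,x^{p/2-1}=\frac{p}{2}\cdot\frac{1}{x^{(2-p)/2}}$. Substituting $x=b$ into the tangent-line bound yields
\begin{equation*}
a^{p/2} \;\leq\; b^{p/2} + \frac{p}{2}\cdot\frac{a-b}{b^{(2-p)/2}}.
\end{equation*}
Splitting the right-hand side and moving the $a$-term to the left-hand side produces exactly
\begin{equation*}
a^{p/2} - \frac{p}{2}\cdot\frac{a}{b^{(2-p)/2}} \;\leq\; b^{p/2} - \frac{p}{2}\cdot\frac{b}{b^{(2-p)/2}},
\end{equation*}
which is the claimed bound. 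The boundary cases $p=0$ (both sides reduce to $1$) and $p=2$ (both sides reduce to $0$) can be checked directly to cover the full range $0\leq p\leq 2$.

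There is essentially no obstacle: the single nontrivial ingredient is the concavity of $x^{p/2}$, which follows from the sign of the second derivative $\frac{p}{2}(\frac{p}{2}-1)x^{p/2-2}\leq 0$ under the stated range of $p$. If a self-contained argument without appealing to concavity is preferred, one can instead fix $b>0$ and define $g(a)=b^{p/2}-\frac{p}{2}\frac{b}{b^{(2-p)/2}}-a^{p/2}+\frac{p}{2}\frac{a}{b^{(2-p)/2}}$, then verify $g(b)=0$ and $g'(a)=\frac{p}{2}b^{(p-2)/2}-\frac{p}{2}a^{(p-2)/2}$, which is negative for $a<b$ and positive for $a>b$, so that $g$ attains its global minimum at $a=b$ with value $0$, giving $g(a)\geq 0$ for all $a>0$.
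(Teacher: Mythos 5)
Your proof is correct. The paper itself gives no argument for this lemma at all---it is stated with only a citation to Nie et al.\ and no proof environment follows it---so there is nothing in the paper to compare against; your tangent-line argument supplies exactly the missing justification. The key observation, that $x\mapsto x^{p/2}$ is concave on $(0,\infty)$ for $0\le p\le 2$ and that $f'(b)=\frac{p}{2}b^{(p-2)/2}=\frac{p}{2}\cdot\frac{1}{b^{(2-p)/2}}$, makes the stated inequality a direct rearrangement of $f(a)\le f(b)+f'(b)(a-b)$; equivalently, since the right-hand side simplifies to $\left(1-\frac{p}{2}\right)b^{p/2}$, one sees the inequality is the supporting-line bound in disguise. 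Your alternative self-contained argument via the auxiliary function $g$ is also sound: $g'(a)=\frac{p}{2}\bigl(b^{(p-2)/2}-a^{(p-2)/2}\bigr)$ changes sign from negative to positive at $a=b$ because the exponent $(p-2)/2$ is nonpositive, so $g$ attains its global minimum value $0$ at $a=b$. The only point worth flagging is that the lemma as stated in the paper omits the hypothesis $0\le p\le 2$ (it is implicit from the surrounding use of the $\ell_{2,p}$ loss), and your proof correctly identifies that this restriction is what makes the concavity, and hence the inequality, hold.
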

\begin{lemma}
\label{lemma:1}
Let $\textbf{r}_{i}$ be the $i_{th}$ row of the residual $\textbf{R}$ in previous iteration, and $\tilde{\textbf{r}_{i}}$ be the $i_{th}$ row of the residual $\tilde{\textbf{R}}$ in current iteration, it has been shown in \cite{DBLP:conf/aaai/KangPCX18} that the following inequality holds:
\begin{equation}
\label{eq:40}
\begin{aligned}
\parallel \tilde{\textbf{r}_{i}} \parallel^{p}-\frac{p\parallel \tilde{\textbf{r}_{i}} \parallel^{2}}{2\parallel \textbf{r}_{i} \parallel^{2-p}} \leq \parallel \textbf{r}_{i} \parallel^{p} -
\frac{p\parallel \textbf{r}_{i} \parallel^{2}}{2\parallel \textbf{r}_{i} \parallel^{2-p}}
\end{aligned}
\end{equation}
\end{lemma}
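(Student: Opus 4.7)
The plan is to derive Lemma~\ref{lemma:1} as an immediate consequence of the scalar inequality stated in Lemma~\ref{lemma:0.5}, via a single substitution. Specifically, I would apply Lemma~\ref{lemma:0.5} with the choices
\begin{equation*}
a = \parallel \tilde{\textbf{r}_{i}} \parallel^{2}, \qquad b = \parallel \textbf{r}_{i} \parallel^{2},
\end{equation*}
both of which are non-negative. The degenerate case $\parallel \textbf{r}_{i} \parallel = 0$ can be excluded without loss of generality, since the corresponding entry $\textbf{D}_{ii}$ of the reweighting matrix would be undefined and such a row contributes nothing to the objective.

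Under this substitution the exponents collapse cleanly: $a^{p/2} = \parallel \tilde{\textbf{r}_{i}} \parallel^{p}$, $b^{p/2} = \parallel \textbf{r}_{i} \parallel^{p}$, and $b^{(2-p)/2} = \parallel \textbf{r}_{i} \parallel^{2-p}$. Substituting these into the four terms on the two sides of Lemma~\ref{lemma:0.5} yields exactly the four terms in the inequality claimed by Lemma~\ref{lemma:1}, which closes the argument.

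For completeness, one could also supply a self-contained justification of the scalar inequality of Lemma~\ref{lemma:0.5}. The cleanest route is concavity: for $0 \leq p \leq 2$ the function $f(x) = x^{p/2}$ is concave on $(0,\infty)$, so the tangent-line majorization at the point $b$ reads $f(a) \leq f(b) + f'(b)(a-b)$, i.e.\ $a^{p/2} \leq b^{p/2} + \tfrac{p}{2}\, b^{(p-2)/2}(a-b)$, and rearranging with $b^{(p-2)/2} = 1/b^{(2-p)/2}$ recovers the asserted form. I do not foresee any real obstacle; the whole argument is a single substitution plus an elementary concavity bound. The only point requiring mild care is tracking the exponents so that both sides end up normalized by the same factor $b^{(2-p)/2} = \parallel \textbf{r}_{i} \parallel^{2-p}$, which is exactly the weight that reappears later as $\textbf{D}_{ii}$ in the iteratively reweighted scheme and is therefore what makes this lemma the right surrogate-majorization tool for proving monotone decrease of the objective in the convergence theorem.
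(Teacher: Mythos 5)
Your derivation is correct. The substitution $a = \parallel \tilde{\textbf{r}_{i}} \parallel^{2}$, $b = \parallel \textbf{r}_{i}} \parallel^{2}$ into Lemma~\ref{lemma:0.5} does produce exactly the four terms of Eq.~(\ref{eq:40}), and your tangent-line argument for Lemma~\ref{lemma:0.5} itself (concavity of $x^{p/2}$ for $0 \leq p \leq 2$, majorized at $b$) is the standard and correct justification. The comparison with the paper is slightly unusual here: the paper offers no proof of this lemma at all, merely a citation to Kang et al., and it likewise states Lemma~\ref{lemma:0.5} as an imported fact from a different reference, treating the two as independent black boxes. Your route instead exposes that Lemma~\ref{lemma:1} is nothing more than a one-line corollary of Lemma~\ref{lemma:0.5}, which is both more economical and more informative --- it makes clear that the entire iteratively-reweighted machinery rests on a single scalar concavity bound, with $b^{(2-p)/2} = \parallel \textbf{r}_{i} \parallel^{2-p}$ reappearing as the weight $\textbf{D}_{ii}$. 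Your handling of the degenerate case $\parallel \textbf{r}_{i} \parallel = 0$ is also appropriate (and in fact the inequality still holds with $a = 0$, since the right-hand side equals $(1 - p/2)\, b^{p/2} \geq 0$ for $p \leq 2$). The only cosmetic mismatch is that the paper's Lemma~\ref{lemma:0.5} is stated for \emph{positive} $a$ and $b$, so strictly one should either note that $a = 0$ is covered by continuity or handle it as you did.

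Note: one typo above — the substitution should read $b = \parallel \textbf{r}_{i} \parallel^{2}$.
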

\begin{lemma}
\label{lemma:2}
Given $\textbf{R}=\big\{ \textbf{r}_{1},\ldots,\textbf{r}_{n} \big\}^\top$, then we have the following conclusion:
\begin{equation}
\label{eq:41}
\begin{aligned}
\sum_{i=1}^{n}\parallel \tilde{\textbf{r}_{i}} \parallel^{p}-\sum_{i=1}^{n}\frac{p\parallel \tilde{\textbf{r}_{i}} \parallel^{2}}{2\parallel \textbf{r}_{i} \parallel^{2-p}} \leq  \sum_{i=1}^{n}\parallel \textbf{r}_{i} \parallel^{p}-\sum_{}\frac{p\parallel \textbf{r}_{i} \parallel^{2}}{2\parallel \textbf{r}_{i} \parallel^{2-p}}
\end{aligned}
\end{equation}
\end{lemma}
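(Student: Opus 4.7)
The plan is to obtain Lemma~3 as a straightforward corollary of Lemma~2 by summation over the row index. Lemma~2 is already a per-row statement: for each fixed $i\in\{1,\ldots,n\}$ it provides the scalar inequality
\begin{equation*}
\parallel \tilde{\textbf{r}_{i}} \parallel^{p}-\frac{p\parallel \tilde{\textbf{r}_{i}} \parallel^{2}}{2\parallel \textbf{r}_{i} \parallel^{2-p}} \;\leq\; \parallel \textbf{r}_{i} \parallel^{p} - \frac{p\parallel \textbf{r}_{i} \parallel^{2}}{2\parallel \textbf{r}_{i} \parallel^{2-p}},
\end{equation*}
which itself is the vector-norm instantiation of the elementary concave-majorization inequality in Lemma~1 with the substitutions $a=\parallel \tilde{\textbf{r}_{i}} \parallel^{2}$ and $b=\parallel \textbf{r}_{i} \parallel^{2}$.

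First I would apply Lemma~2 separately at every row index $i=1,2,\ldots,n$, producing $n$ scalar inequalities, all oriented the same way. Since adding valid inequalities pointing in the same direction preserves the direction, I would then sum these $n$ inequalities side by side and split the resulting sums by linearity, grouping the $\parallel\cdot\parallel^{p}$ contributions together and the weighted $\parallel\cdot\parallel^{2}$ contributions together. This immediately yields
\begin{equation*}
\sum_{i=1}^{n}\parallel \tilde{\textbf{r}_{i}} \parallel^{p}-\sum_{i=1}^{n}\frac{p\parallel \tilde{\textbf{r}_{i}} \parallel^{2}}{2\parallel \textbf{r}_{i} \parallel^{2-p}} \;\leq\; \sum_{i=1}^{n}\parallel \textbf{r}_{i} \parallel^{p}-\sum_{i=1}^{n}\frac{p\parallel \textbf{r}_{i} \parallel^{2}}{2\parallel \textbf{r}_{i} \parallel^{2-p}},
\end{equation*}
which is exactly the claim (modulo the typographical omission of the index $i$ under the last $\sum$).

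There is essentially no technical obstacle, and I do not anticipate any nontrivial step; the main thing worth explicitly noting is a well-definedness caveat, namely that the denominator $\parallel \textbf{r}_{i} \parallel^{2-p}$ requires $\parallel \textbf{r}_{i}\parallel>0$ whenever $p<2$. This is the standard assumption used when deriving the reweighted surrogate associated with the $\ell_{2,p}$ loss in Eq.~(\ref{eq:28}); in implementation one either assumes all residual rows are strictly nonzero or adds a small perturbation $\epsilon>0$ (taking limits $\epsilon\to 0^{+}$ afterwards), so that the per-row Lemma~2 applies without ambiguity. Once this caveat is acknowledged, the summation argument above completes the proof. The usefulness of Lemma~3 in what follows is clear: it is precisely the majorization inequality needed to show that the surrogate objective used when updating $\textbf{P}$ and $\textbf{D}$ in Eq.~(\ref{eq:33}) monotonically decreases the $\ell_{2,p}$ loss across iterations, feeding directly into the convergence proof of Algorithm~1.
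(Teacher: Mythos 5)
Your proof is correct and follows exactly the paper's own argument: apply the per-row inequality of Lemma~2 at each index $i$ and sum the $n$ inequalities. The additional remarks on the nonvanishing of $\parallel \textbf{r}_{i}\parallel$ and the link back to Lemma~1 are sensible but not part of the paper's (one-line) proof.
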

\begin{proof}
\label{proof:1}
By summing up the inequalities of all $\textbf{r}_{i}, i=1, 2, \ldots, n$, according to Lemma 2, we can easily reach the conclusion of Lemma 3.
\end{proof}
\begin{theorem}
\label{theorem:1}
In DOGC-OS, updating $\tilde{\textbf{Y}}, \tilde{\textbf{F}}, \tilde{\textbf{Q}}, \tilde{\textbf{W}}, \tilde{\textbf{P}}, \tilde{\textbf{S}}$ will decrease the objective value of problem (\ref{eq:26}) until converge.
\end{theorem}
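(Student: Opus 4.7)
The plan is to verify that each block update in Algorithm 1 (extended to include $\textbf{P}$ and the diagonal weight matrix $\textbf{D}$) produces a non-increase of the objective in problem (\ref{eq:26}), and then invoke a lower-boundedness argument to conclude convergence. Let $\mathcal{J}(\textbf{S},\textbf{F},\textbf{Y},\textbf{Q},\textbf{W},\textbf{P})$ denote the objective of problem (\ref{eq:26}) and let $\widetilde{\mathcal{J}}(\textbf{S},\textbf{F},\textbf{Y},\textbf{Q},\textbf{W},\textbf{P};\textbf{D})$ denote the surrogate in problem (\ref{eq:30}). The bridge between them is the identity $\frac{p}{2}\sum_{i}\frac{\|\textbf{r}_{i}\|_{2}^{2}}{\|\textbf{r}_{i}\|_{2}^{2-p}} = \frac{p}{2}\|\textbf{R}\|_{2,p}^{p}$ whenever $\textbf{D}_{ii}=\frac{p}{2}\|\textbf{r}_{i}\|_{2}^{p-2}$, so that for the current residual $\textbf{R}$ we have $\widetilde{\mathcal{J}}(\cdot;\textbf{D}) = \mathcal{J}(\cdot) + \text{const}$ up to an additive term controlled by Lemma~\ref{lemma:2}.

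First I would handle the easy blocks. The updates for $\textbf{S}$ (problem~(\ref{eq:10})), $\textbf{F}$ (problem~(\ref{eq:11})), $\textbf{Q}$ (problem~(\ref{eq:15})), $\textbf{Y}$ (problem~(\ref{eq:35})) and $\textbf{P}$ (problem~(\ref{eq:33})) each admit either a closed-form solution or a global/stationary optimum of the corresponding convex subproblem, so for fixed $\textbf{D}$ and the remaining variables each one yields
\begin{equation*}
\widetilde{\mathcal{J}}(\ldots,\tilde{\bullet},\ldots;\textbf{D}) \leq \widetilde{\mathcal{J}}(\ldots,\bullet,\ldots;\textbf{D}).
\end{equation*}
For $\textbf{W}$, the inner eigen-iteration driven by Eq.~(\ref{eq:14}) is the standard iterative procedure for the trace-ratio problem and is known to monotonically reduce the ratio $Tr(\textbf{W}^{\top}\textbf{X}\textbf{L}_{S}\textbf{X}^{\top}\textbf{W})/Tr(\textbf{W}^{\top}\textbf{XHX}^{\top}\textbf{W})$ until the K.K.T.~condition is met; I would quote this result from the cited reference. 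Chaining these block decreases gives
\begin{equation*}
\widetilde{\mathcal{J}}(\tilde{\textbf{S}},\tilde{\textbf{F}},\tilde{\textbf{Y}},\tilde{\textbf{Q}},\tilde{\textbf{W}},\tilde{\textbf{P}};\textbf{D}) \leq \widetilde{\mathcal{J}}(\textbf{S},\textbf{F},\textbf{Y},\textbf{Q},\textbf{W},\textbf{P};\textbf{D}).
\end{equation*}

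The delicate step is transferring this decrease from the surrogate $\widetilde{\mathcal{J}}$ back to the original $\mathcal{J}$ in the presence of the $\ell_{2,p}$ loss. Writing $\tilde{\textbf{R}} = \tilde{\textbf{Y}} - \textbf{X}^{\top}\tilde{\textbf{P}}$ and using Lemma~\ref{lemma:2} with $b=\|\textbf{r}_{i}\|_{2}$, $a=\|\tilde{\textbf{r}}_{i}\|_{2}$, we obtain
\begin{equation*}
\sum_{i}\|\tilde{\textbf{r}}_{i}\|_{2}^{p} - \tfrac{p}{2}\sum_{i}\tfrac{\|\tilde{\textbf{r}}_{i}\|_{2}^{2}}{\|\textbf{r}_{i}\|_{2}^{2-p}} \leq \sum_{i}\|\textbf{r}_{i}\|_{2}^{p} - \tfrac{p}{2}\sum_{i}\tfrac{\|\textbf{r}_{i}\|_{2}^{2}}{\|\textbf{r}_{i}\|_{2}^{2-p}},
\end{equation*}
i.e.\ $\|\tilde{\textbf{R}}\|_{2,p}^{p} - Tr(\tilde{\textbf{R}}^{\top}\textbf{D}\tilde{\textbf{R}}) \leq \|\textbf{R}\|_{2,p}^{p} - Tr(\textbf{R}^{\top}\textbf{D}\textbf{R})$ with $\textbf{D}$ built from the old residual. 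Adding this to the surrogate decrease, the $Tr(\textbf{R}^{\top}\textbf{D}\textbf{R})$ contributions cancel and leave $\mathcal{J}(\tilde{\textbf{S}},\tilde{\textbf{F}},\tilde{\textbf{Y}},\tilde{\textbf{Q}},\tilde{\textbf{W}},\tilde{\textbf{P}}) \leq \mathcal{J}(\textbf{S},\textbf{F},\textbf{Y},\textbf{Q},\textbf{W},\textbf{P})$. All remaining terms of $\mathcal{J}$ are either the same in the surrogate or of the form $\|\cdot\|_{\mathrm{F}}^{2}$, $\xi s_{ij}^{2}$, $Tr(\textbf{F}^{\top}\textbf{L}_{\mathrm{S}}\textbf{F})$ (which equals $\tfrac{1}{2}\sum_{i,j}\|\textbf{f}_{i}-\textbf{f}_{j}\|^{2}s_{ij}\geq 0$ whenever $s_{ij}\geq 0$), and a trace-ratio that is non-negative; hence $\mathcal{J} \geq 0$ and is bounded below.

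The hard part is precisely the $\ell_{2,p}$ re-weighting step: one must be careful that the diagonal $\textbf{D}$ is held fixed while $\textbf{P}$ and $\textbf{Y}$ are being updated (so that the surrogate is convex/quadratic and the block decrease is valid), and only then refreshed using the new residual so that Lemma~\ref{lemma:2} becomes applicable. Once this book-keeping is in place, I would conclude with the standard monotone-convergence argument: a monotonically non-increasing sequence that is bounded below converges, so $\{\mathcal{J}^{(t)}\}$ has a limit and the iterates of Algorithm~1 (adapted to problem~(\ref{eq:26})) are convergent in objective value, which is the statement of Theorem~\ref{theorem:1}.
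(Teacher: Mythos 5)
Your proposal is correct and follows essentially the same route as the paper: block-coordinate descent decreases the reweighted surrogate (\ref{eq:30}) with $\textbf{D}$ frozen at the old residual, and the summed IRLS inequality of Lemma~\ref{lemma:2} transfers that decrease back to the $\ell_{2,p}$ objective of (\ref{eq:26}), with the weighted trace terms cancelling exactly as in the paper's Eqs.~(\ref{eq:47})--(\ref{eq:51}). The only differences are cosmetic: the paper routes the graph-learning ratio term through an extra application of Lemma~\ref{lemma:0.5} (its Eqs.~(\ref{eq:43})--(\ref{eq:45})) where you treat it directly as a block decrease of the surrogate, and you add the lower-boundedness observation needed to conclude convergence of the objective sequence, which the paper leaves implicit.
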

\begin{proof}
\label{proof:2}
Let $\tilde{\textbf{Y}}, \tilde{\textbf{F}}, \tilde{\textbf{Q}}, \tilde{\textbf{W}}, \tilde{\textbf{P}}, \tilde{\textbf{S}}$ are the optimized solution of the alternative problem (\ref{eq:26}), and we denote
\begin{equation}
\begin{aligned}
\label{eq:42}
\left\{ \begin{array}{ll}
 \psi=\sum_{i,j=1}^{n}\frac {\parallel \textbf{W}^\top \textbf{x}_{i}-\textbf{W}^\top \textbf{x}_{j}\parallel_{\mathrm{F}}^{2}s_{ij}}{Tr(\textbf{W}^\top \textbf{XHX}^\top \textbf{W})},\\
 \tilde{\psi}=\sum_{i,j=1}^{n}\frac {\parallel \tilde{\textbf{W}}^\top \textbf{x}_{i}-\tilde{\textbf{W}}^\top \textbf{x}_{j}\parallel_{\mathrm{F}}^{2}\tilde{s_{ij}}}{Tr(\tilde{\textbf{W}}^\top \textbf{XHX}^\top \tilde{\textbf{W}})}, \\
 \end{array} \right.
\end{aligned}
\end{equation}
It is easy to know that:
\begin{equation}
\begin{aligned}
\label{eq:43}
\frac{p}{2}\frac{\tilde{\psi}^{\frac{2}{p}}}{(\psi^{\frac{2}{p}})^{\frac{2-p}{2}}}+\xi\parallel \tilde{\textbf{S}} \parallel_{\mathrm{F}}^{2} \leq \frac{p}{2}\frac{\psi^{\frac{2}{p}}}{(\psi^{\frac{2}{p}})^{\frac{2-p}{2}}}+\xi\parallel \textbf{S} \parallel_{\mathrm{F}}^{2}
\end{aligned}
\end{equation}
According to Lemma 1, we have
\begin{equation}
\begin{aligned}
\label{eq:44}
(\tilde{\psi}^{\frac{2}{p}})^{\frac{p}{2}}-\frac{p}{2}\frac{\tilde{\psi}^{\frac{2}{p}}}{(\psi^{\frac{2}{p}})^{\frac{2-p}{2}}} \leq
(\psi^{\frac{2}{p}})^{\frac{p}{2}}-\frac{p}{2}\frac{\psi^{\frac{2}{p}}}{(\psi^{\frac{2}{p}})^{\frac{2-p}{2}}}
\end{aligned}
\end{equation}
By summing over Eq.(\ref{eq:43}) and Eq.(\ref{eq:44}) in the two sides, we arrive at
\begin{equation}
\begin{aligned}
\label{eq:45}
\tilde{\psi}+\xi\parallel \tilde{\textbf{S}} \parallel_{\mathrm{F}}^{2} \leq \psi+\xi\parallel \textbf{S} \parallel_{\mathrm{F}}^{2}
\end{aligned}
\end{equation}
We also denote
\begin{equation}
\begin{aligned}
\label{eq:46}
\left\{ \begin{array}{ll}
 \jmath=Tr(\textbf{F}^\top \textbf{L}_{S}\textbf{F})+\alpha \parallel \textbf{Y}-\textbf{FQ} \parallel_{\mathrm{F}}^{2}+\beta\gamma \parallel \textbf{P} \parallel_{\mathrm{F}}^{2}\\
 \tilde{\jmath}=Tr(\tilde{\textbf{F}}^\top \tilde{\textbf{L}_{S}}\tilde{\textbf{F}})+\alpha \parallel \tilde{\textbf{Y}}-\tilde{\textbf{F}}\tilde{\textbf{Q}} \parallel_{\mathrm{F}}^{2}+\beta\gamma \parallel \tilde{\textbf{P}} \parallel_{\mathrm{F}}^{2}\\
 \end{array} \right.
\end{aligned}
\end{equation}
Then, we have
\begin{equation}
\begin{aligned}
\label{eq:47}
\tilde{\jmath}+\beta Tr(\tilde{\textbf{R}}^\top \textbf{D}\tilde{\textbf{R}}) \leq \jmath+\beta Tr(\textbf{R}^\top \textbf{DR})
\end{aligned}
\end{equation}
\begin{equation}
\begin{aligned}
\label{eq:48}
\Rightarrow \tilde{\jmath}+\beta\sum_{i=1}^{n}\frac{p\parallel \tilde{\textbf{r}_{i}}\parallel^{2}}{2\parallel \textbf{r}_{i}\parallel^{2-p}} \leq
\jmath+\beta\sum_{i=1}^{n}\frac{p\parallel \textbf{r}_{i}\parallel^{2}}{2\parallel \textbf{r}_{i}\parallel^{2-p}}
\end{aligned}
\end{equation}

\begin{equation}
\begin{aligned}
\label{eq:50}
\Rightarrow \tilde{\jmath}+\beta\sum_{i=1}^{n}\parallel \tilde{\textbf{r}_{i}} \parallel^{p}-\beta(\sum_{i=1}^{n}\parallel \tilde{\textbf{r}_{i}} \parallel^{p}-\sum_{i=1}^{n}\frac{p\parallel \tilde{\textbf{r}_{i}} \parallel^{2}}{2\parallel \textbf{r}_{i} \parallel^{2-p}}) \leq \\
\jmath+\beta\sum_{i=1}^{n}\parallel \textbf{r}_{i} \parallel^{p}-\beta(\sum_{i=1}^{n}\parallel \textbf{r}_{i} \parallel^{p}-\sum_{i=1}^{n}\frac{p\parallel \textbf{r}_{i} \parallel^{2}}{2\parallel \textbf{r}_{i} \parallel^{2-p}})
\end{aligned}
\end{equation}
With Lemma 3, we have
\begin{equation}
\begin{aligned}
\label{eq:51}
\tilde{\jmath}+\beta\sum_{i=1}^{n}\parallel \tilde{\textbf{r}_{i}} \parallel^{p} \leq \jmath+\beta\sum_{i=1}^{n}\parallel \textbf{r}_{i} \parallel^{p}
\end{aligned}
\end{equation}
By summing over Eq.(\ref{eq:45}) and Eq.(\ref{eq:51}) in the two sides, we arrive at
\begin{equation}
\begin{aligned}
\label{eq:52}
\tilde{\psi}+\xi\parallel \tilde{\textbf{S}} \parallel_{\mathrm{F}}^{2}+\tilde{\jmath}+\beta\sum_{i=1}^{n}\parallel \tilde{\textbf{r}_{i}} \parallel^{p} \leq \\
\psi+\xi\parallel \textbf{S} \parallel_{\mathrm{F}}^{2}+\jmath+\beta\sum_{i=1}^{n}\parallel \textbf{r}_{i} \parallel^{p}
\end{aligned}
\end{equation}
This equation indicates that the monotonic decreasing trend of the objective function in Eq.(\ref{eq:26}) in each iteration.
\end{proof}
\begin{table}
\caption{Discriptions of 12 datasets}
{
\centering
\begin{tabular}{|m{20mm}<{\centering}|m{14mm}<{\centering}|m{16mm}<{\centering}|m{17mm}<{\centering}|}
\hline
{Datasets} & {Sample} & {Feature} & {Class}\\
\hline
Solar&322&12&6\\
\hline
Vehicle&846&18&4\\
\hline
Vote&434&16&2\\
\hline
Ecoli&336&7&8\\
\hline
Wine&178&13&3\\
\hline
Glass&214&9&6\\
\hline
Lenses&24&4&3\\
\hline
Heart&270&13&2\\
\hline
Zoo&101&16&7\\
\hline
Cars&392&8&3\\
\hline
Auto&205&25&6\\
\hline
Balance&625&4&3\\
\hline
\end{tabular}}
\label{table:8}
\end{table}

\section{Experimental configuration}
In this section, we introduce the experimental settings, including experiments datasets, baselines, evaluation metric and implementation details.
\subsection{Experimental datasets}
The experiments are conducted on 12 publicly available datasets, including eight object datasets (i.e., Wine, Ecoli, Vehicle, Auto, Glass, Lenses, Zoo, Cars), one disease dataset (i.e. Heart), one dataset to model psychological experiments (i.e. Balance), one dataset for voting election (i.e. Vote) and one dataset for describing the change about the number of solar flares. All these datasets can be obtained from UCI repository (http://archive.ics.uci.edu/ml/datasets). The descriptions of these 12 datasets are summarized in Table \ref{table:8}.
\subsection{Evaluation Baselines}
In experiments, we compare the proposed DOGC and DOGC-OS with the following clustering methods:
\begin{itemize}
\item \textbf{KM\cite{DBLP:journals/J.B.MacQueen}:} KM learns clustering model by jointly minimizing the distances of similar samples and maximizing that of dissimilar samples.
\item \textbf{R-cut\cite{DBLP:journals/tcad/HagenK92}, N-cut\cite{DBLP:conf/cvpr/ShiM97}:} In this two methods, clusters are represented with subgraphs. R-cut and N-cut simultaneously maximize the weights between the same subgraphs and minimize the weights between different subgraphs.
\item \textbf{NMF\cite{DBLP:conf/icdm/LiD06}:} It first decomposes the nonnegative feature matrix into the product of two nonnegative matrices. Then, k-means is performed on the one of nonnegative matrix with lower matrix dimension to calculate the cluster labels.
\item \textbf{CLR\cite{DBLP:conf/aaai/NieWJH16}:} CLR has two variants: CLR0 and CLR1. The former supports L1-norm regularization term and the latter supports the L2-norm. Instead of using a fixed input similarity matrix, they both first learn the similarity matrix $\textbf{S}$ with exact $c$ connected components based on fixed similarity matrix $\textbf{A}$. Then, graph cut is performed on $\textbf{S}$ to calculate the final cluster labels.
\item \textbf{CAN\cite{DBLP:conf/kdd/NieWH14}:} CAN learns the data similarity matrix by assigning the adaptive neighbors for each data point based on local distances. It imposes the rank constraint on the Laplacian matrix of similarity graph, such that the number of connected components in the resulted similarity matrix is exactly equal to the cluster number.
\item \textbf{PCAN\cite{DBLP:conf/kdd/NieWH14}:} Derived from CAN, PCAN improves its performance further by simultaneously performing subspace discovery, similarity graph learning and clustering.
\end{itemize}

\subsection{Evaluation Metrics}
We employ {Normalized Mutual Information (NMI)}, {Accuracy (ACC)} and {Purity} as main evaluation metrics.
\begin{itemize}
\item \textbf{NMI:} We first define normalized mutual information of two distributions $\tilde{A}$ and $\tilde{B}$ as below:
\begin{equation}
\label{eq:53}
\begin{aligned}
NMI(\tilde{A},\tilde{B})=\frac{H(\tilde{A},\tilde{B})}{\sqrt{\Psi(\tilde{A})\Psi(\tilde{B})}},
\end{aligned}
\end{equation}
\noindent where $H(\tilde{A},\tilde{B})$ computes the mutual information of $\tilde{A}$ and $\tilde{B}$. $\Psi(\cdot)$ is the entropy of a distribution. Denote $n_{i}$ as the number of datums in the $i_{th}$ cluster $\textbf{C}_{i}$ generated by a clustering algorithm, $\hat{n_{j}}$ as the number of data points in the $j_{th}$ ground truth class $\textbf{G}_{j}$, $n_{ij}$ as the number of data occurring in both $\textbf{C}_{i}$ and $\textbf{G}_{j}$. Then, NMI is calculated as follows:
\begin{equation}
\label{eq:54}
\begin{aligned}
NMI=\frac{\sum_{i=1}^{c}\sum_{j=1}^{c}n_{i,j}\log(\frac{n\times n_{i,j}}{n_{i}\hat{n_{j}}})}{\sqrt{(\sum_{i=1}^{c}n_{i}\log\frac{n_{i}}{n})(\sum_{j=1}^{c}\hat{n_{j}}\log\frac{\hat{n_{j}}}{n})}}.
\end{aligned}
\end{equation}
\noindent Larger NMI values indicate better clustering performance.
\item \textbf{ACC:} Denote $\textbf{y}_{i}$ as the resultant cluster label of $\textbf{x}_{i}$ using certain clustering method and $\textbf{g}_{i}$
as the ground truth of $\textbf{x}_{i}$, then we have
\begin{equation}
\label{eq:555}
\begin{aligned}
ACC=\frac{\sum_{i}\delta(\textbf{y}_{i},map(\textbf{g}_{i}))}{n},
\end{aligned}
\end{equation}
\noindent where $\delta(x, y) = 1$ if $x = y$, $\delta(x, y) = 0$ otherwise, and $map(\textbf{g}_{i})$ is the best mapping function that permutes cluster labels to match the ground truth labels. Larger ACC values indicate better clustering performance.
\item \textbf{Purity:} Apart from ACC and NMI, purity is another popularly used evaluation
metric. For ground-truth set $\bm{\mu}=\big\{ \mu_{1},\mu_{2},\ldots,\mu_{n} \big\}$ and clustering result set $\bm{\nu}=\big\{ \nu_{1},\nu_{2},\ldots,\nu_{n} \big\}$, the purity is computed by first assigning each cluster to the class which is the most frequent in the cluster, and then counting the number of correctly assigned objects, finally dividing by $n$:
\begin{equation}
\label{eq:55}
\begin{aligned}
Purity(\bm{\mu},\bm{\nu})=\frac{1}{n}\Sigma_{k}\max_{j}|\nu_{k} \cap \mu_{j}|
\end{aligned}
\end{equation}
Similar to ACC and NMI evaluation metric, the higher the purity, the better clustering performance.
\end{itemize}
\subsection{Implementation Details}
In the experiment, we set the number of clusters to be the ground truth in each dataset. The parameters of all compared algorithms are in arrange of $\big\{10^{-6}, 10^{-4}, 10^{-2}, 1, 10^{2}, 10^{4} \big\}$. For those methods calling for a fixed similarity matrix as an input, like Ratio Cut, Normalized Cut, CLR0, CLR1 and NMF, the graph is constructed with the Gaussian kernel function. As for CAN, PCAN, DOGC and DOGC-OS, we randomly initialize their involved variables. We repeat the clustering process 100 times independently to perform all the methods and record the best result. The best performance of DOGC-OS and DOGC is achieved when $k$ is set to around $\frac{1}{10}$ of the total amount of each dataset. In DOGC, there is only one parameter $\alpha$. When $\alpha$ ranges in $\big\{10^{-6}, 10^{-4}, 10^{-2} \big\}$, we record the best result of DOGC on each dataset. In DOGC-OS, there are three parameters: $\alpha$, $\beta$ and $\gamma$. With $\alpha$ ranging in $\big\{10^{-4}, 10^{-2} \big\}$, we will obtain a generally optimal result on each dataset. We further optimize the results by fixing $\alpha$ and adjusting $\beta$. $\alpha$ is mainly used for discrete label learning, and $\beta$ is a parameter that controls the projection from the raw data to the final cluster labels. The balance of $\alpha$ and $\beta$ is crucial. $\gamma$ is adjusted while the overfitting problem arises. When predicting the new data, we set $\gamma$ to 0.1 or 1. When we pour all the data into model to perform training, we set $\gamma$ to 0.0001.
\begin{table*}
\vspace{-1mm}
\caption{ACC on Real Datasets}
\label{table:9}
\centering
\begin{tabular}{|c|c|c|c|c|c|c|c|c|c|c|}
\hline
\multirow{1}{*}{{Methods}} &\multicolumn{1}{c|}{\textbf{KM}} & \multicolumn{1}{c|}{\textbf{R-Cut}} & \multicolumn{1}{c|}{\textbf{N-Cut}}&\multicolumn{1}{c|}{\textbf{CLR0}}&\multicolumn{1}{c|}{\textbf{CLR1}}&
\multicolumn{1}{c|}{\textbf{NMF}}&\multicolumn{1}{c|}{\textbf{CAN}}&\multicolumn{1}{c|}{\textbf{PCAN}}&
\multicolumn{1}{c|}{\textbf{DOGC}}&\multicolumn{1}{c|}{\textbf{DOGC-OS}}\\
\hline
    Solar & 0.5182 & 0.3498 & 0.3932 & 0.2879 & 0.2879 & 0.5201 & 0.5448 & 0.4396 & 0.4789 & \textbf{0.6149}\\ \hline
    Vehicle & 0.4527 & 0.4598 & 0.4598 & 0.4101 & 0.4101 & 0.4433 & 0.4468 & 0.4527 & 0.4276 & \textbf{0.5532}\\ \hline
    Vote & 0.8345 & 0.5701 & 0.5701 & 0.6206 & 0.6206 & 0.8092 & 0.8667 & 0.9218 & 0.9401 & \textbf{0.9562}\\ \hline
    Ecoli & 0.7679 & 0.5417 & 0.5476 & 0.5565 & 0.5922 & 0.6101 & 0.8053 & 0.8274 & 0.8125 & \textbf{0.8631}\\ \hline
    Wine & 0.7022 & 0.6180 & 0.6180 & 0.5168 & 0.5168 & 0.6685 & 0.9494 & 0.9940 & \textbf{0.9944} & 0.9831\\ \hline
    Glass & 0.5561 & 0.3828 & 0.3826 & 0.4392 & 0.5093 & 0.3785 & 0.5000 & 0.4953 & \textbf{0.6075} & 0.5981\\ \hline
    Lenses & 0.6250 & 0.5000 & 0.5000 & 0.5000 & 0.4583 & 0.6667 & 0.7624 & \textbf{0.8750} & \textbf{0.8750} & \textbf{0.8750}\\ \hline
    Heart & 0.5926 & 0.6259 & 0.6296 & 0.6222 & 0.6111 & 0.6296 & 0.5963 & 0.7148 & \textbf{0.8556} & 0.8370\\ \hline
    Zoo & 0.8416 & 0.5149 & 0.5149 & 0.4455 & 0.4455 & 0.8020 & 0.7921 & 0.8218 & \textbf{0.8916} & 0.8812\\ \hline
    Cars & 0.4490 & 0.6301 & 0.6378 & 0.6173 & 0.6173 & 0.6122 & 0.6250 & 0.5791 & 0.6039 & \textbf{0.6735}\\ \hline
    Auto & 0.3659 & 0.3220 & 0.3171 & 0.3610 & 0.3610 & 0.3415 & 0.3463 & 0.3268 & 0.4146 & \textbf{0.4488}\\ \hline
    Balance & 0.6400 & 0.5296 & 0.5264 & 0.6368 & 0.6624 & 0.6656 & 0.5616 & 0.5936 & 0.6823 & \textbf{0.7312}\\ \hline
\end{tabular}
\vspace{-5mm}
\end{table*}
\begin{table*}
\caption{NMI on Real Datasets}
\label{table:10}
\centering
\begin{tabular}{|c|c|c|c|c|c|c|c|c|c|c|}
\hline
\multirow{1}{*}{{Methods}} & \multicolumn{1}{c|}{\textbf{KM}} & \multicolumn{1}{c|}{\textbf{R-Cut}} & \multicolumn{1}{c|}{\textbf{N-Cut}}&\multicolumn{1}{c|}{\textbf{CLR0}}&\multicolumn{1}{c|}{\textbf{CLR1}}&
\multicolumn{1}{c|}{\textbf{NMF}}&\multicolumn{1}{c|}{\textbf{CAN}}&\multicolumn{1}{c|}{\textbf{PCAN}}&
\multicolumn{1}{c|}{\textbf{DOGC}}&\multicolumn{1}{c|}{\textbf{DOGC-OS}}\\
\hline
    Solar & 0.4131 & 0.1884 & 0.2119 & 0.2618 & 0.2618 & 0.3375 & 0.3869 & 0.2549 & 0.2876 & \textbf{0.4219}\\ \hline
    Vehicle & 0.1800 & 0.1881 & 0.1928 & 0.1574 & 0.1574 & 0.1369 & 0.2070 & 0.0530 & 0.1980 & \textbf{0.2370}\\ \hline
    Vote & 0.3658 & 0.0745 & 0.0745 & 0.1275 & 0.1275 & 0.3067 & 0.4320 & 0.5888 & 0.6616 & \textbf{0.6852}\\ \hline
    Ecoli & 0.5606 & 0.5207 & 0.5210 & 0.5173 & 0.4836 & 0.5045 & 0.7220 & 0.7244 & 0.6425 & \textbf{0.6901}\\ \hline
    Wine & 0.8385 & 0.8562 & 0.8792 & 0.3144 & 0.3144 & 0.8324 & 0.8897 & 0.9425 & \textbf{0.9729} & 0.9261\\ \hline
    Glass & 0.3575 & 0.3215 & 0.2858 & 0.3266 & 0.3266 & 0.2870 & 0.2691 & 0.3382 & \textbf{0.3601} & 0.3575\\ \hline
    Lenses & 0.4696 & 0.2197 & 0.1619 & 0.1619 & 0.1396 & 0.3097 & 0.3977 & 0.6227 & \textbf{0.6652} & \textbf{0.6652}\\ \hline
    Heart & 0.0190 & 0.0437 & 0.0482 & 0.0350 & 0.0357 & 0.0494 & 0.1227 & 0.1330 & \textbf{0.4042} & 0.3556\\ \hline
    Zoo & 0.7803 & 0.5926 & 0.6119 & 0.3770 & 0.3770 & 0.7483 & 0.7446 & 0.7366 & 0.8168 & \textbf{0.8259}\\ \hline
    Cars & 0.1910 & 0.1948 & 0.2020 & 0.2025 & 0.2025 & 0.1713 & 0.2747 & 0.2686 & 0.2724 & \textbf{0.2926}\\ \hline
    Auto & 0.1090 & 0.1346 & 0.1331 & 0.1690 & 0.1691 & 0.0285 & 0.0426 & 0.0703 & \textbf{0.2257} & 0.2074\\ \hline
    Balance & 0.2966 & 0.1469 & 0.1432 & 0.0922 & 0.1147 & 0.2259 & 0.1510 & 0.1221 & 0.2831 & \textbf{0.3093}\\ \hline
\end{tabular}
\vspace{-2mm}
\end{table*}
\section{Experimental results}
In this section, we evaluate the performance of the proposed methods on both synthetic and real datasets. First, we compare our method with the baselines on 12 real datasets. Then, we demonstrate the effects of the proposed methods on discrete label learning, optimal graph learning, projective subspace learning, and out-of-sample extension. Next, parameter experiment is carried out to evaluate the robustness of the proposed methods. Finally, the convergence of the proposed methods is verified by the experimental results.
\begin{figure*}
\centering
\subfigure[KM]{\includegraphics[width=38.5mm]{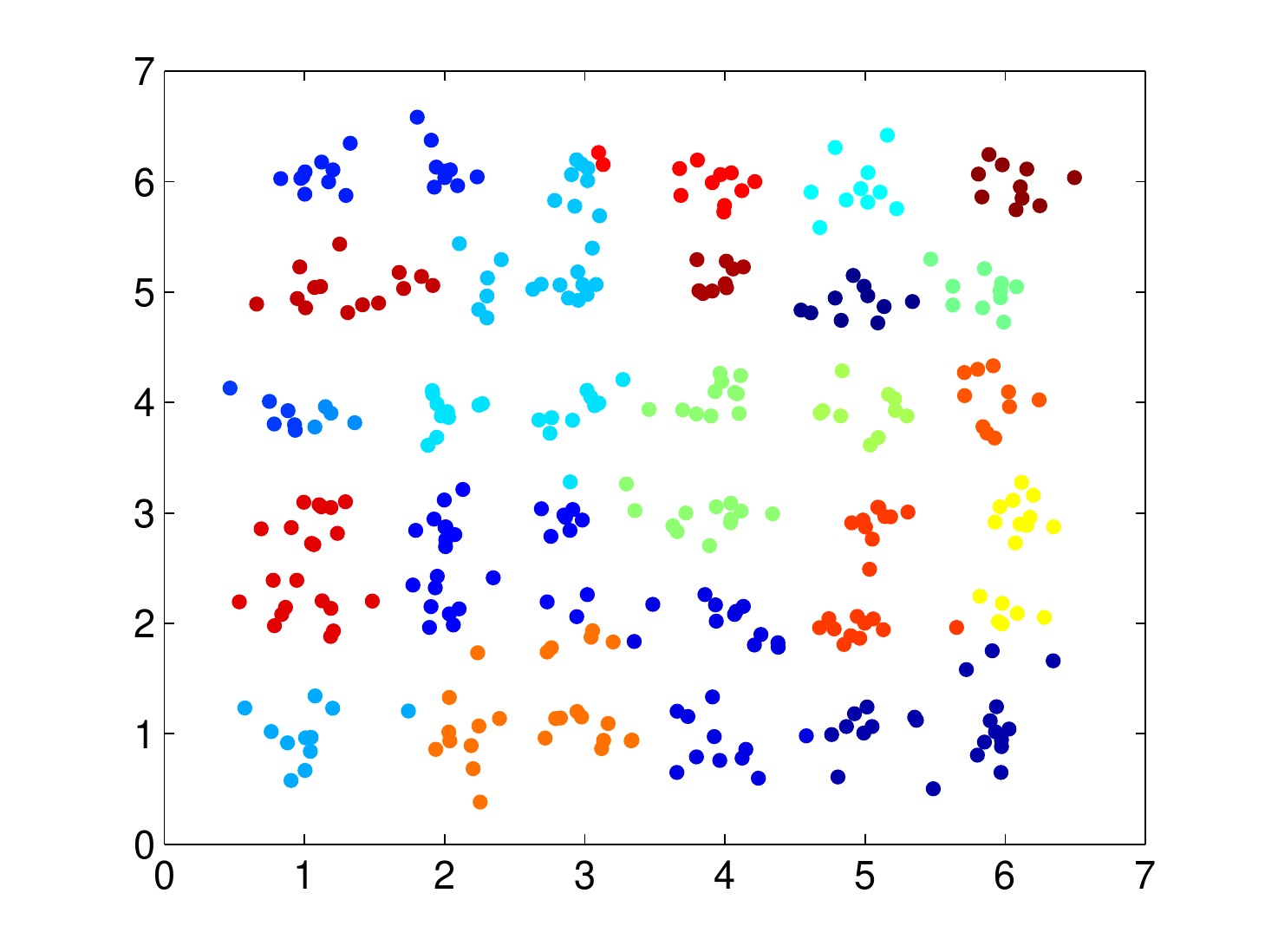}}
\hspace{-0.2in}
\subfigure[DOGC-\uppercase\expandafter{\romannumeral1}]{\includegraphics[width=38.5mm]{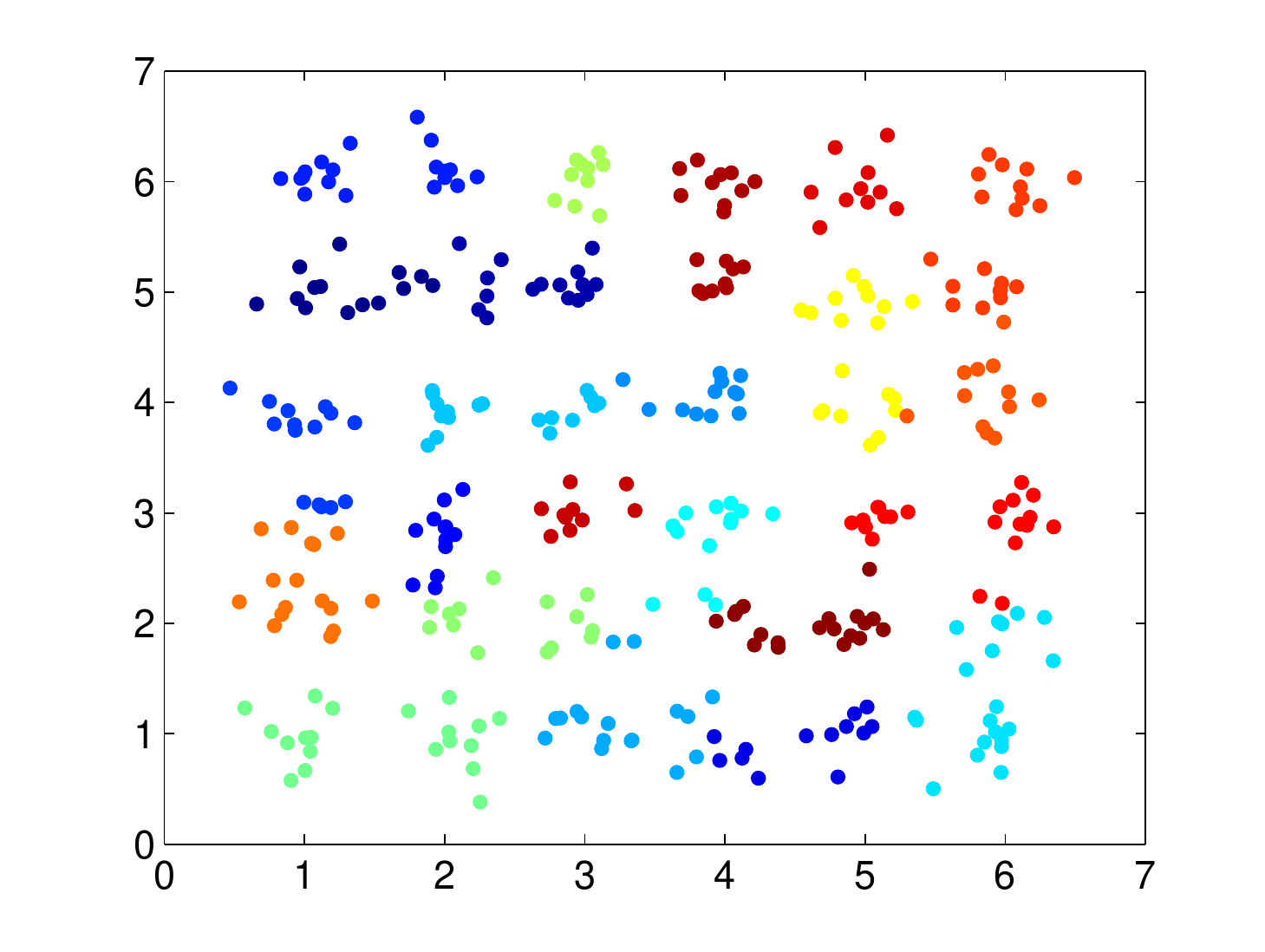}}
\hspace{-0.2in}
\subfigure[DOGC-OS]{\includegraphics[width=38.5mm]{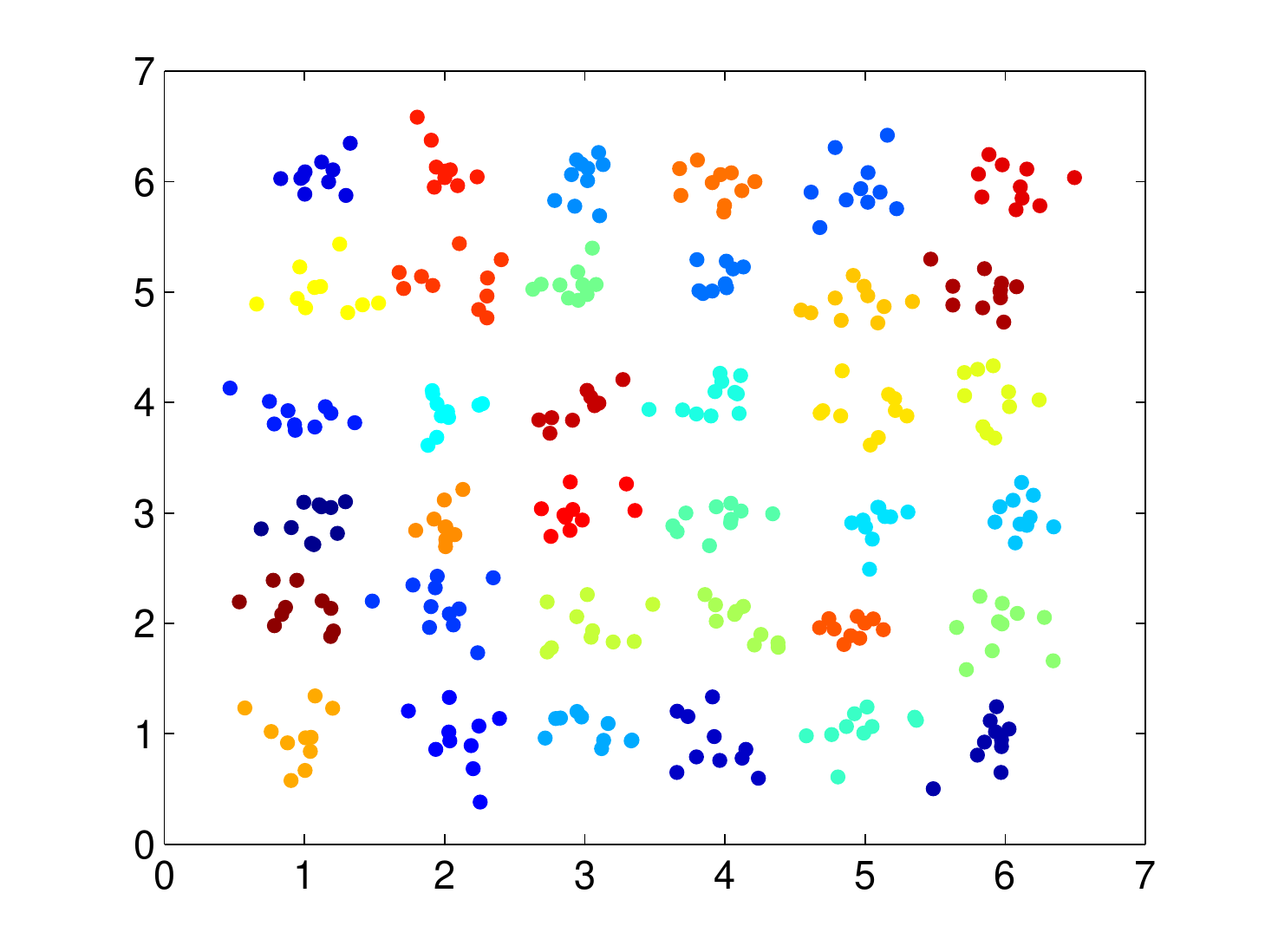}}
\hspace{-0.2in}
\subfigure[]{\includegraphics[width=38.5mm]{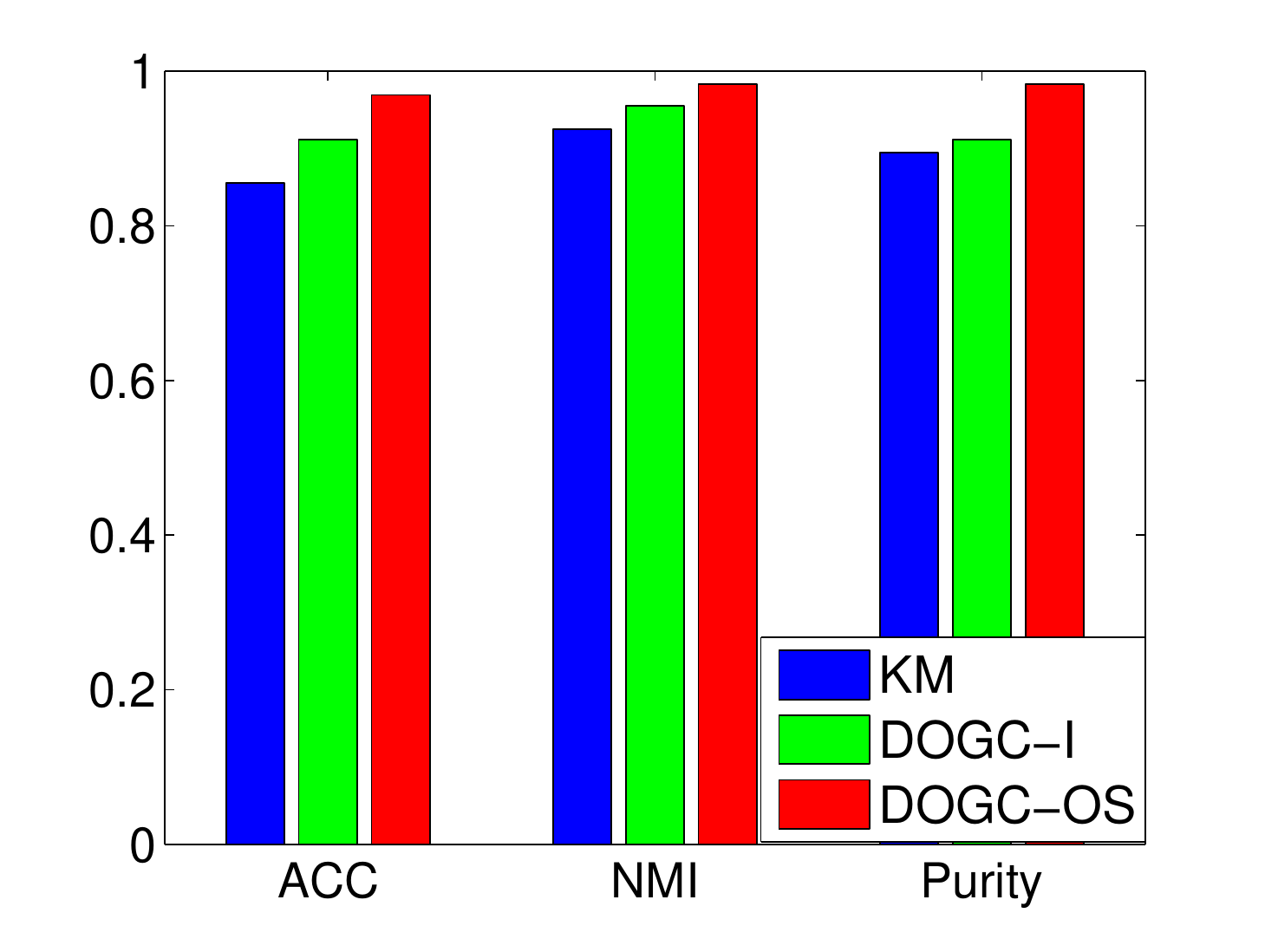}}
\hspace{-0.2in}
\subfigure[Noise sensitivity (ACC)]{\includegraphics[width=38.5mm]{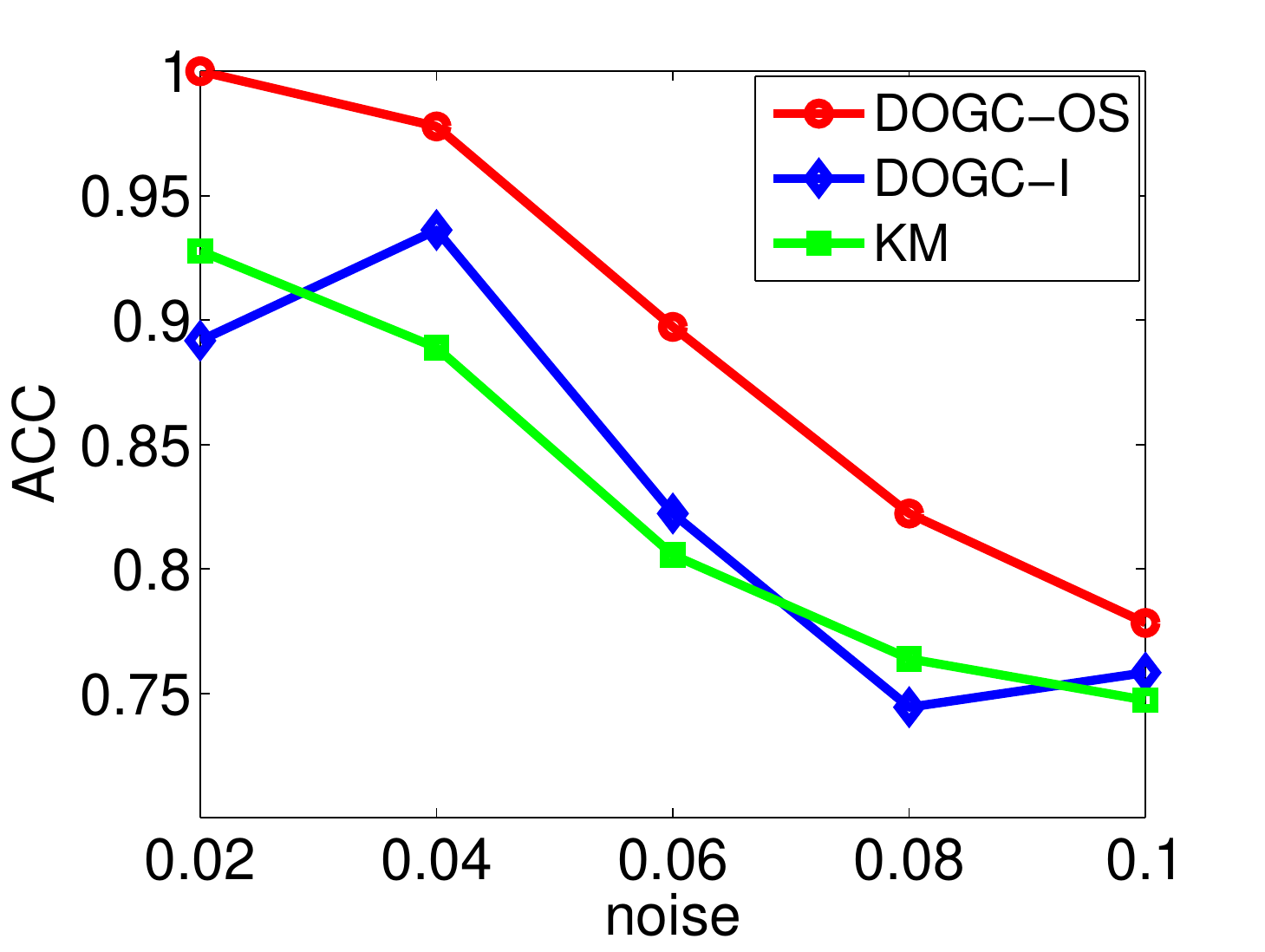}}
\caption{Clustering results on the 36 multi-clusters synthetic data.}
\label{fig:1}
\vspace{-4mm}
\end{figure*}
\begin{figure*}
\centering
\subfigure[KM]{\includegraphics[width=45mm]{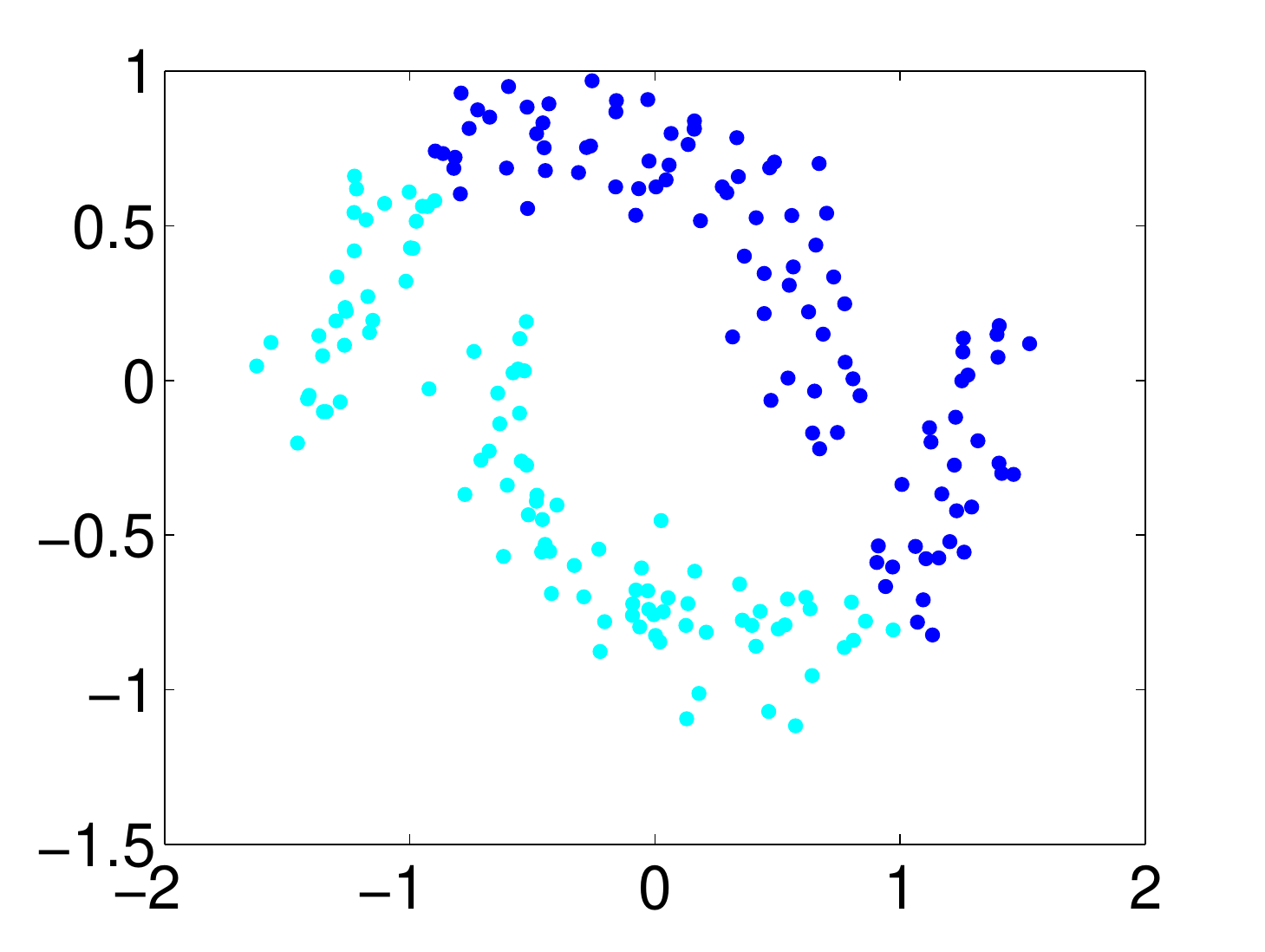}}
\hspace{-0.21in}
\subfigure[DOGC-\uppercase\expandafter{\romannumeral2}]{\includegraphics[width=45mm]{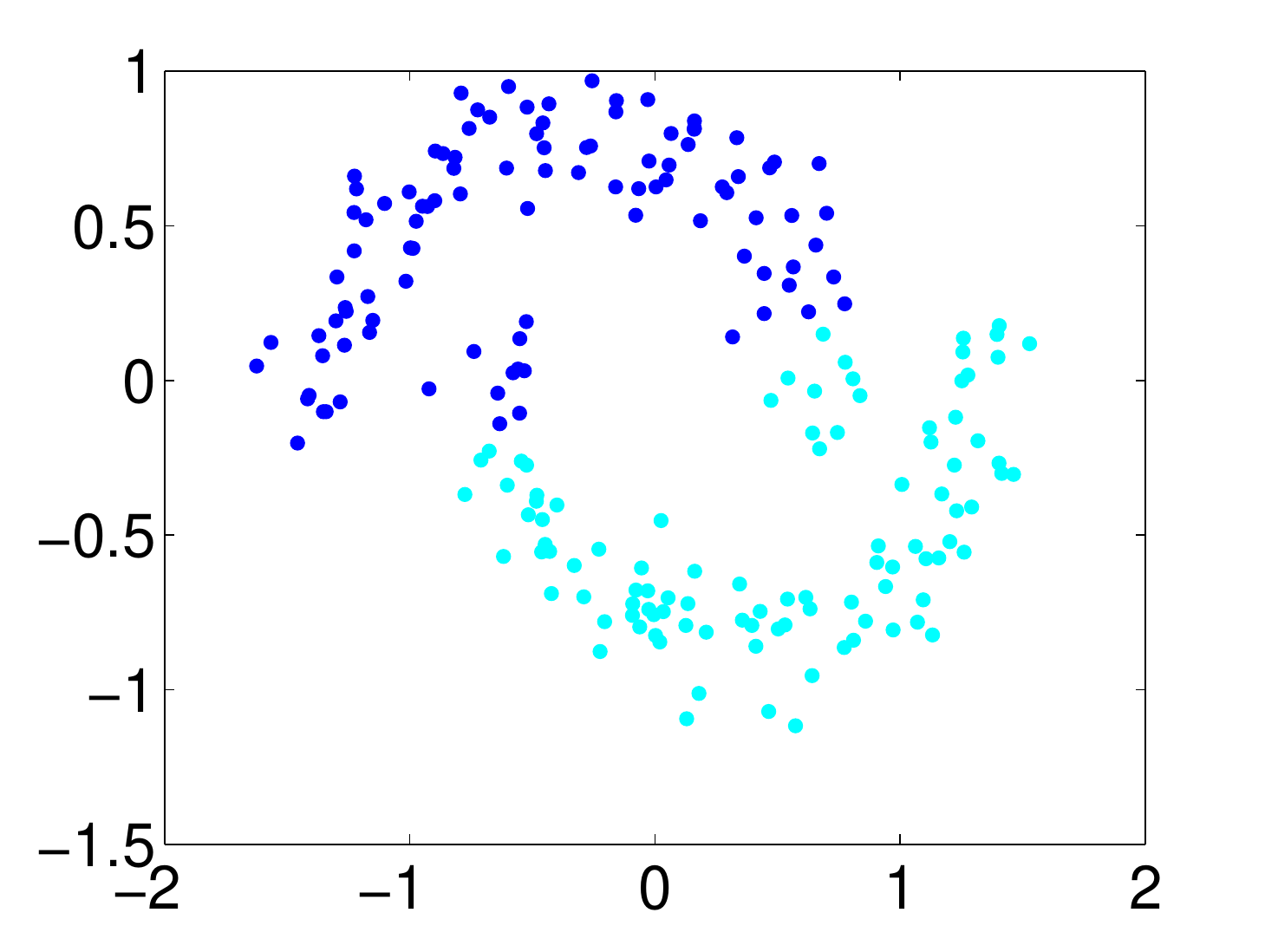}}
\hspace{-0.21in}
\subfigure[DOGC-OS]{\includegraphics[width=45mm]{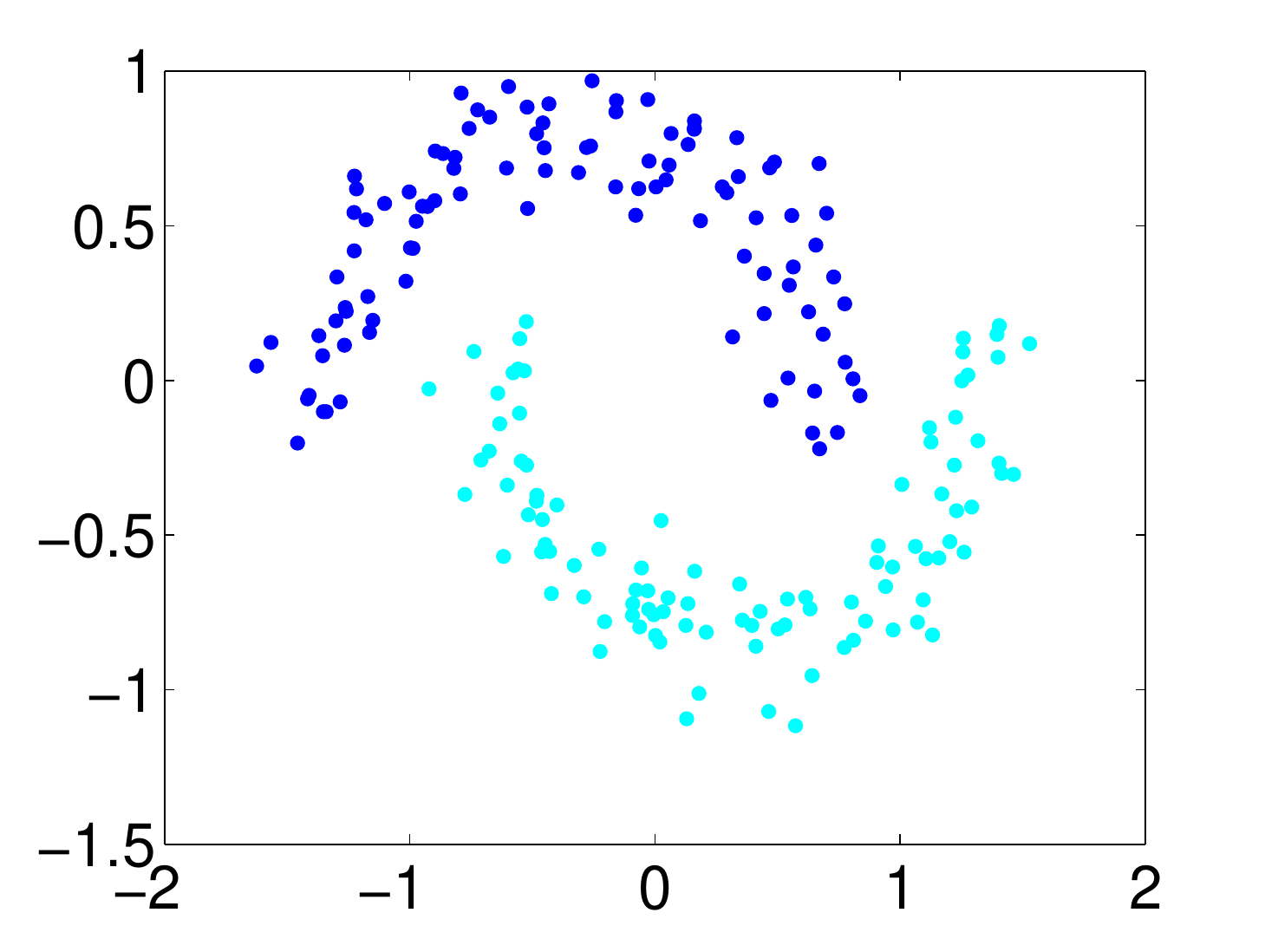}}
\hspace{-0.21in}
\subfigure[]{\includegraphics[width=45mm]{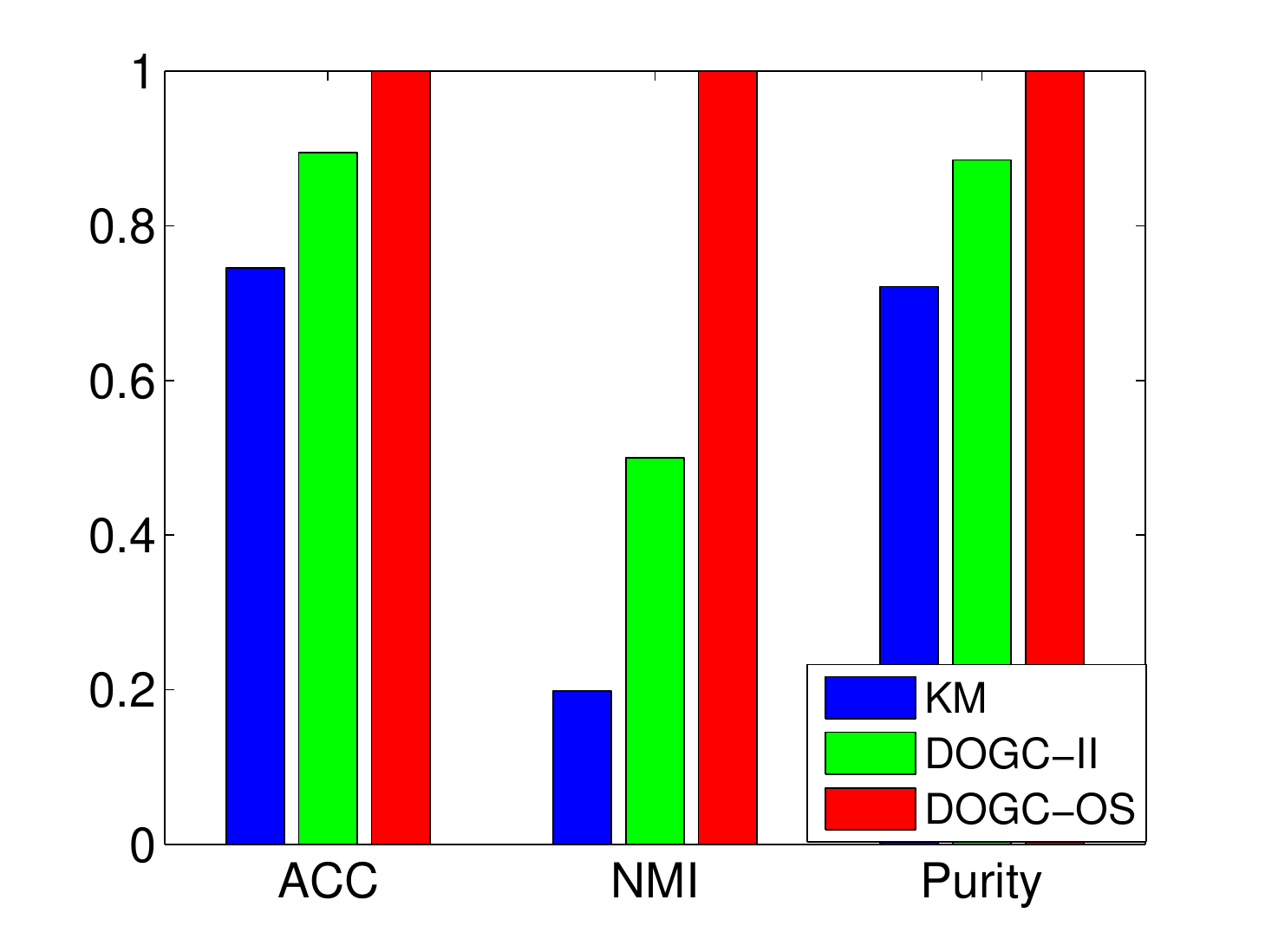}}
\caption{Clustering results on the two-moon synthetic data.}
\label{fig:2}
\end{figure*}

\subsection{Performance Comparison}
Table \ref{table:9} and Table \ref{table:10} present the main ACC and NMI comparison. The presented results clearly demonstrate that DOGC and DOGC-OS consistently outperform the compared approaches on all real datasets. On Wine, the accuracy of DOGC can nearly reach 1. On most datasets, our methods outperform the second best baseline by more than 0.02. In particular, DOGC-OS achieves an amazing improvement of 0.1408 on Heart compared to the second best baseline PCAN. Compare with the best clustering method PCAN on Ecoli, the proposed DOGC-OS obtains an absolute improvement of 0.0357. In compared approaches, graph based clustering methods without optimal graph learning generally achieve worse performance. This may attribute to their fixed similarity graph which is not optimal for the subsequent clustering. In addition, it is interesting to find that optimal graph clustering methods may not obtain better performance than the graph based approaches in certain case. This may be because that the insufficient input samples in these datasets cannot provide enough information for learning a well structured graph for clustering. Under this circumstance, the performance of CAN and PCAN may be impaired. Finally, we analyze why
DOGC-OS outperforms DOGC on some datsets, in the DOGC-OS model, the final clustering indicator matrix $\textbf{Y}$ stems from two transformations $\parallel \textbf{Y}-\textbf{FQ} \parallel_{\mathrm{F}}^{2}$ and $\parallel \textbf{Y} - \textbf{X}^\top \textbf{P} \parallel_{\mathrm{F}}^{2}$. Only $\parallel \textbf{Y}-\textbf{FQ} \parallel_{\mathrm{F}}^{2}$ is in DOGC. It can be seen that DOGC-OS whose $\textbf{Y}$ is under two transformations guidance should be better than DOGC that is guided from $\parallel \textbf{Y}-\textbf{FQ} \parallel_{\mathrm{F}}^{2}$ only.

\subsection{Effects of Discrete Label Learning}
Our methods can directly solve discrete cluster labels without any relaxing. To evaluate the effects of discrete label learning, we compare the performance of DOGC-OS with a variant of our method DOGC-I that relaxes the discrete labels to continuous labels on 36 muti-clusters synthetic dataset. This synthetic dataset is a randomly generated multi-cluster data, there are 36 clusters distributed in a spherical way. Figure \ref{fig:1} and Table \ref{table:6} show the experimental results. From them, we can clearly observe that DOGC-OS fully separates the data (as shown in Figure \ref{fig:1}) and achieves superior clustering performance than DOGC-\uppercase\expandafter{\romannumeral1} on 5 UCI real datasets (as shown in Table \ref{table:6}). Further, we set noise level of 36 muti-clusters synthetic dataset in the range from 0.02 to 0.1 with the interval of 0.01 and observe the performance. We run k-means, DOGC-\uppercase\expandafter{\romannumeral1} and DOGC-OS 100 times and report the best result. Figure \ref{fig:1} reports the results. From it, we can find that DOGC-OS consistently achieves higher clustering accuracy than that of DOGC-\uppercase\expandafter{\romannumeral1} and k-means under different noise levels.
\begin{table*}
\begin{floatrow}
\capbtabbox{
 \hspace{-4mm}
 \begin{tabular}{|c|c|c|c|c|c|c|c|}
 \hline
\multirow{2}{*}{{Methods}} & \multicolumn{3}{c|}{{ACC}} & \multicolumn{3}{c|}{{NMI}}\\
\cline{2-7}
& DOGC-\uppercase\expandafter{\romannumeral1}  &DOGC-\uppercase\expandafter{\romannumeral2} & DOGC-OS &DOGC-\uppercase\expandafter{\romannumeral1}  &DOGC-\uppercase\expandafter{\romannumeral2} & DOGC-OS  \\
\hline
Wine & 0.9752 & 0.9752 & \textbf{0.9831} & 0.8750 & 0.8989 & \textbf{0.9261} \\
\hline
Solar & 0.5627 & 0.6111 & \textbf{0.6149} & 0.3998& 0.4169 & \textbf{0.4219}\\
\hline
Vehicle & 0.4785 & 0.5532 & \textbf{0.5532} & 0.1980& 0.2261 & \textbf{0.2370}\\
\hline
Vote & 0.9382 & 0.9494 & \textbf{0.9562} & 0.7364 & 0.6784 & \textbf{0.6852}\\
\hline
Ecoli& 0.7981 & 0.8251 & \textbf{0.8631} & 0.5742 & 0.5898 & \textbf{0.6901}\\
\hline
 \end{tabular}}
{
 \caption{Performance of DOGC-\uppercase\expandafter{\romannumeral1}, DOGC-\uppercase\expandafter{\romannumeral2} and DOGC-OS on five UCI datasets. The best result is marked with bold.}
 \label{table:6}
}
\end{floatrow}
\end{table*}
\subsection{Effects of Optimal Graph Learning}
In this subsection, we conduct experiment to investigate the effects of optimal graph learning in our methods. To this end, we compare the performance of DOGC-OS with a variant of our methods DOGC-\uppercase\expandafter{\romannumeral2} that removes optimal graph learning function on the two-moon synthetic dataset. In experiments, DOGC-\uppercase\expandafter{\romannumeral2} exploits a fixed similarity graph for input. The two-moon data is randomly generated and there are two data clusters distributed in two-moon shape. Our goal is to divide the data points into exact two clusters.
Figure \ref{fig:2} and Table \ref{table:6} show the experimental results. From them, we can clearly observe that our methods can clearly partition the two-moon data (as shown in Figure \ref{fig:2}) and achieve superior clustering performance than DOGC-\uppercase\expandafter{\romannumeral2} on 5 UCI real datasets (as shown in Table \ref{table:6}). These results demonstrate that the optimal graph learning in our methods can indeed discover the intrinsic data structure and thus improve the clustering methods.
\subsection{Effects of Projective Subspace Learning}
Both DOGC and DOGC-OS can discover a discriminative subspace for data clustering. To validate the effects of projective subspace learning, we compare our methods with PCA\cite{DBLP:conf/icisc/SouissiNGDF10} and LPP\cite{DBLP:conf/nips/HeN03} on two-Gaussian data\cite{DBLP:conf/nips/Zelnik-ManorP04}. In this dataset, two clusters of data are randomly generated to obey the Gaussian distribution. In experiments, we observe their separation capability by varying the distance of two clusters. Figure \ref{fig:3} shows the main results. From it, we can observe that all these four methods can easily find a proper projection direction when two clusters are far from each other. However, as the distance between these two clusters reduces down, PCA becomes ineffective. As the two clusters become closer, LPP fails to achieve the projection goal. However, both DOGC and DOGC-OS always perform well. Theoretically, PCA only focuses on the global structure. Thus it will fail immediately when two clusters become closer. LPP pays more attention on preserving the local structure. It could still achieve satisfactory performance when two clusters are relatively close. Nevertheless, when the distance of two clusters becomes fairly small, LPP is also incapable any more. Different from them, DOGC and DOGC-OS can always keep a satisfactory separation capability consistently as they could identify a discriminative projective subspace with the force of reasonable rank constraint.
\begin{figure*}
\centering
\subfigure[Clusters are far away]{\includegraphics[width=60mm]{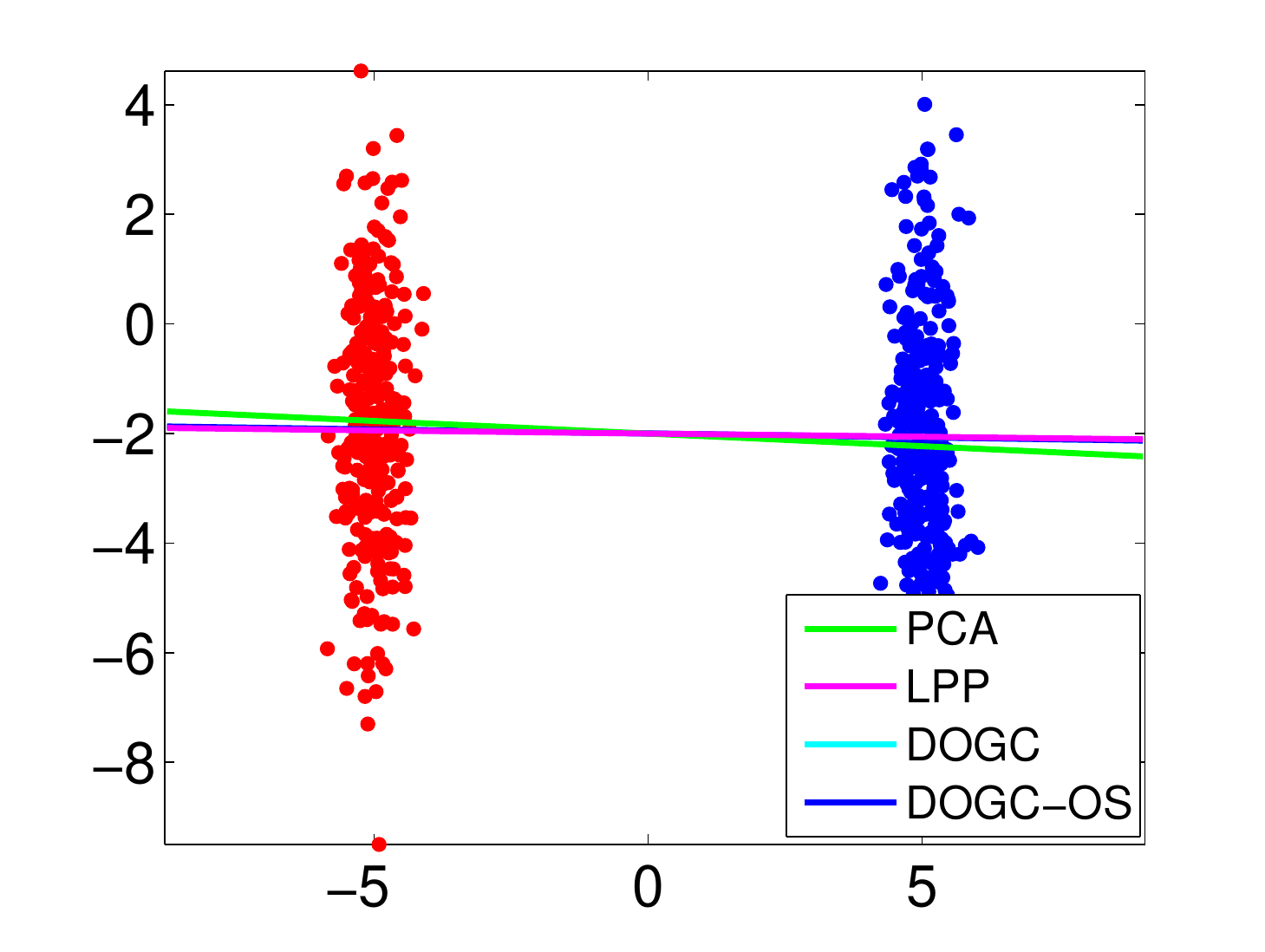}}
\hspace{-0.3in}
\subfigure[Clusters are relatively close]{\includegraphics[width=60mm]{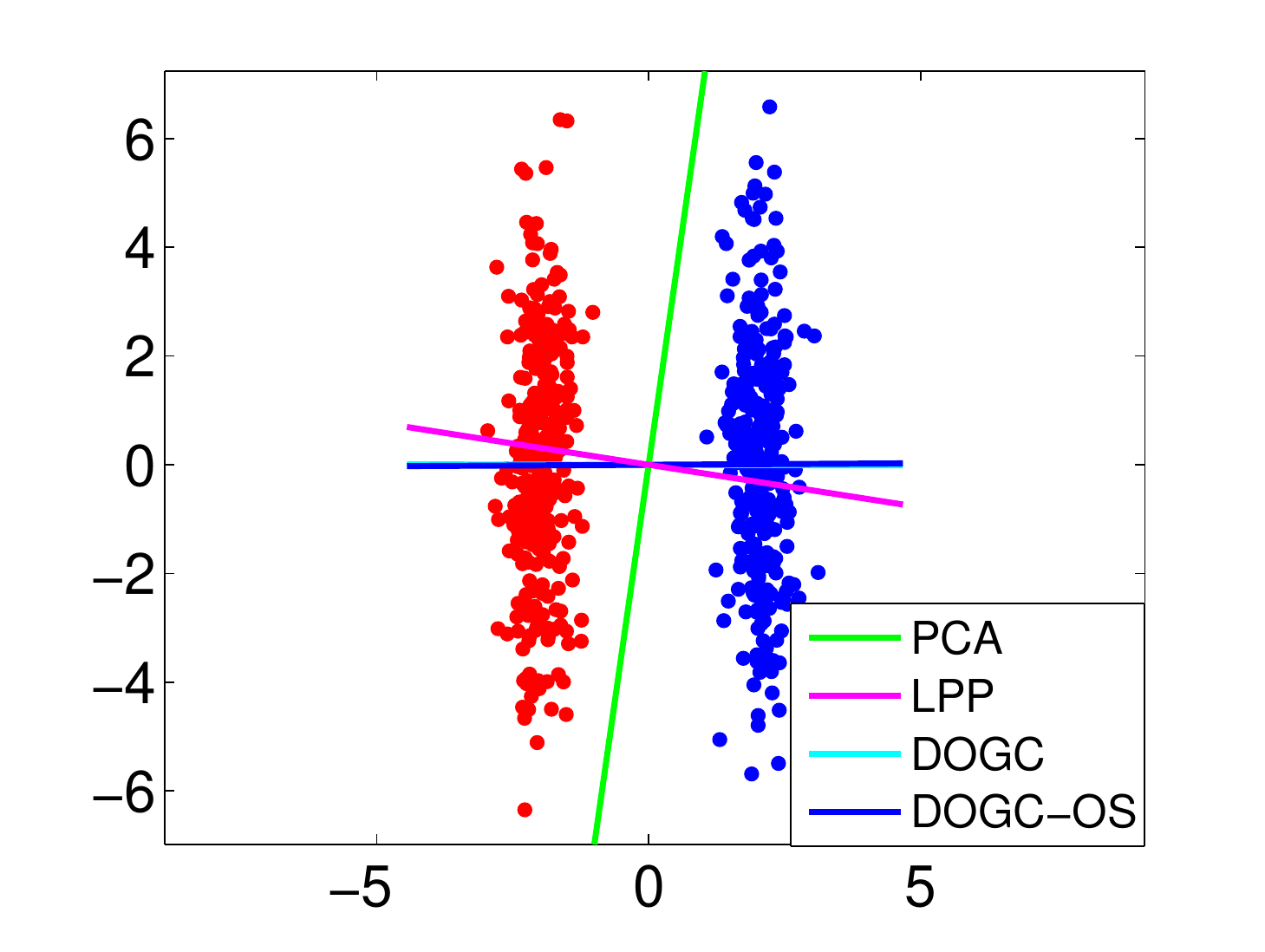}}
\hspace{-0.3in}
\subfigure[Clusters are fairly close]{\includegraphics[width=60mm]{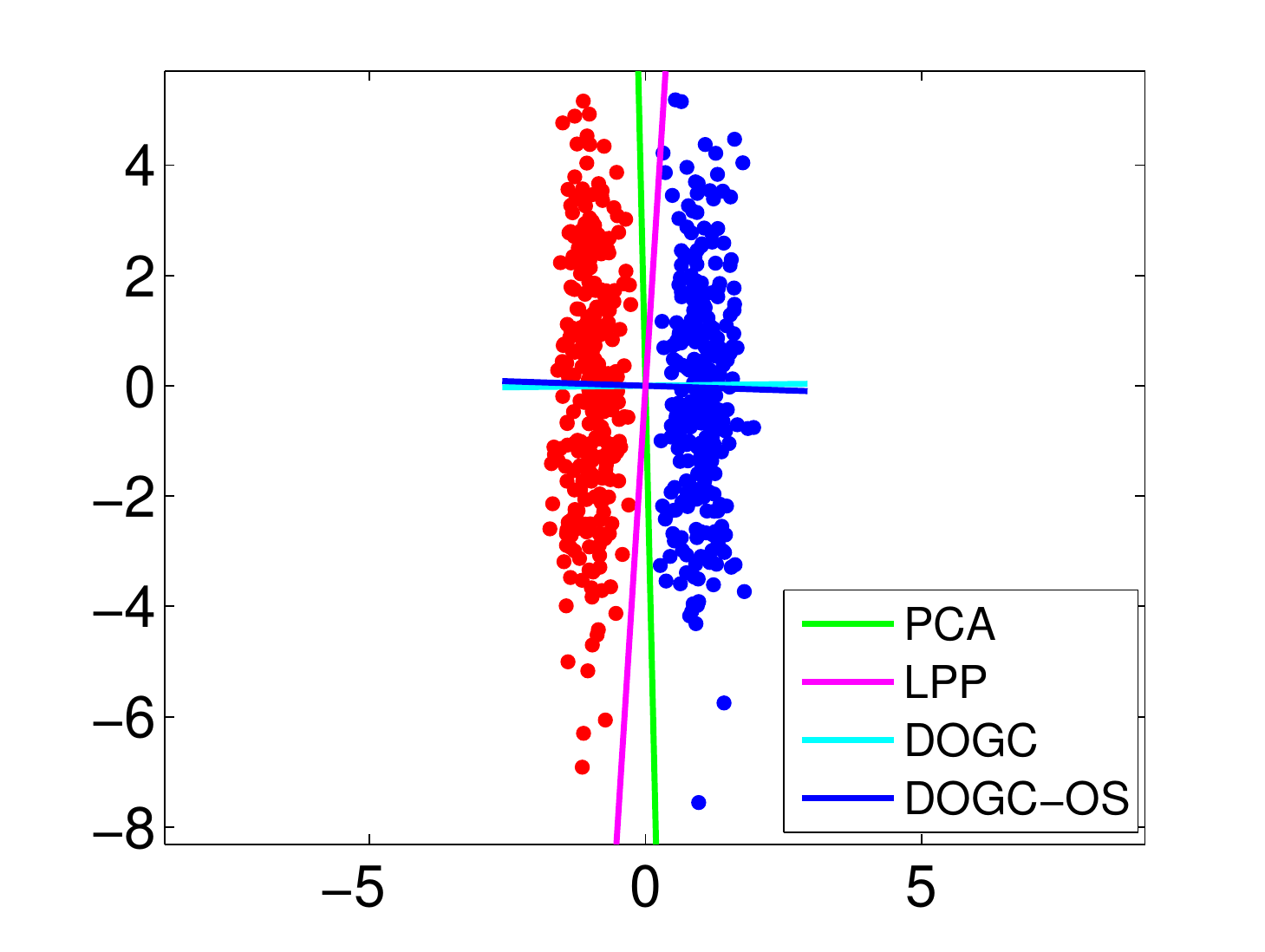}}
\hspace{-0.3in}
\caption{Clustering results on the two-Gaussian synthetic data}
\label{fig:3}
\end{figure*}
\subsection{Effects of Out-of-Sample Extension}
Out-of-sample extension is designed in our approach to predict the unseen data and improve the clustering accuracy. In this subsection, we conduct experiment to evaluate the effects of out-of-sample extension. Specifically, 5 UCI datasets are used to demonstrate the capability of the proposed method on clustering the unseen data. 6 representative clustering methods: k-means (KM)\cite{DBLP:journals/J.B.MacQueen}, spectral clustering (SC)\cite{DBLP:conf/nips/NgJW01}, spectral embedding clustering (SEC)\cite{DBLP:journals/tnn/NieZTXZ11}, discriminative k-means (DKM)\cite{DBLP:conf/nips/YeZW07}, local learning (LL)\cite{DBLP:conf/nips/WuS06}, clustering with local and global regularization (CLGR)\cite{DBLP:journals/tkde/WangZL09} are used for performance comparison. On each dataset, we randomly choose $50\%$ of data samples for training and the rest for testing. In DOGC-OS, the training data is used to train projection matrix $\textbf{P}$ with which the discrete cluster labels of testing data are obtained by projection process. For other methods, we import two parts of data together into their models and report their clustering. As shown in Table \ref{table:4} and Table \ref{table:5.5}, DOGC-OS achieves higher ACC than other methods on both training data and testing data. Furthermore, to evaluate the effects of out-of-sample extension on improving the clustering performance, we compare DOGC-OS with DOGC that removes the part of out-of-sample extension. The results are shown in Table \ref{table:9} and Table \ref{table:10}. From them, we can clearly observe that DOGC-OS can achieve superior performance in most datasets.
\begin{table*}
\caption{ACC on training part of five datasets}
\label{table:4}
\centering
\begin{tabular}{|c|c|c|c|c|c|c|c|}
\hline
\multirow{1}{*}{{Methods}} & \multicolumn{1}{c|}{\textbf{KM}} & \multicolumn{1}{c|}{\textbf{DKM}} & \multicolumn{1}{c|}{\textbf{SC}}&\multicolumn{1}{c|}{\textbf{LL}}&\multicolumn{1}{c|}{\textbf{CLGR}}&
\multicolumn{1}{c|}{\textbf{SEC}}&\multicolumn{1}{c|}{\textbf{DOGC-OS}}\\
\hline
    Solar & 0.4568 & 0.5679 & 0.4444 & 0.4074 & 0.5370 & 0.5802  &\textbf{0.6147}\\ \hline
    Vehicle & 0.4539& 0.4539 & 0.4941 & 0.5532 & 0.5177 & 0.4681  & \textbf{0.5626}\\ \hline
    Vote & 0.7844 & 0.8624 & 0.8394 & 0.8716 & 0.8440 & 0.7936 &  \textbf{0.9309}\\ \hline
    Ecoli & 0.6488 & 0.6667 & 0.3631 & 0.4286 & 0.5179 & 0.6429 &\textbf{0.8333}\\ \hline
    Wine & 0.7079 & 0.7079 & 0.7079 & 0.5843 & 0.7079 & 0.7416 &  \textbf{0.9326}\\ \hline
\end{tabular}
\vspace{-2mm}
\end{table*}
\begin{table*}
\caption{ACC on Testing part of five datasets}
\label{table:5.5}
\centering
\begin{tabular}{|c|c|c|c|c|c|c|c|}
\hline
\multirow{1}{*}{{Methods}} & \multicolumn{1}{c|}{\textbf{KM}} & \multicolumn{1}{c|}{\textbf{DKM}} & \multicolumn{1}{c|}{\textbf{SC}}&\multicolumn{1}{c|}{\textbf{LL}}&\multicolumn{1}{c|}{\textbf{CLGR}}&
\multicolumn{1}{c|}{\textbf{SEC}}&\multicolumn{1}{c|}{\textbf{DOGC-OS}}\\
\hline
    Solar & 0.5155 & 0.5466 & 0.3043 & 0.5217 & 0.6025 & 0.5776 & \textbf{0.6089}\\ \hline
    Vehicle & 0.4468 & 0.4586 & 0.3877 & 0.5272 & 0.4894 & 0.4539 & \textbf{0.5302}\\ \hline
    Vote & 0.8065 & 0.8433 & 0.7926 & 0.8986 & 0.8525 & 0.8111 & \textbf{0.9401}\\ \hline
    Ecoli & 0.7321 & 0.7262 & 0.6548 & 0.7202 & 0.6726 & 0.7381 & \textbf{0.7738}\\ \hline
    Wine & 0.7079 & 0.7079 & 0.6292 & 0.5393 & 0.7753 & 0.6966 & \textbf{0.9326}\\ \hline
\end{tabular}
\vspace{-2mm}
\end{table*}
\begin{figure}
\includegraphics[width=43mm]{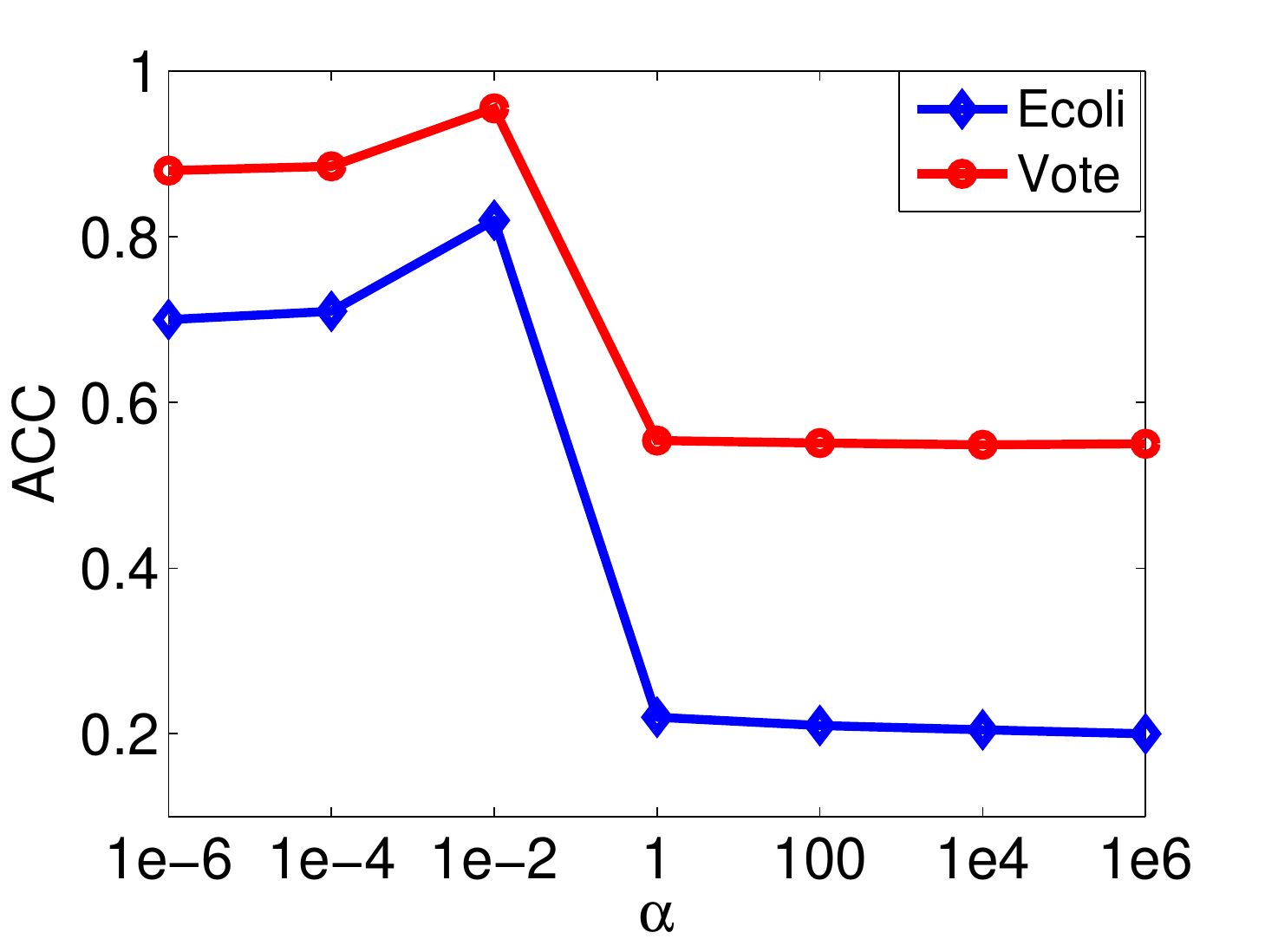}
\hspace{-0.1in}
\includegraphics[width=43mm]{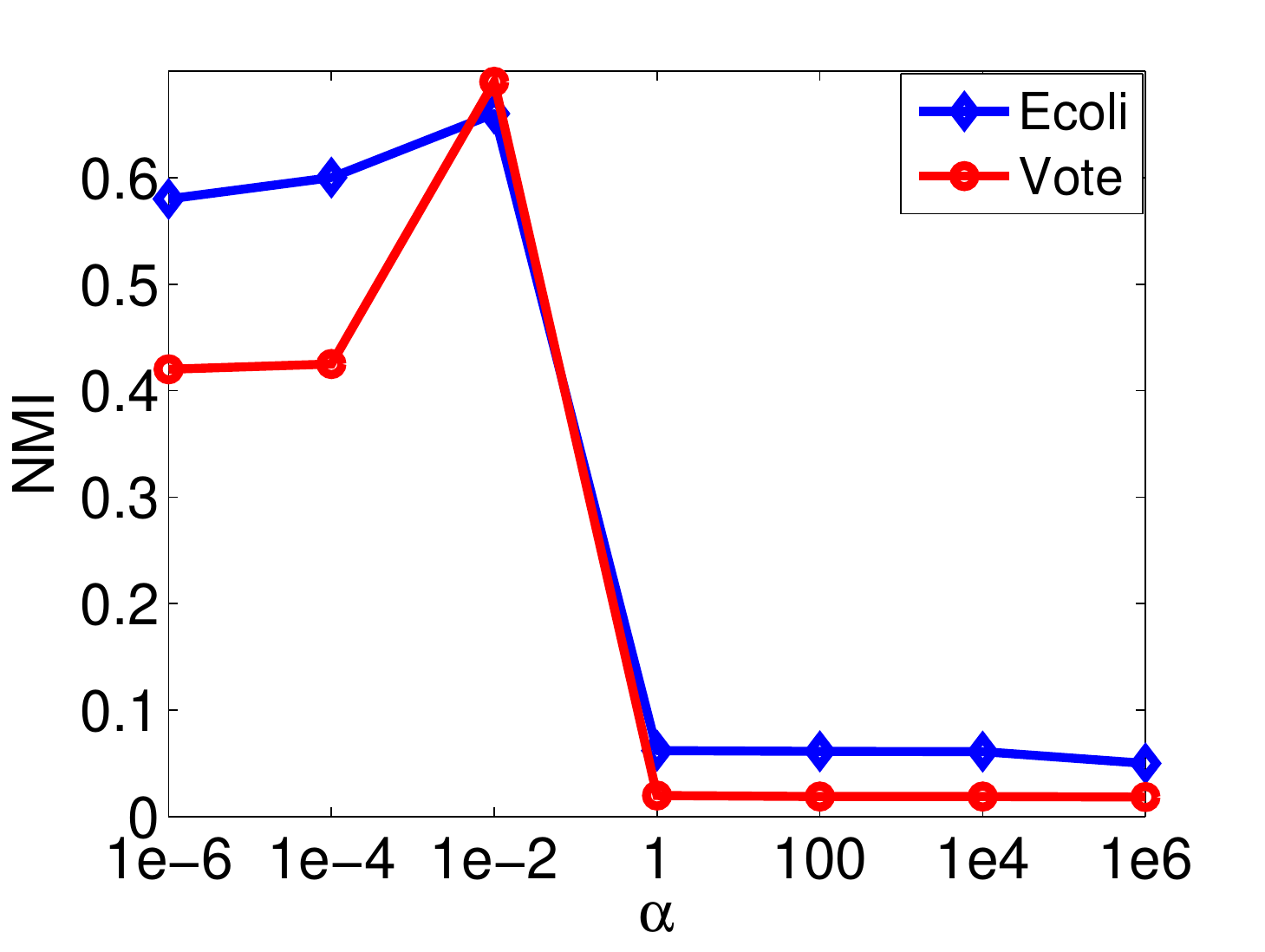}
\caption{ACC and NMI variations with the parameter $\emph{$\alpha$}$ in DOGC.}
\label{fig:7}
\end{figure}
\begin{figure*}
\centering
\subfigure[ACC (Vote)-Sensitivity of $p$]{\label{fig:(1)}\includegraphics[width=42mm]{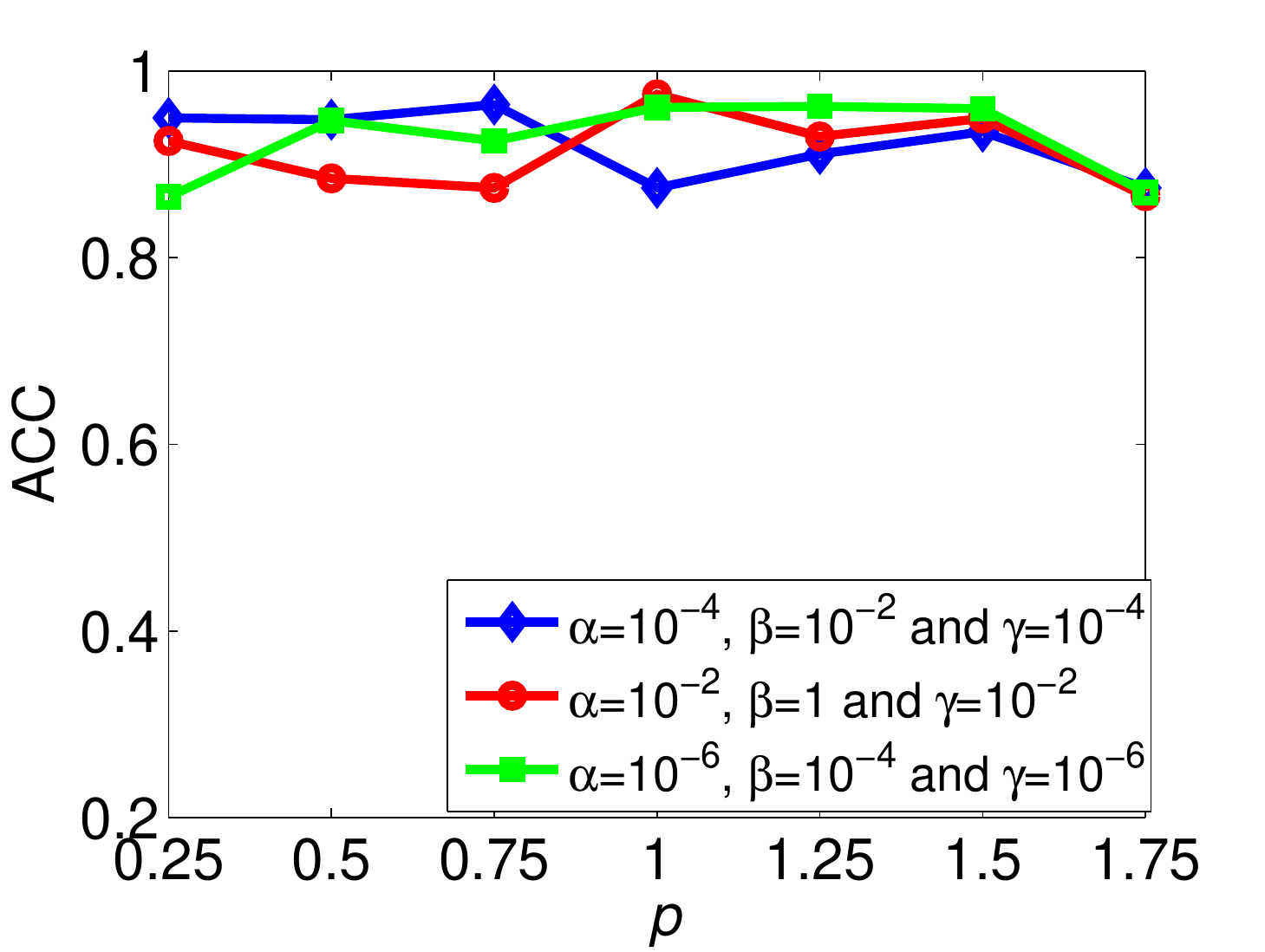}}
\hspace{-0.15in}
\subfigure[NMI (Vote)-Sensitivity of $p$]{\label{fig:(2)}\includegraphics[width=42mm]{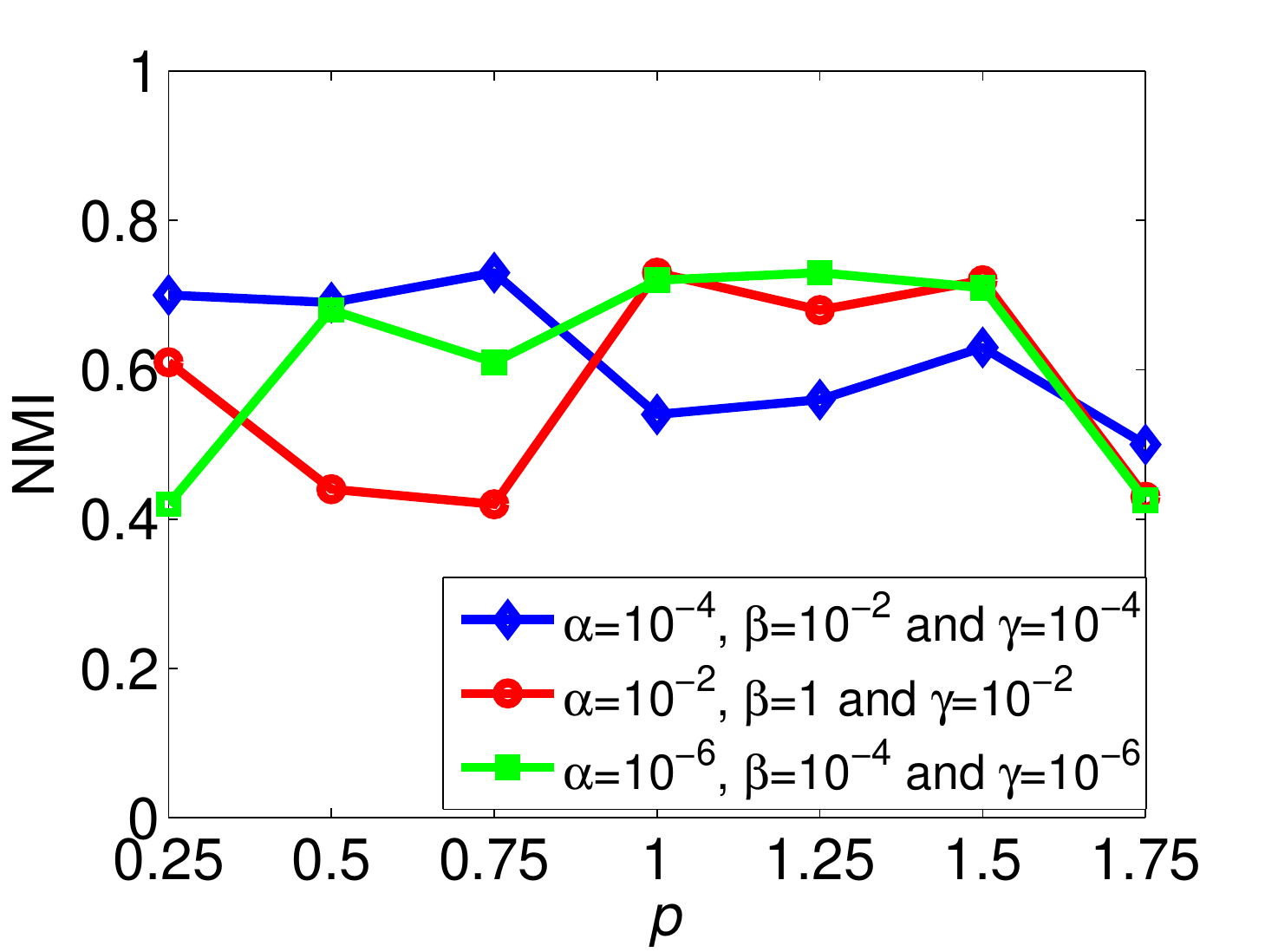}}
\hspace{-0.15in}
\subfigure[ACC (Ecoli)-Sensitivity of $p$]{\label{fig:(3)}\includegraphics[width=42mm]{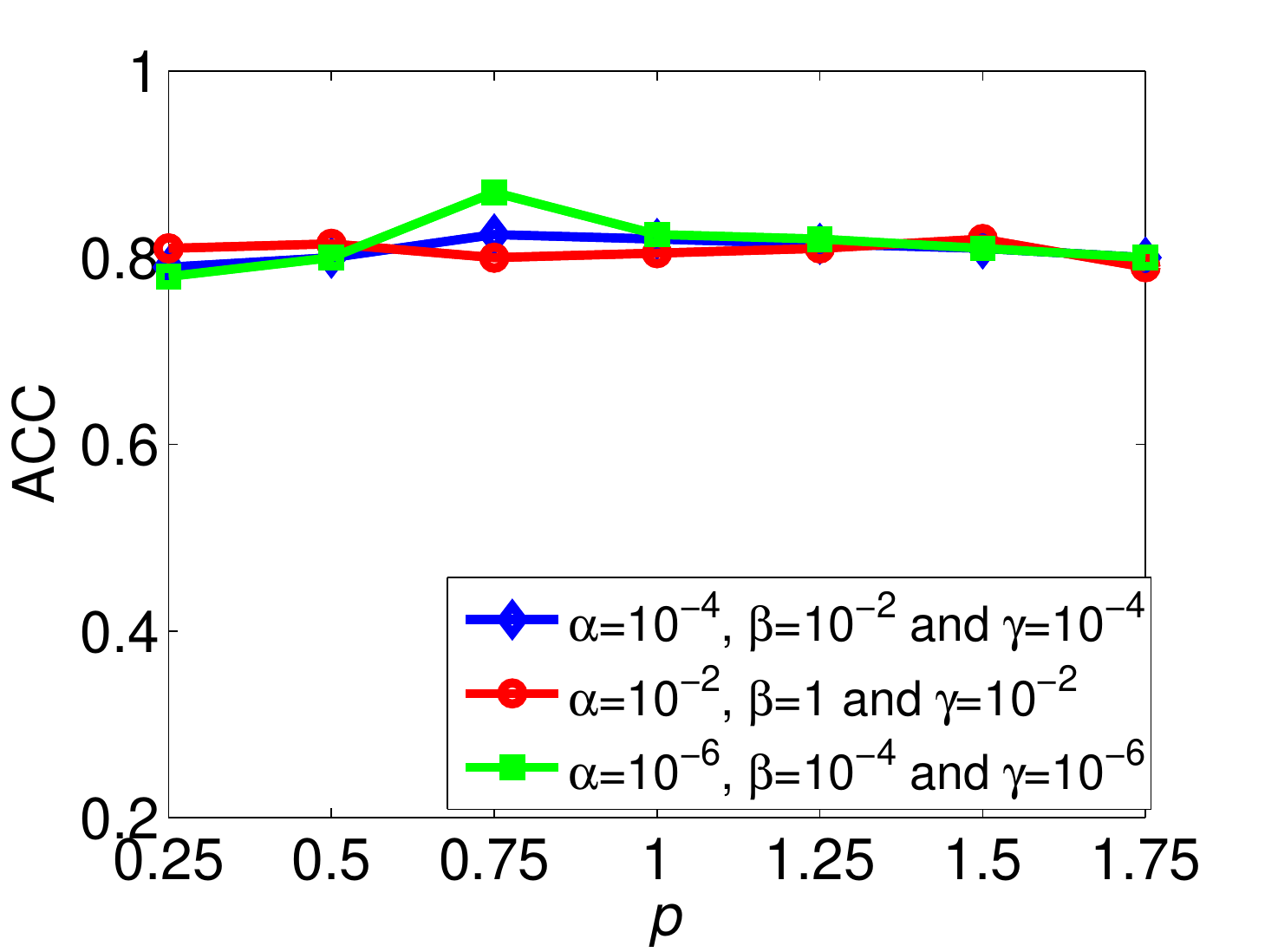}}
\hspace{-0.15in}
\subfigure[NMI (Ecoli)-Sensitivity of $p$]{\label{fig:(4)}\includegraphics[width=42mm]{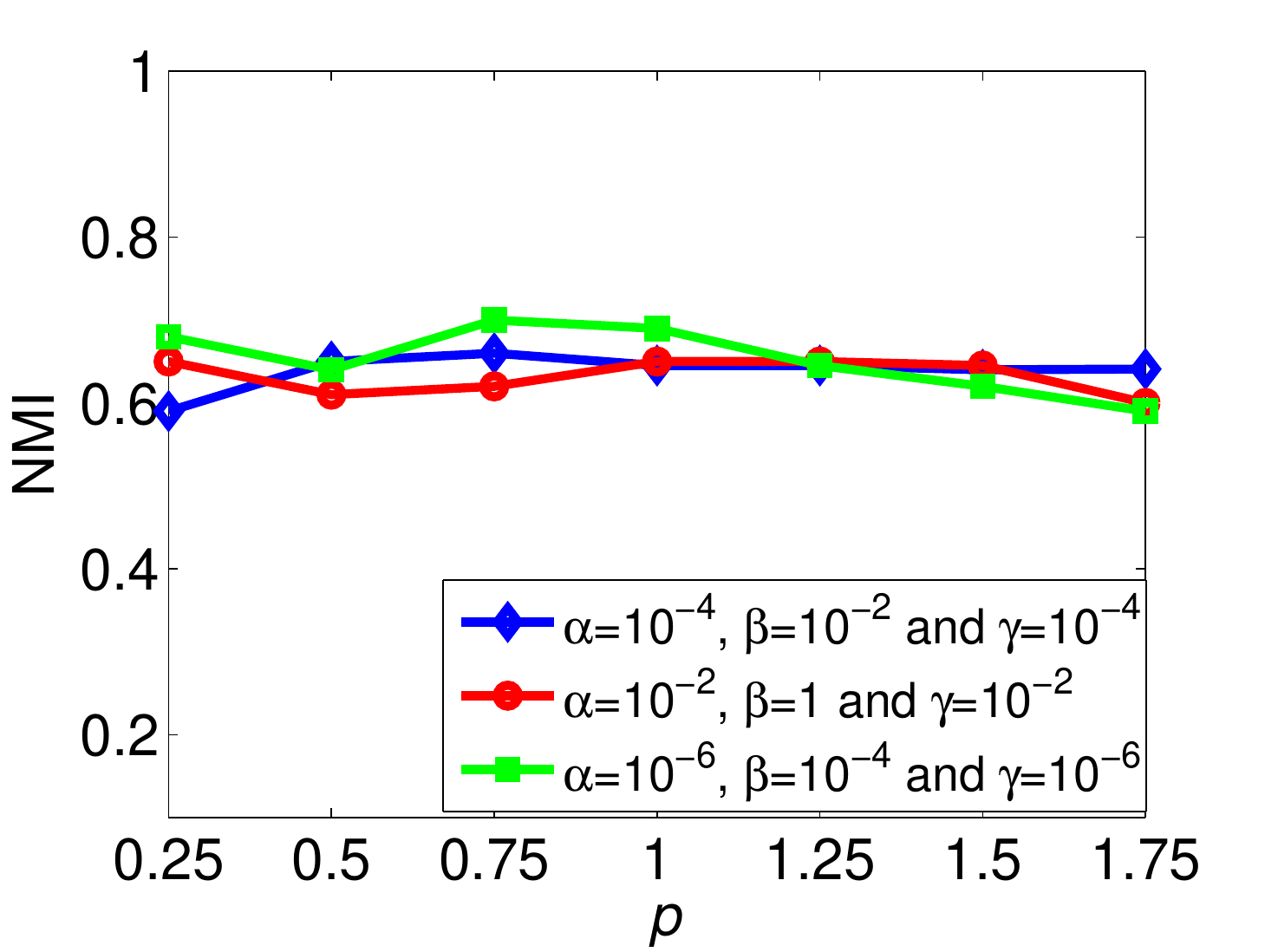}}
\caption{Parameter variations with parameter $p$ in DOGC-OS}
\label{fig:6}
\end{figure*}
\begin{figure*}
\vspace{-4mm}
\centering
\subfigure[$\alpha$=0.0001]{\includegraphics[width=48mm]{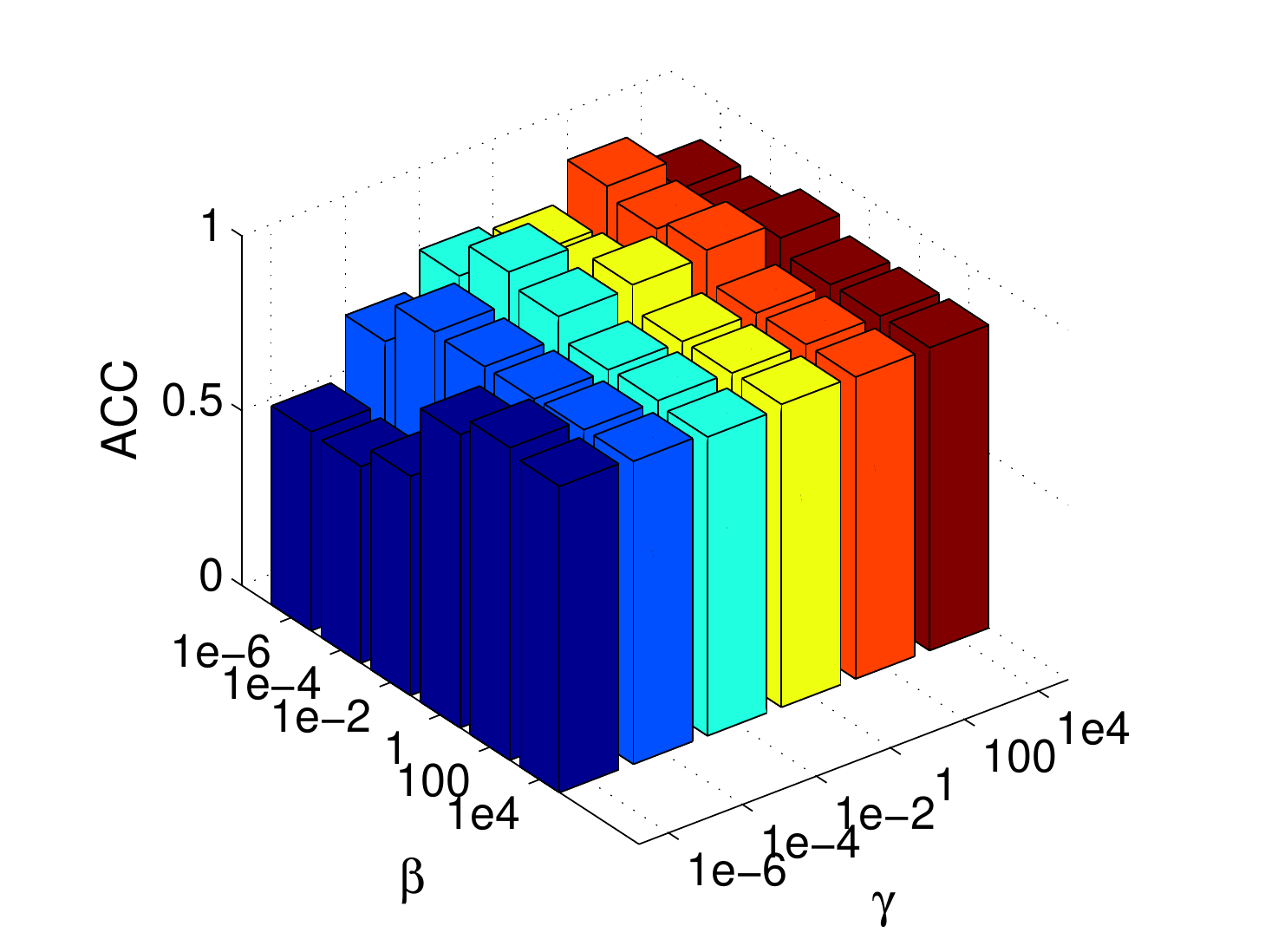}}
\hspace{-0.2in}
\subfigure[$\gamma$=0.0001]{\includegraphics[width=48mm]{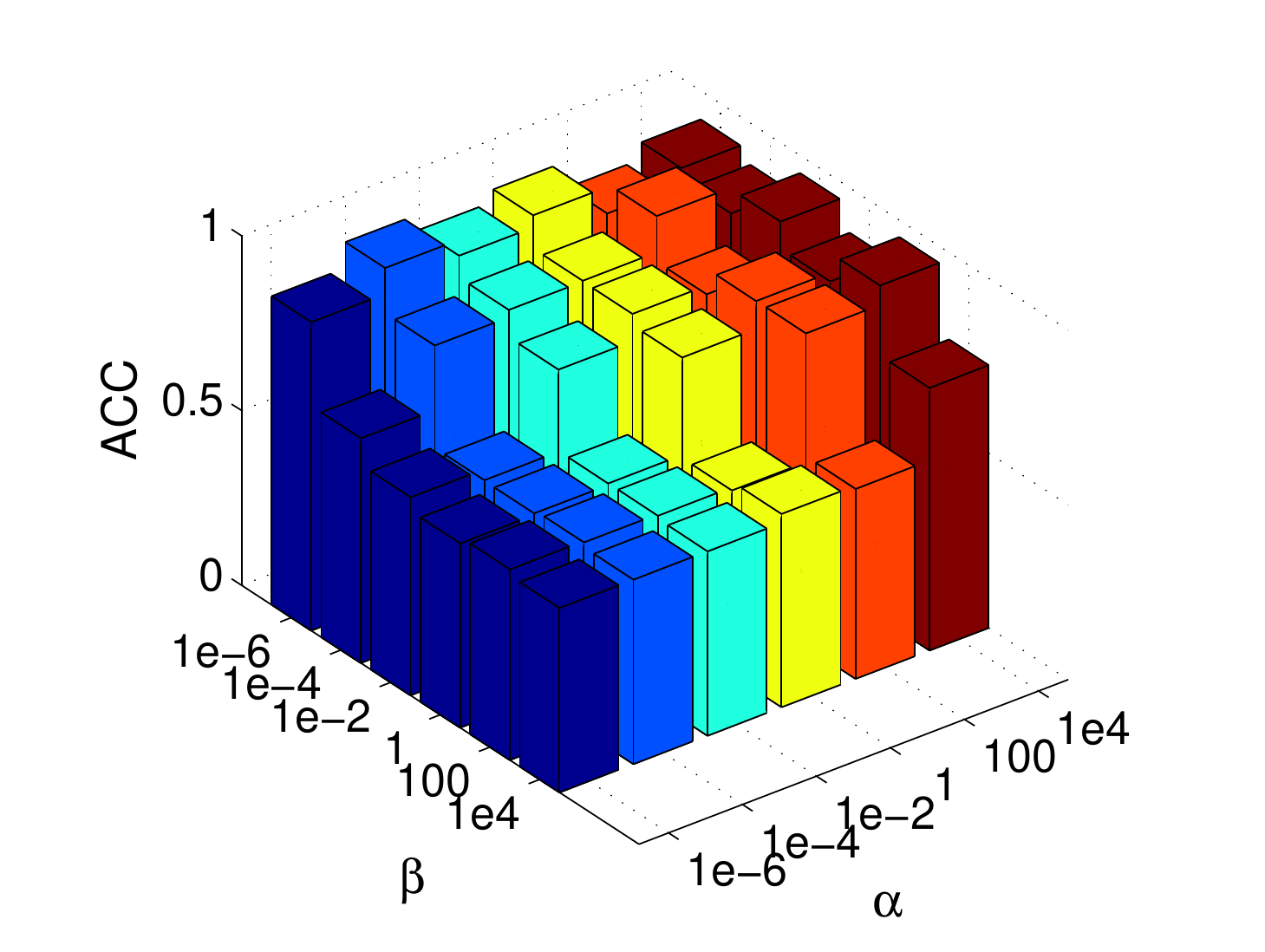}}
\hspace{-0.2in}
\subfigure[$\beta$=0.01]{\includegraphics[width=48mm]{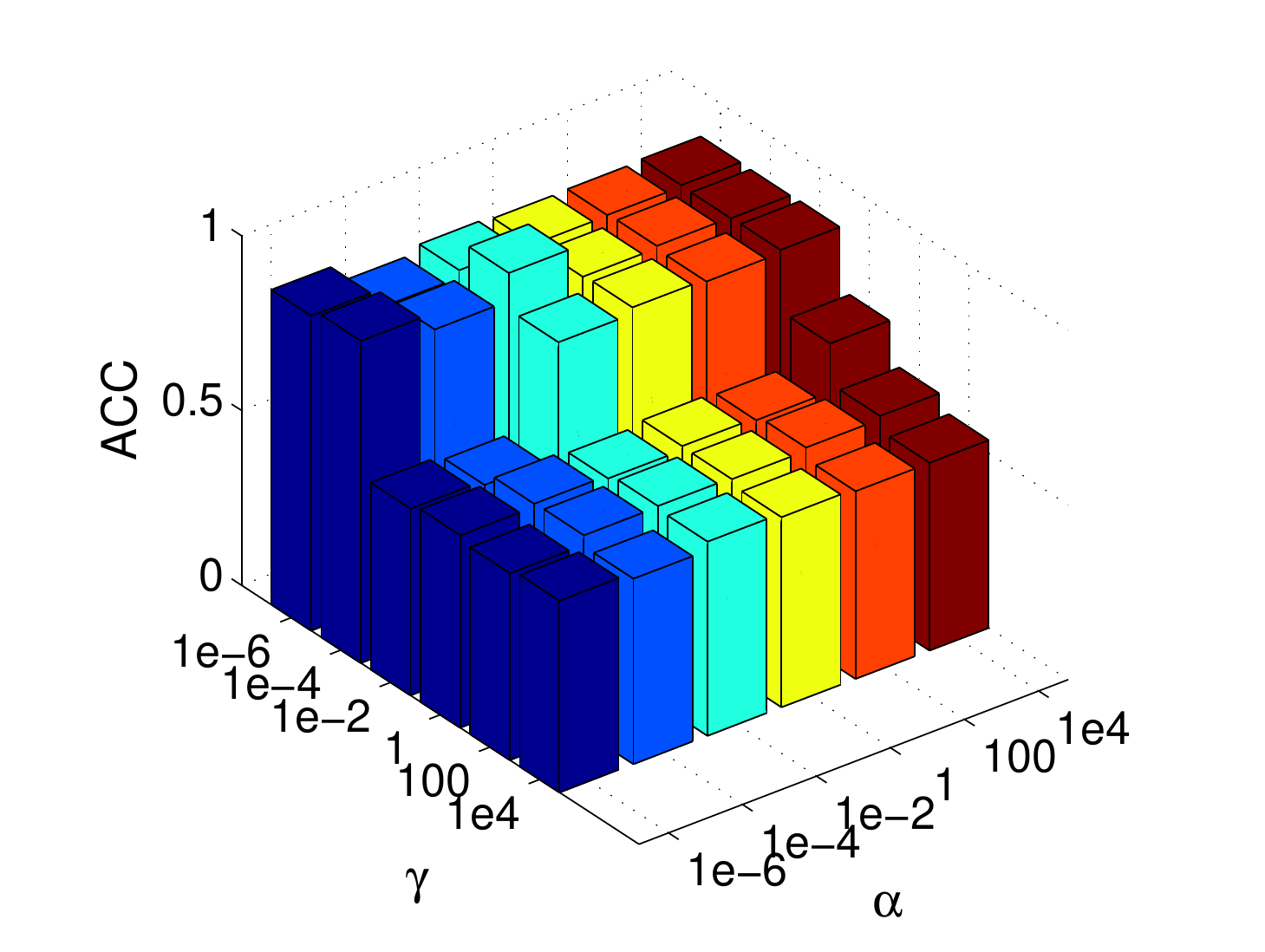}}
\hspace{-0.2in}
\caption{Performance variations with the parameters $\alpha$, $\beta$ and $\gamma$ on Vote.}
\label{fig:4}
\end{figure*}
\begin{figure*}
\vspace{-3.5mm}
\centering
\subfigure[$\alpha$=0.000001]{\includegraphics[width=48mm]{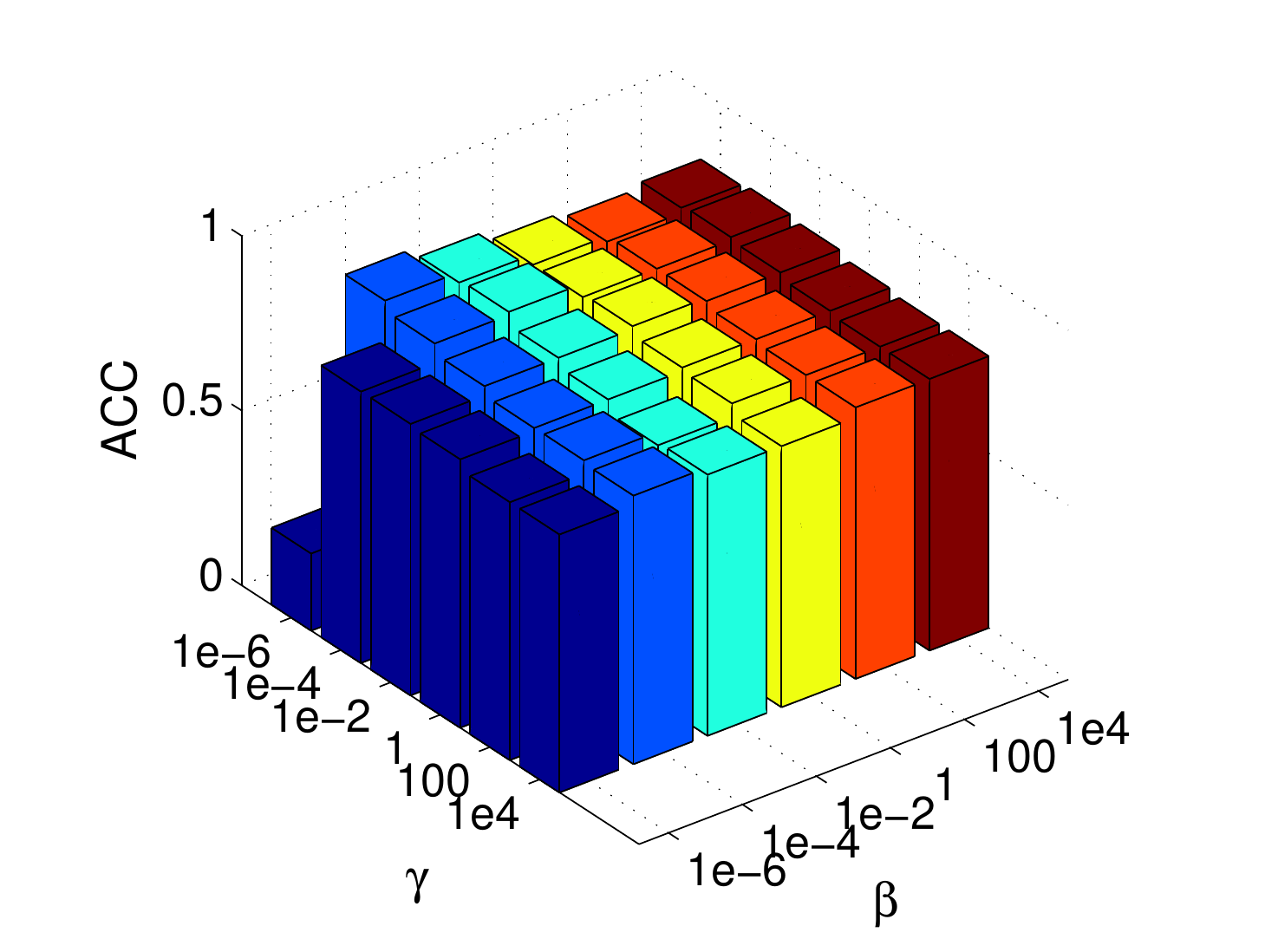}}
\hspace{-0.2in}
\subfigure[$\gamma$=0.000001]{\includegraphics[width=48mm]{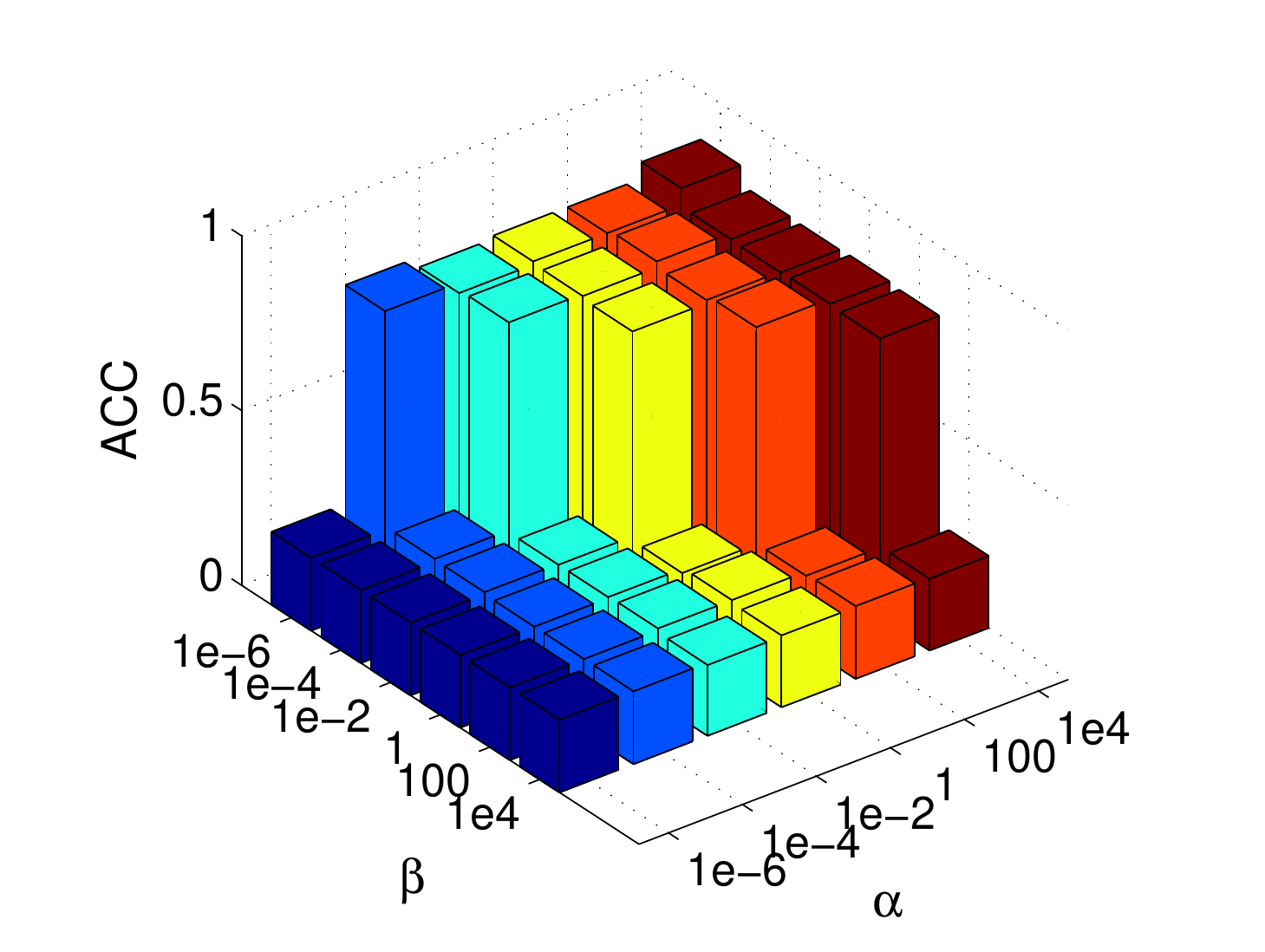}}
\hspace{-0.2in}
\subfigure[$\beta$=0.001]{\includegraphics[width=48mm]{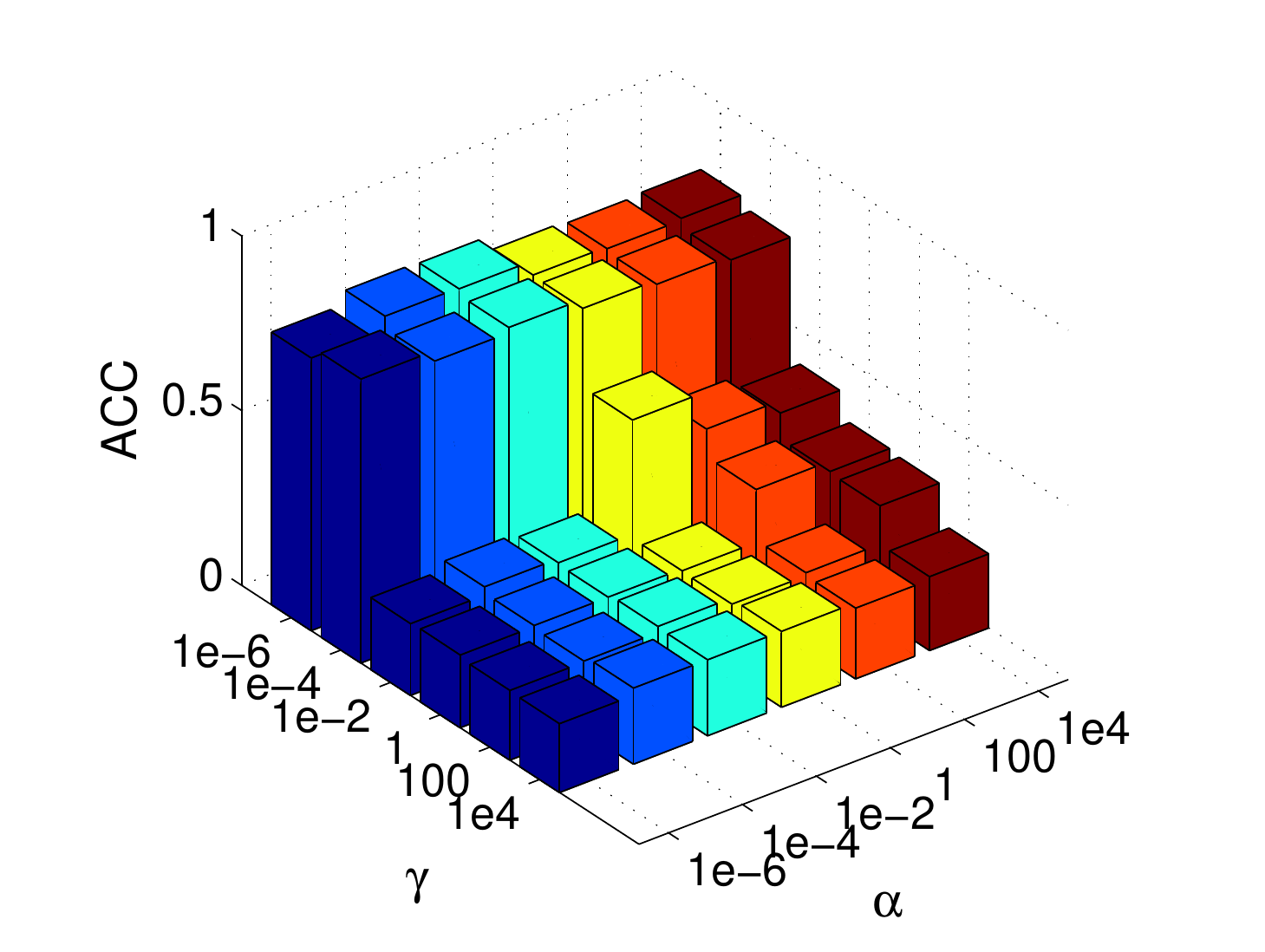}}
\caption{Performance variations with the parameters $\alpha$, $\beta$ and $\gamma$ on Ecoli.}
\label{fig:5}
\end{figure*}

\subsection{Parameter Sensitivity Experiment}
There are three parameters: $\alpha$, $\beta$ and $\gamma$ in DOGC-OS, and one parameter $\alpha$ in DOGC. In this subsection, we perform experiments on Vote and Ecoli to evaluate the parameter sensitivity of the proposed methods, and investigate how they perform on different parameter settings. In experiment, we tune the parameters $\alpha$, $\beta$ and $\gamma$ in the range of $\big\{10^{-6}, 10^{-4}, 10^{-2}, 1, 10^{2}, 10^{4} \big\}$, and $p$ from 0.25 to 1.75. In DOGC, we observe the variations of ACC and NMI with $\alpha$ from $10^{-6}$ to $10^{4}$ (as shown in Figure \ref{fig:7}). In DOGC-OS, we first select three groups of parameters with fixed $\alpha$, $\beta$ and $\gamma$ (as shown in Figure \ref{fig:6}). From it, we find that $p$=1.25 is optimal for clustering. Then, we fix $p$=1.25, and evaluate the performance variations with remaining $\alpha$, $\beta$, $\gamma$. Specifically, we fix two parameters and observe the variations of ACC with the other one. Figure \ref{fig:4} and \ref{fig:5} illustrate the main experimental results. The analysis of the parameters effects are as follows:
\begin{figure*}
\centering
\subfigure[DOGC (Ecoli)]{\includegraphics[width=45mm]{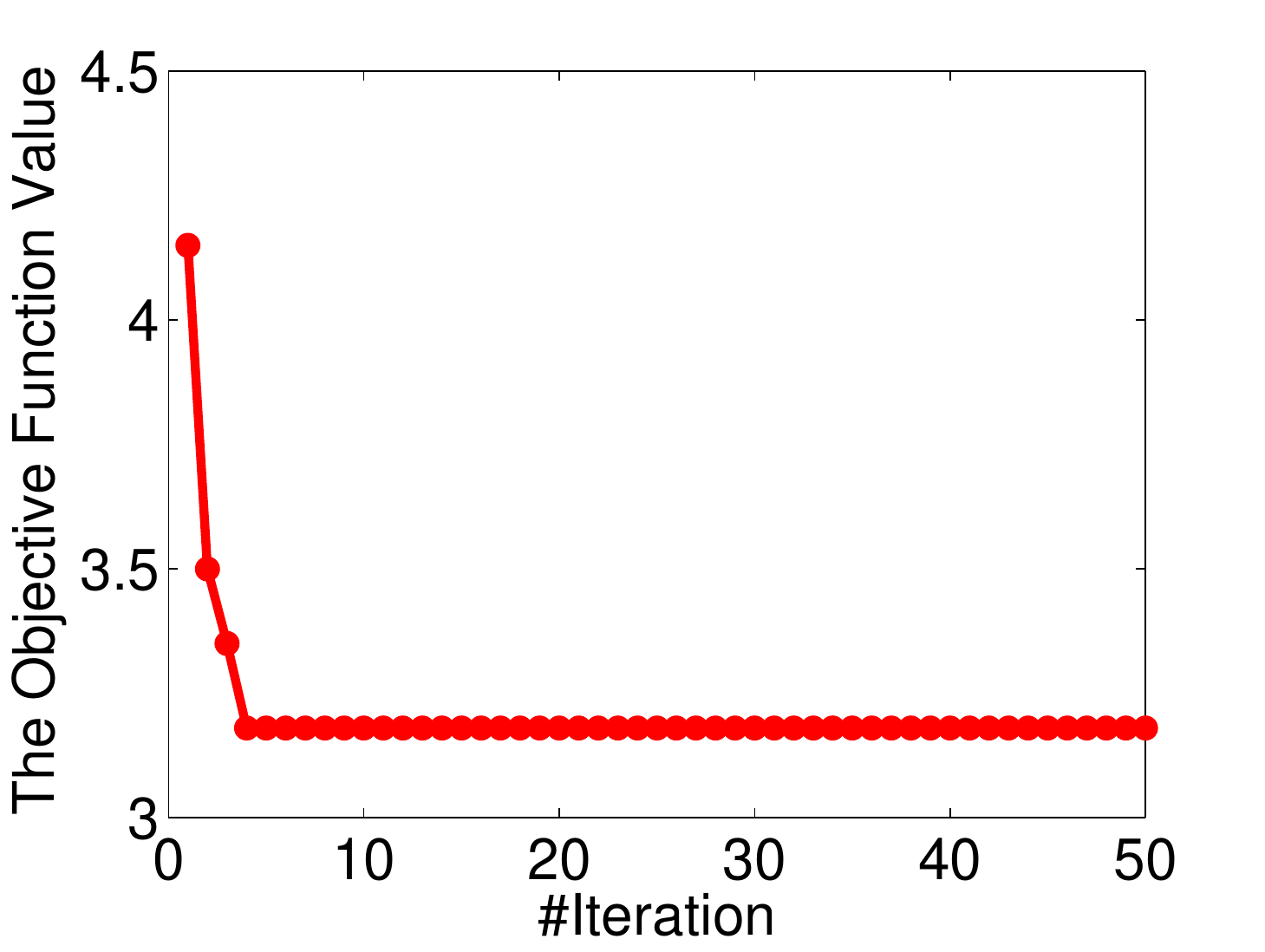}}
\hspace{-0.2in}
\subfigure[DOGC (Vote)]{\includegraphics[width=45mm]{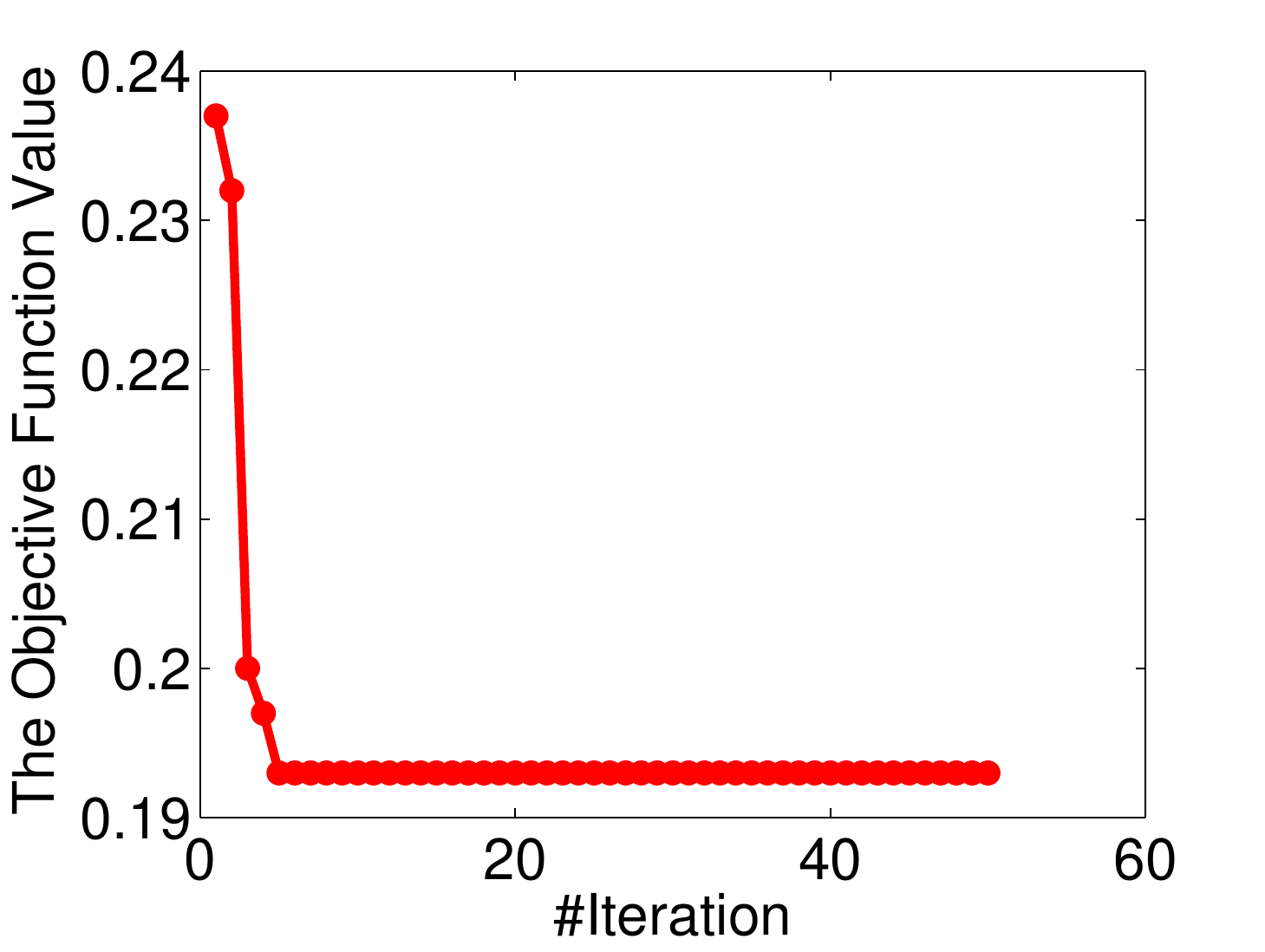}}
\hspace{-0.2in}
\subfigure[DOGC-OS (Ecoli)]{\includegraphics[width=45mm]{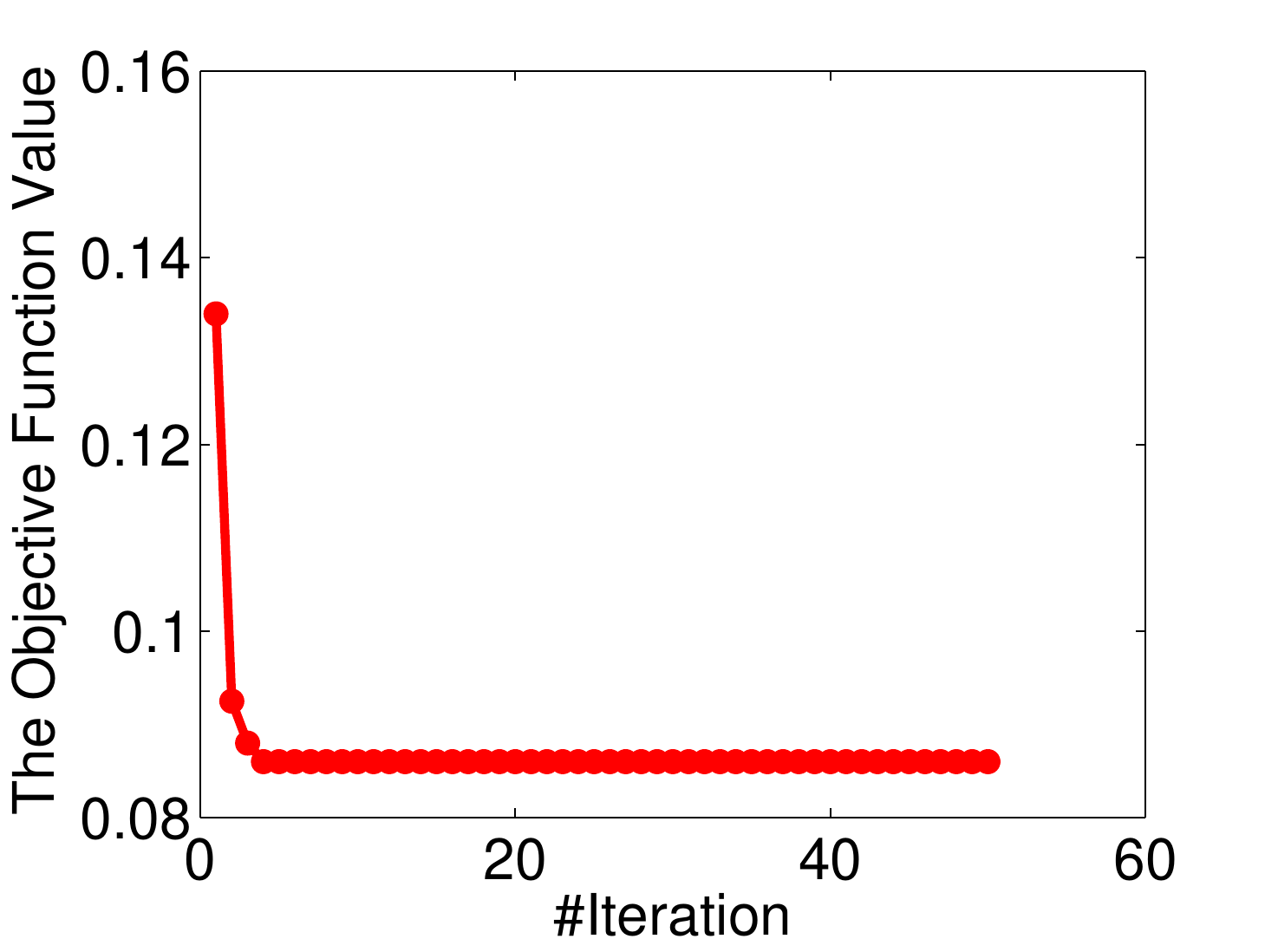}}
\hspace{-0.2in}
\subfigure[DOGC-OS (Vote)]{\includegraphics[width=45mm]{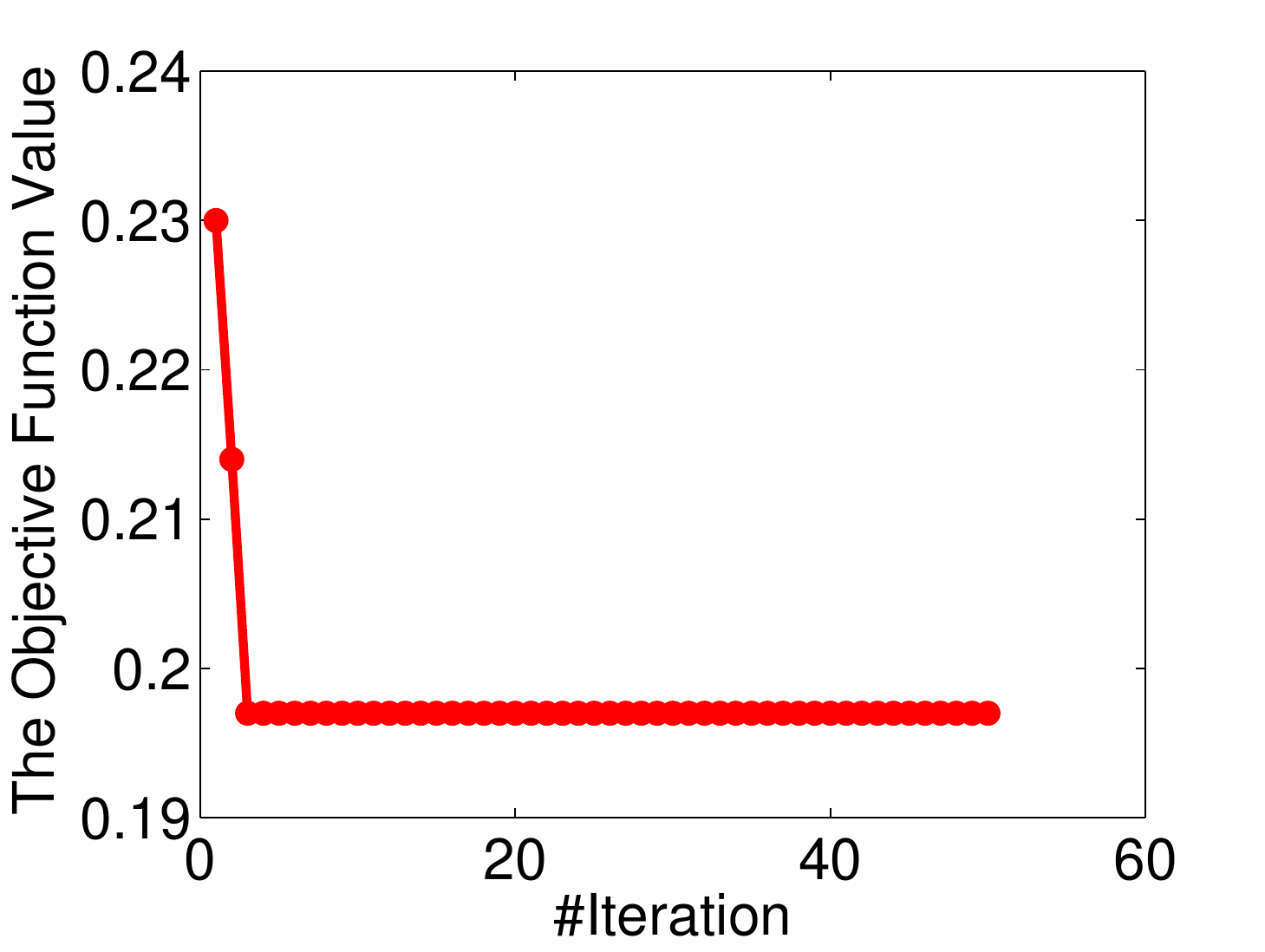}}
\caption{The variations of objective function value with the number of iterations on two datasets.}
\label{fig8888}
\end{figure*}
\begin{itemize}
\item \textbf{Joint effects of $\alpha$ and $\beta$.}
$\alpha$ contributes to discrete label learning. In DOGC and DOGC-OS, $\alpha$ plays a crucial role in clustering performance. In experiments, we find that our methods can achieve satisfactory performance with $\alpha$ in the range of $\big\{ 10^{-6},10^{-4},10^{-2} \big\}$. In DOGC-OS, when $\alpha$ is small, we observe a decreasing trend of performance as $\beta$ increases. Once $\alpha$ is larger than $\beta$, our method works better instead.
\item \textbf{Effects of $\gamma$.}
$\gamma$ controls the prediction residual error $\parallel \textbf{Y}-\textbf{X}^\top \textbf{P}\parallel_{2,p}$. In DOGC-OS, as $\gamma$ increases from $10^{-2}$ to $10^{-1}$, it performs gradually better. When $\gamma$ keeps going up, we observe a decreasing trend instead. If $\gamma$ is small, the regularization term will become less significant and over-fitting problem may be brought. On the contrary, when we use large $\gamma$, the prediction residual error will not be well controlled. Under this circumstance, our approach DOGC-OS will produce sub-optimal prediction function and discrete cluster labels.
\item \textbf{Effects of $p$.}
In DOGC-OS, when $\alpha$ is optimal, $p$ will have less influence on the clustering performance. The main reason is that the influence of $\alpha$ covers $p$. In contrast, if $\alpha$ is not optimal, the influence of $p$ on ACC gradually becomes important. Under such circumstance, when $p$ is in the range of 1 to 1.5, it can help to improve ACC.
\end{itemize}

\subsection{Convergence Experiment}
We have theoretically proved that the proposed iterative optimization can obtain a converged solution. In this subsection, we empirically evaluate the convergence of the proposed algorithms. We conduct experiments on Vote and Ecoli. Similar results can be obtained on other datasets. Figure \ref{fig8888} records the variations of objective function value of Eq.(\ref{eq:7}) and Eq.(\ref{eq:26}) respectively with the number of iterations. It clearly shows that our approaches are able to converge rapidly within only a few iterations (less than 10).

\section{Conclusions}
In this paper, we propose a unified discrete optimal graph clustering framework. In our methods, a structured graph is adaptively learned with the guidance of a reasonable rank constraint to support the clustering. Discrete cluster labels are directly learned by learning a proper rotation matrix to avoid the relaxing information loss. Besides, projective subspace learning is performed in our framework to eliminate noise and extract discriminative information from raw features to facilitate clustering. Moreover, our model can support the clustering task on the out-of-sample data by designing a robust prediction module. Experiments on both synthetic and real datasets demonstrate the superior performance of the proposed methods.
\ifCLASSOPTIONcaptionsoff
  \newpage
\fi

\section*{\textbf{Acknowledgment}}
The authors would like to thank the anonymous reviewers for their constructive and helpful suggestions.

\ifCLASSOPTIONcaptionsoff
\fi

\bibliographystyle{IEEEtran}
\bibliography{mybibfile}

\begin{IEEEbiography}[{\includegraphics[width=1.2in,height=1.25in,clip,keepaspectratio]{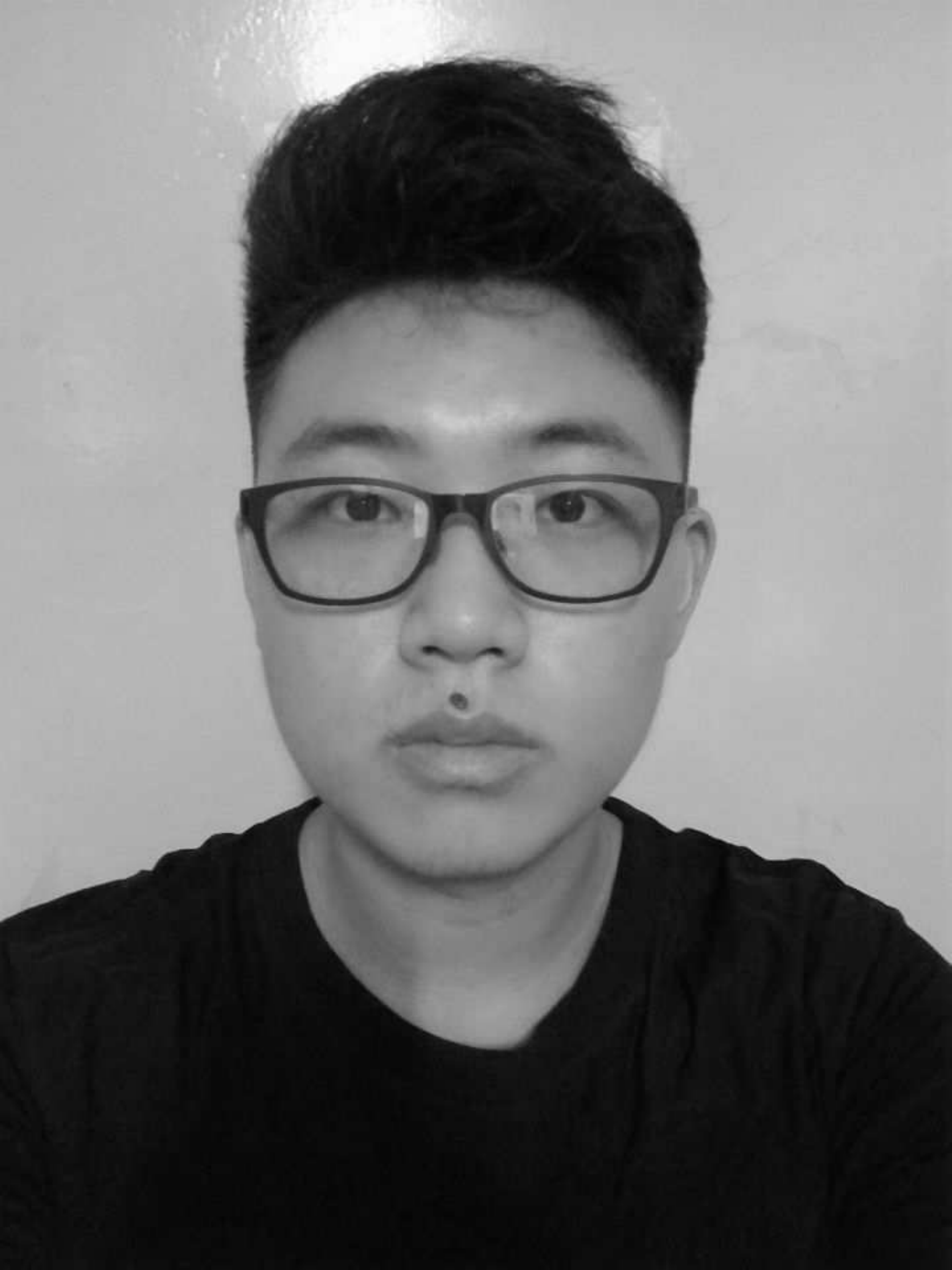}}]{Yudong Han}
is pursuing Bachelor's degree at the School of Information Science and Engineering, Shandong Normal University, China. He is under the supervision of Prof. Lei Zhu (2016-). His research interest is in the area of big data mining.
\end{IEEEbiography}

\begin{IEEEbiography}[{\includegraphics[width=1.2in,height=1.25in,clip,keepaspectratio]{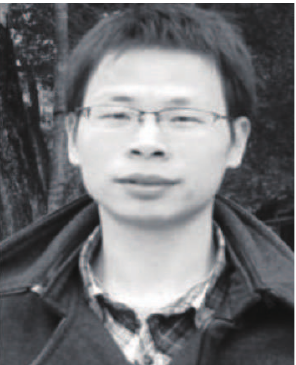}}]{Lei Zhu}
received the B.S. degree (2009) at Wuhan University of Technology, the Ph.D. degree (2015) at Huazhong University of Science and Technology. He is currently a full Professor with
the School of Information Science and Engineering, Shandong Normal University, China.  He was a Research Fellow at the University of Queensland (2016-2017), and at the Singapore Management University (2015-2016). His research interests are in the area of large-scale multimedia content analysis and retrieval.
\end{IEEEbiography}
\begin{IEEEbiography}[{\includegraphics[width=1.2in,height=1.25in,clip,keepaspectratio]{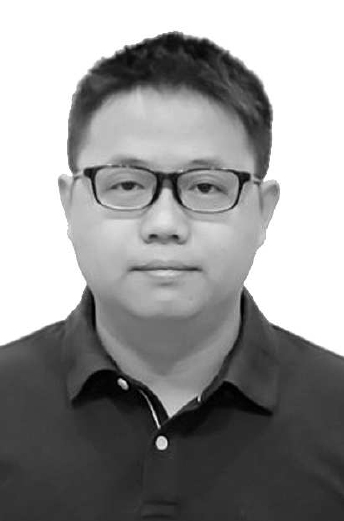}}]{Zhiyong Cheng} received the Ph.D degree in computer science from Singapore Management University, Singapore. He is currently a full professor with Shandong Artificial Intelligence Institute, China. He was a Research Fellow at the National University of Singapore (2016-2018), and  a visiting student with the School of Computer Science, Carnegie Mellon University (2014-2015). His research interests mainly focus on large-scale multimedia content analysis and retrieval. His work has been published in a set of top forums, including ACM SIGIR, MM, WWW, TOIS, IJCAI, TKDE, and TCYB.
\end{IEEEbiography}

\begin{IEEEbiography}[{\includegraphics[width=1.2in,height=1.25in,clip,keepaspectratio]{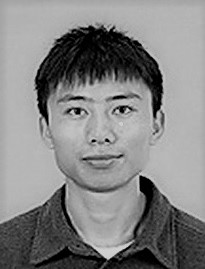}}]{Jingjing Li}
received his MSc and PhD degree in Computer Science from University of Electronic Science and Technology of China in 2013 and 2017, respectively. Now he is a national Postdoctoral Program for Innovative Talents research fellow with the School of Computer Science and Engineering, University of Electronic Science and Technology of China. He has great interest in machine learning, especially transfer learning, subspace learning and recommender systems.
\end{IEEEbiography}

\begin{IEEEbiography}[{\includegraphics[width=1.2in,height=1.25in,clip,keepaspectratio]{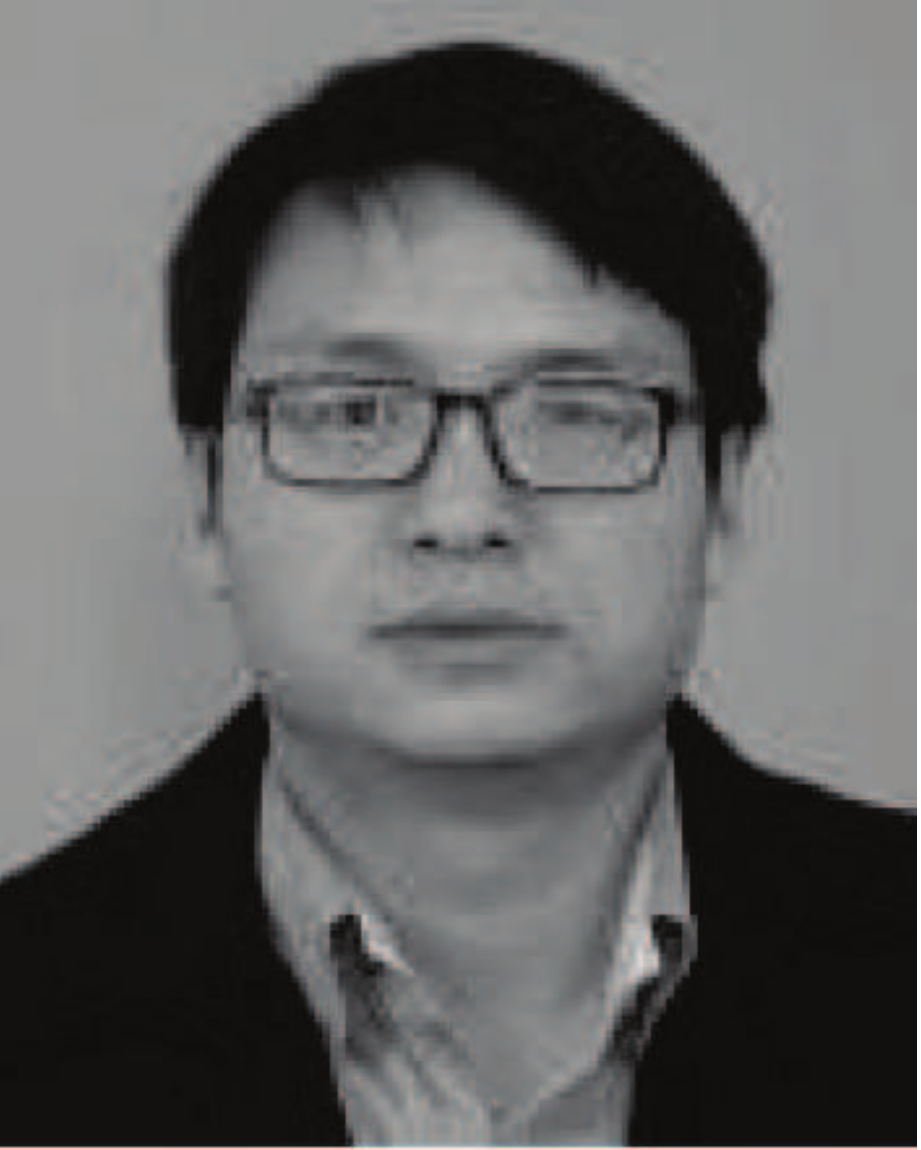}}]{Xiaobai Liu}
received his PhD from the Huazhong University of Science and Technology, China. He is currently an Assistant Professor of Computer Science in the San Diego State University (SDSU), San Diego. His research interests focus on scene parsing with a variety of topics, e.g. joint inference for recognition and reconstruction, commonsense reasonzing, etc. He has published 30+ peer-reviewed articles in top-tier conferences (e.g. ICCV, CVPR etc.) and leading journals (e.g. TPAMI, TIP etc.) He received a number of awards for his academic contribution, including the 2013 outstanding thesis award by CCF (China Computer Federation). He is a member of IEEE.
\end{IEEEbiography}
\vfill
\end{document}